\documentclass{article}

\PassOptionsToPackage{numbers, compress}{natbib}
\usepackage[main, preprint]{preprint}

\usepackage[utf8]{inputenc} 
\usepackage[T1]{fontenc}    
\usepackage{hyperref}       
\usepackage{url}            
\usepackage{booktabs}       
\usepackage{amsfonts}       
\usepackage{nicefrac}       
\usepackage{microtype}      
\usepackage[dvipsnames]{xcolor}
\usepackage{amsmath}
\usepackage{amsthm}
\usepackage{graphicx}
\usepackage{multirow}
\usepackage{subcaption}
\usepackage{enumitem}
\usepackage{xcolor}
\usepackage[normalem]{ulem}
\usepackage{algorithm}
\usepackage{algpseudocode}
\usepackage{xspace}
\usepackage{etoc}
\usepackage{makecell}
\usepackage{thmtools}
\usepackage{thm-restate}

\definecolor{oursRA}{HTML}{003B67}
\definecolor{oursABP}{HTML}{8C3B00}
\definecolor{lqrYellow}{HTML}{D8A33A}

\newtheorem{remark}{Remark}[section]
\newtheorem{definition}{Definition}[section]
\newtheorem{problem}{Problem}
\newtheorem{subproblem}{Subproblem}
\newtheorem{proposition}{Proposition}[section]
\newtheorem{assumption}{Assumption}[section]

\newcommand{\B}{{\mathcal B}}
\newcommand{\D}{{\mathcal D}}
\newcommand{\C}{{\mathcal C}}
\newcommand{\K}{{\mathcal K}}
\newcommand{\Kb}{{\mathbb K}}
\newcommand{\CT}{{\mathcal C_T}}
\newcommand{\X}{{\mathcal X}}

\newcommand{\F}{{\mathcal F}}
\newcommand{\calS}{{\mathcal S}}

\newcommand{\calH}{{\mathcal H}}
\newcommand{\U}{{\mathcal U}}
\newcommand{\R}{{\mathbb R}}
\newcommand{\calP}{{\mathcal P}}
\newcommand{\flowB}{{\Phi_{\pi_b}}}
\newcommand{\dt}{{\Delta t}}
\newcommand{\td}{{t'}}

\newcommand{\clip}{{\mathrm{clip}}}
\newcommand{\reftext}{{\mathrm{ref}}}
\newcommand{\safe}{{\mathrm{safe}}}
\newcommand{\des}{{\mathrm{des}}}
\newcommand{\err}{{\mathrm{err}}}
\newcommand{\cmdtext}{{\mathrm{cmd}}}
\newcommand{\diag}{{\mathrm{diag}}}
\newcommand{\BL}{{\mathrm{CIL}}}
\newcommand{\vb}[1]{\mathbf{#1}}

\newcommand{\acmd}{{a_{\text{cmd}}}}
\newcommand{\wcmd}{{\mathbf{\omega}_{\text{cmd}}}}
\newcommand{\cbfrl}{{\mathrm{cbf-rl}}}

\newcommand{\lbp}{{\pi_b^\theta}}
\newcommand{\lbpstar}{{\pi_b^\star}}
\newcommand{\bp}{{\pi_b}}
\newcommand{\piphistar}{{\pi_\phi^\star}}
\newcommand{\pinom}{{\pi_\phi}}
\newcommand{\piarr}{{\pi_{\mathrm{SA}}}}
\newcommand{\piarrL}{{\pi_{\mathrm{SA}}^\theta}}
\newcommand{\piarrstar}{{\pi_{\mathrm{SA}}^\star}}

\newcommand{\unom}{{u_{\mathrm{nom}}}}
\newcommand{\usafe}{{u_{\mathrm{safe}}}}

\newcommand{\bcbf}{{\mathrm{BCBF}}}

\newcommand{\hardCVX}{\texttt{HardNet-CVX}}
\newcommand{\hardnet}{\texttt{HardNet}}

\newcommand{\ours}{PS2}
\newcommand{\oursRA}{PS2-RL}
\newcommand{\oursSAV}{PS2$_{\mathrm{SA}}$}

\newcommand{\oursABP}{PS2$_{\mathrm{ABP}}$}

\newcommand{\rafull}{Safe-Arrival Value\xspace}
\newcommand{\ralow}{safe-arrival value\xspace}
\newcommand{\sapolicy}{safe-arrival policy\xspace}

\DeclareMathOperator*{\argmin}{arg\,min}

\newif\ifisrestate
\isrestatefalse

\title{Provably Safe, Yet Scalable\\Reinforcement Learning}

\author{%
  Kai S.~Yun \\
  MIT  \\
  \texttt{kaisyun@mit.edu} \\
  \And
  Zeyang~Li\\
  MIT \\
  \texttt{zeyang@mit.edu} \\
  \And
  Navid~Azizan \\
  MIT \\
  \texttt{azizan@mit.edu} \\
}

\begin{document}

\maketitle

\begingroup
\renewcommand{\thefootnote}{}
\footnotetext{Correspondence to Zeyang Li (\texttt{zeyang@mit.edu}).}
\endgroup

\addtocontents{toc}{\protect\setcounter{tocdepth}{0}}
\begin{abstract}
    Safe reinforcement learning (RL) aims to learn policies that optimize rewards while satisfying constraints.
    Predominant approaches rely on soft-constrained policy optimization, which has achieved empirical success but does not provide formal safety guarantees for the learned policy. 
    In contrast, methods with strict guarantees typically rely on explicit certificate functions, whose construction requires the direct synthesis and verification of control-invariant sets, a process that scales poorly with state dimension and often yields overly conservative behavior.
    In this paper, we present the Provably Safe, \emph{yet} Scalable RL (\oursRA) framework, a novel two-phase architecture for learning provably safe policies in a scalable manner, designed to overcome the key bottlenecks of prior methods.
    Rather than explicitly computing invariant sets, \oursRA~leverages a learned backup policy to forward-integrate the system dynamics, generating an implicit control-invariant set online. 
    In the first phase, the backup policy is trained with our proposed \ralow{} function, which characterizes the optimal backup policy for invariant-set construction.
    In the second phase, an RL policy is trained end-to-end through a differentiable projection layer that strictly enforces the safety guarantees induced by the learned backup policy. 
    By maximizing the volume of the implicit control-invariant set in the first phase, the resulting \ours{} policy from the second phase is performant and scalable, while maintaining provable safety. 
    Crucially, \oursRA{} imposes no restrictions on the underlying RL algorithm and can be plugged into any existing training pipeline. 
    We establish theoretical guarantees for the proposed framework and evaluate it on robotic control tasks with state dimensions up to 10, a regime in which prior provably safe RL methods struggle or become impractical. 
\end{abstract}


\section{Introduction}\label{sec: intro}

Reinforcement learning (RL) has achieved remarkable success in controlling complex robotic systems~\citep{hwangbo2019anymal, pmlr-v164-rudin22a, kaufmann2023champion, renzhi2026rlHumanoid}. 
However, the lack of safety guarantees still hinders the deployment of RL on real-world systems.
To mitigate this issue, safe RL aims to learn policies that achieve high rewards while satisfying safety constraints. 
Predominant approaches use constrained policy optimization, such as Lagrangian-based methods, where constraints are typically imposed through either cost value functions or neural certificate functions~\citep{cmdp, cpo, chow2017risk, tessler2018reward, ha2021sacLag, ma2022joint, yu2022reachability, li2024safe}. 
A key limitation of this line of work is that, although the learned policies may perform well empirically in some cases, they provide no formal guarantee of constraint satisfaction; as a result, catastrophic failures can still occur during deployment.
Another line of work in safe RL seeks to learn provably safe policies \citep{cheng2019cbfRL, cbfRL, tonkens2022cbfWithHJ, alshiekh2018shielding, zhao2025implicitsafeset, bastani2021mps}. 
These methods typically require verified explicit certificate functions representing control-invariant sets, such as control barrier function (CBF)~\citep{ames2014control, ames2017cbf, ames2019control} or safety index (SI)~\citep{liu2014control, wei2019safe}, whose synthesis and verification scale poorly with state dimension~\citep{bansal2017hamilton, mitchell2005HJ, raCBF-ours, dai2022convex}.
Therefore, these certificate-based methods often yield overly conservative behavior~\citep{ncbf-simin, backupCBF}.

The backup control barrier function (BCBF) framework~\citep{backupCBF, gurriet2020backupcbf_access} offers a promising way to avoid the fundamental difficulty of directly synthesizing a valid explicit control-invariant set. 
Its key idea is to enlarge a small, known safe invariant set by forward-integrating a deterministic backup policy. 
The implicit set induced by this rollout is control-invariant by construction, respects bounded actuator limits, and reduces the safety condition to relative degree one regardless of the underlying dynamics. 
Thus, safety can be specified in its original form without high-order-style augmentations~\citep{xiao2022hocbf, xiao2022hocbfReview}. 
Crucially, this framework also preserves the convexity of the resulting safety-filter constraints, allowing them to be solved efficiently.

However, BCBF shifts rather than eliminates the difficulty: it replaces the direct synthesis of an explicit control-invariant set with the design of a backup policy, but does not provide a principled way to obtain such a policy. 
The quality of the backup policy critically affects the size of the resulting invariant set. 
Hand-designed analytic policies, such as linear quadratic regulators (LQRs), are easy to certify but typically recover only a small portion of the full invariant set~\citep{backupCBF, kim2026backupbasedsafetyfilterscomparative}. 
In this paper, we use BCBF as a building block and address this limitation by training the safe-arrival component of the backup policy via RL. 
The learned policy steers the system into a small certified region around an equilibrium, where a closed-form linear controller takes over. 
The BCBF construction then automatically extends safety from the certified base set to the entire implicit invariant set, without requiring certification of the learned safe-arrival policy itself.

\begin{figure}[t]
    \centering
    \includegraphics[width=1.0\linewidth]{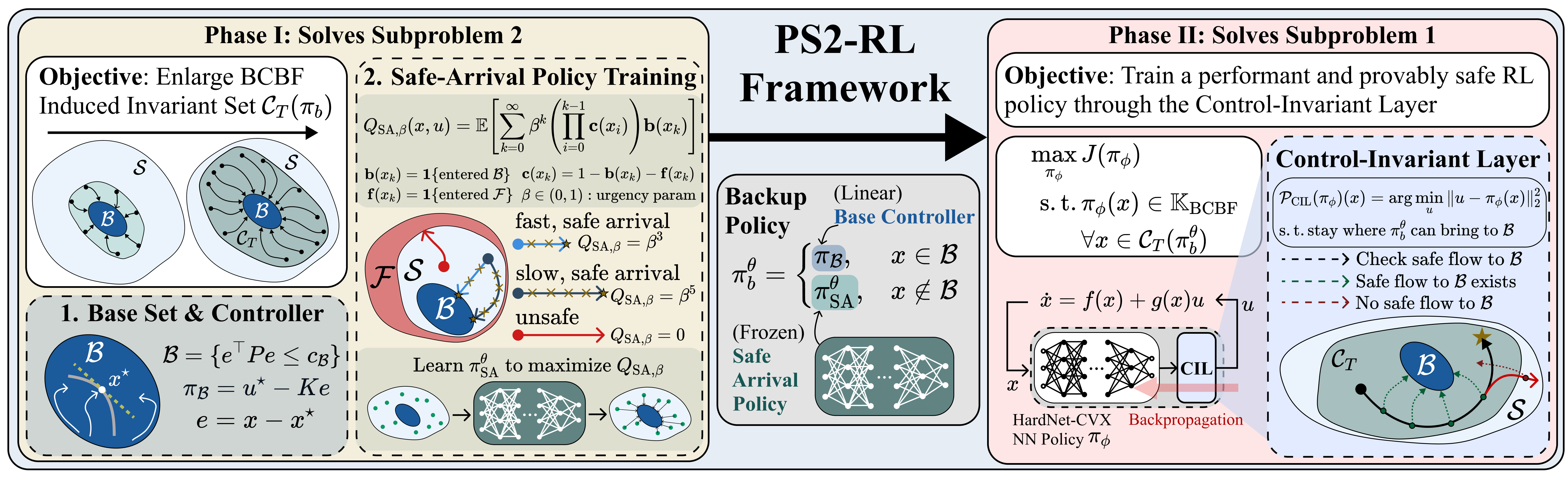}
    \caption{Overview of the \oursRA{} framework.}
    \label{fig: framework_hero}
\end{figure}

We introduce \textbf{\oursRA~(Provably Safe, \emph{yet} Scalable RL)}, a two-phase framework for training provably safe control policies without excessive conservatism or performance sacrifice. 
A visual overview of our method is in Fig.~\ref{fig: framework_hero}. 
Phase I trains a safe-arrival policy with RL through a novel indicator-based \ralow{} function, whose unique optimum induces the time-optimal policy that drives the system safely to a target set. 
Phase II trains a \ours{} policy \emph{end-to-end} through a control-invariant layer (CIL), constructed from the learned backup policy in Phase I. 
On unicycle lane-keeping and powerloop tracking for a $10$-dimensional quadrotor, \oursRA{} achieves $100\%$ safety across both training and deployment while exceeding the performance of all baselines. 
Our \textbf{contributions} are:
\begin{itemize}[itemsep=0ex, topsep=0pt, parsep=0pt, leftmargin=1.5em]

    \item \textit{\oursRA{} framework}: 
    A two-phase RL framework with formal safety guarantees and scalability to high-dimensional, input-constrained systems, without synthesizing an explicit invariant set.

    \item \textit{Safe-arrival value function}: 
    A novel objective that encodes the time-optimal behavior of safely driving the system to a target set. 
    We show that it admits a self-consistency condition and a Bellman equation, enabling the training of a neural backup policy with standard RL pipelines. 
    The learned backup policy enlarges the implicit control-invariant set, empirically yielding substantially larger safe regions than analytic alternatives.

    \item \textit{Control-invariant layer}: 
    A differentiable projection layer that enforces the BCBF constraints, enabling efficient end-to-end training of provably safe RL policies. 
    Crucially, we show that the differentiable projection does not compromise the expressiveness of the policy network, preserving universal approximation.

    \item \textit{Theoretical justifications}: 
    Comprehensive theoretical results for the proposed framework, including guarantees for each individual phase and their connection in the full \oursRA{} pipeline.

\end{itemize}

\paragraph{Related work.}
We group related studies into five categories: (i) safe RL via constrained policy optimization, (ii) safe RL with verified certificate, (iii) backup control barrier functions, (iv) RL for reach-avoid specifications, and (v) differentiable optimization layers.
See App.~\ref{app: related_work} for details.

\section{Preliminaries}

\subsection{Backup Control Barrier Functions}\label{subsec: safe_control}
Throughout this work, we consider the following control-affine, continuous-time system:
\begin{align}\label{eq: affine_dyn}
    \dot x = f(x) + g(x)u,\quad x\in\X\subseteq\R^n, \quad u\in\U\subseteq\R^m
\end{align}
where $x$ is the state, $\X$ is the state space, $u$ is the control input, and $\U$ is a compact set of admissible inputs encoding actuator limits. The functions $f$ and $g$ are locally Lipschitz continuous. Given a state-feedback policy $\pi:\X\rightarrow\U$, let $\Phi_{\pi}: \R^n\times\R \rightarrow\R^n$ denote the closed-loop flow map under $f_\pi(x):= f(x) + g(x) \pi(x)$, where $\Phi_\pi(x_0,t)$ is the state at time $t$ starting from $x_0:=x(t_0)$. 

Safety of the system~\eqref{eq: affine_dyn} is specified with a \textbf{safe set} $\calS:=\{ x\in\X: h_\calS(x)\ge0 \}$ for a continuously differentiable $h_\calS:\R^n\rightarrow\R$, and enforced in the sense of \textit{control invariance}~\citep{nagumo, ames2019control, liu2014control}.
Control barrier functions (CBFs) provide a differentiable certificate of control invariance via an inequality condition $\nabla h(x)^\top(f(x)+g(x)u)+\alpha(h(x))\ge 0$ enforced online within a safety filter~\citep{ames2014control, ames2017cbf}.
The key challenge lies in the synthesis of a CBF whose zero-superlevel set is invariant.
As discussed in Sec.~\ref{sec: intro} and App.~\ref{app: related_work}, classical tools for computing the explicit invariant set have various limitations.

The backup control barrier function (BCBF) framework~\citep{backupCBF, gurriet2020backupcbf_access} circumvents this by leveraging a deterministic \textbf{backup policy} $\pi_b:\X\rightarrow\U$ and a small, known control-invariant \textbf{base set} $\B:=\{x: h_\B(x)\ge0\}\subseteq \calS$. Then, the $T$-time constrained backup-induced control-invariant set under $\pi_b$ is: 
\begin{definition}[Backup-induced Control-Invariant Set]
    $\CT(\bp)$ is the set of all states from which $\bp$ recovers to $\B$ within the backup horizon $T$ while keeping the trajectory in $\calS$:
    \begin{align}
        \C_T(\bp) = \mathcal R\left( \B,\calS, f_{\pi_b}, T \right) := \left\{ x\in\X: \flowB(x,T)\in\B \; \wedge \flowB (x,t')\in\calS \quad \forall t'\in[0,T] \right\}.
    \end{align}
\end{definition}
$\C_T(\bp)$ is control-invariant under $\bp$ with $\B\subseteq\C_T\subseteq\calS$, and yields the BCBF, an implicit CBF:
\begin{align}\label{eq: bcbf_implicit_set}
    h_\CT(x) = \min\left\{ \min_{t'\in[0,T]} \left\{h_\calS\left( \flowB(x,t') \right)\right\}, h_\B\left( \flowB(x,T) \right)\right\}
\end{align}
The non-smooth $\min$ in~\eqref{eq: bcbf_implicit_set} is handled by enforcing the CBF inequality condition pointwise along the backup flow and at the horizon. 
The resulting BCBF constraints are sufficient for the original CBF condition on $h_\CT$~\citep[Prop.~2]{backupCBF} and are always feasible by construction under control limits~\citep[Thm.~2]{backupCBF}, without ever requiring an explicit invariant set computation, making BCBF an ideal backbone for \oursRA.
We defer the explicit forms of CBF and BCBF constraints to Sec.~\ref{sec: sub1} and App.~\ref{app: safe_control_background}.

\subsection{Reinforcement Learning}\label{sec: rl-prelim}
We model the control task as a Markov decision process (MDP) $(\X,\U,F, r,\gamma)$ with deterministic transitions $F:\X\times\U\rightarrow\X$, reward $r:\X\times\U\rightarrow\R$, and discount factor $\gamma\in(0,1)$. 
A neural policy $\pinom:\X\to\U$, parameterized by $\phi$, maps states to control inputs. 
The objective is to find a policy that maximizes expected discounted return $J(\pinom)=\mathbb E[\sum_{k=0}^\infty \gamma^kr(x_k,u_k)]$ for an initial-state distribution $\rho_0$, under closed loop dynamics $x_{k+1} = F(x_k,\pinom(x_k))$ with $x_0\sim\rho_0$.
The state-action value function $Q_\gamma^\pinom(x,u):= \mathbb E_{x,u}^{\pinom}\!\left[\sum_{k=0}^\infty \gamma^k r(x_k,u_k)\right]$ satisfies the self-consistency condition $Q^{\pi_\phi}_\gamma(x,u) = r(x,u) + \gamma Q_\gamma^\pinom( F(x,u),\,\pinom( F(x,u) ) )$ for policy $\pinom$.

\subsection{Hard-Constrained Neural Networks}
A core requirement of \oursRA ~is to enforce hard constraints on a neural network's output, while preserving differentiability for end-to-end training. 
We build on \hardCVX~\citep{hardnet}, which projects a neural network's output onto a state-dependent convex set via a differentiable layer. 
In our notation, we refer to this construction as the \textit{constraint projection layer}:
\begin{definition}[Constraint Projection Layer]\label{def: safety_proj_layer}
Given an RL policy $\pinom:\X\rightarrow\U$ and a state-dependent convex set $\Kb(x)\subseteq\U$, the constraint projection layer $\calP$
 is defined as:
\begin{align}
    \calP(\pinom)(x) = \argmin_{u} \left\| u-\pinom(x) \right\|_2^2 \quad \mathrm{s.t.} \quad u \in \Kb(x).
\end{align}
\end{definition}
This projection is computed by a differentiable convex optimization layer, allowing policy gradients to propagate through $\calP$ during end-to-end training. 
Crucially, $\calP$ inherits the universal approximation property of \hardCVX~\citep[Thm~12]{hardnet}.
In Sec.~\ref{sec: main_method}, we instantiate $\Kb(x)$ with affine constraints derived from the BCBF framework to define \oursRA's \textit{control-invariant layer}.

\section{Problem Formulation}\label{sec: problem}
We now formalize our main problem: synthesize a reward-maximizing RL policy that is provably safe under control limits for every initial state of interest. 
First, we make the following connection:

\begin{remark}\label{remark: ct_dt}
    The system~\eqref{eq: affine_dyn} is continuous time, while the RL policy is evaluated at sample times $t_k = k\,\dt$ and its output is held by zero-order hold (ZOH).
    Accordingly, the transition map $x_{k+1}=F(x_k,u_k)$ introduced in Sec.~\ref{sec: rl-prelim} represents the exact ZOH dynamics of~\eqref{eq: affine_dyn} over one sampling interval.
    Our theoretical guarantees are for this sampled-data discrete dynamics. 
    Under standard Lipschitz regularity and a sufficiently fast control rate, the sampled-data set-invariance conditions guarantee that BCBF conditions enforced at sample instants imply continuous-time safety of the ZOH trajectory~\citep{singletary2020sampleddata, gurriet2019setInvariance, backupCBF}.
\end{remark}
With this connection between continuous- and discrete-time established, we present our main problem:

\begin{problem}[\oursRA{} Formulation]\label{prob: main}
Given the control-affine system~\eqref{eq: affine_dyn} and its ZOH discretization $F$, a safe set $\calS\subseteq\X$, an admissible control input set $\U\subseteq\R^m$, an initial-state distribution $\rho_0$ supported on $\X_0\subseteq\calS$, a reward function $r:\X\times\U\to\R$ for task performance, and a discount factor $\gamma\in(0,1)$, find the policy $\pi$ that solves
\begin{align}\label{eq: problem1}
    &\max_{\pi}\quad
    J(\pi) \;=\; \mathbb{E}_{\substack{
    x_0 \sim \rho_0,\;u_k=\pi(x_k) \\
    x_{k+1} = F(x_k,u_k)
    }}
    \!\left[\sum_{k=0}^\infty \gamma^k\, r(x_k, u_k)\right] \\[-1pt]
    &\mathrm{s.t.}\quad
    x_k \in \calS, \quad u_k \in \U,
    \quad \forall k \ge 0,\;\; \forall x_0 \in \X_0. \notag
\end{align}
\end{problem}
We use $\Pi_{\safe}$ to denote the feasible policy class of Problem~\ref{prob: main}, which constrains each realized control input $u_k$ and the resulting state $x_k$, i.e., the requirements $x_k\in\calS$ and $u_k\in\U$ must hold pointwise in $k$, for every trajectory induced by $\pi$, and for every initial state in $\X_0$. We require the learned policy for Problem~\ref{prob: main} to be provably safe, i.e., to provably belong to $\Pi_{\safe}$. 
Note that predominant safe RL algorithms based on constrained policy optimization~\citep{cmdp, cpo, chow2017risk, tessler2018reward, ha2021sacLag, ma2022joint, yu2022reachability, li2024safe, li2024safe2} may adopt similar formulations, such as state-wise safe RL~\citep{zhao2023statewisesafereinforcementlearning}. 
However, the policies learned by these methods generally do not come with formal guarantees of membership in $\Pi_{\safe}$.

\section{Provably Safe, \emph{yet} Scalable RL}\label{sec: our_method}

Problem~\ref{prob: main} requires maximizing task return while guaranteeing safety under bounded control. 
\oursRA{} decomposes this into two subproblems (Sec.~\ref{sec: decompose}). 
Then, we first train a safe-arrival policy using the \ralow function to quickly drive the system to a certified base set while avoiding unsafe states (Secs.~\ref{subsec: sav_func}--\ref{sec: phase1}). 
Second, the resulting composed backup policy instantiates the control-invariant layer, a differentiable projection layer mapping inputs to the BCBF-admissible set. 
This enables end-to-end training of an RL policy through the safety constraints (Sec.~\ref{sec: main_method}). 
By enlarging the implicit invariant set before optimizing within it, \oursRA{} achieves both high returns and guaranteed safety.
All proofs are deferred to the Appendix.

\subsection{Decomposing Safety and Performance from Problem~\ref{prob: main}}\label{sec: decompose}

We decompose Problem~\ref{prob: main} into two subproblems: a BCBF-constrained policy optimization for a fixed backup policy $\pi_b$ and horizon $T$ (Sec.~\ref{sec: sub1}) and a safe-arrival set maximization problem for the backup policy itself (Sec.~\ref{sec: sub2}), and show their composition certifies an inner approximation of Problem~\ref{prob: main} (Sec.~\ref{sec: decompose_guarantee}).

\subsubsection{BCBF-constrained policies under a fixed backup policy}\label{sec: sub1}

\paragraph{BCBF-admissible input set.} 
Let
$\Psi_\bp(x,t)=\frac{\partial \Phi_\bp(x,t)}{\partial x}$
denote the sensitivity of the backup flow starting at $x$. 
The BCBF derivative terms at the current state under candidate input $u$ are as follows:
\begin{align}\label{eq: bcbf_deriv}
    \D_\td^\calS(x,u)&:=\nabla h_\calS\left( \Phi_\bp(x,\td) \right)^\top \left[ \Psi_\bp(x,\td)\left( f(x)+g(x)u \right) -f_\bp\left( \Phi_\bp(x,\td) \right) \right],\notag\\
    \D_T^\B(x,u)&:= \nabla h_\B\left( \Phi_\bp(x,T) \right)^\top \Psi_\bp(x,T)\left( f(x)+g(x)u \right).\notag
\end{align}
Given extended class-$\K_\infty$ functions $\alpha_\calS$ and $\alpha_\B$, the BCBF-admissible input set is
\begin{equation}\label{eq: bcbf_set}
\begin{split}
    \mathbb K_\bcbf(x;\bp) := \big\{ u\in\U\,:\, & \D_\td^\calS(x,u)+\alpha_\calS\left( h_\calS\left( \Phi_\bp(x,\td) \right)\right)\ge 0\quad \td\in[0,T],\\
    &\D_T^\B(x,u)+\alpha_\B\left( h_\B\left( \Phi_\bp(x,T) \right) \right)\ge0\big\}.
\end{split}
\end{equation}
Given $x$, these constraints are affine in $u$ since the backup rollout and sensitivity are computed under a fixed $\bp$. 
Hence $\Kb_\bcbf(x;\bp)$ is convex whenever $\U$ is convex. 
Moreover, $\bp(x)$ is feasible for~\eqref{eq: bcbf_set} for all $x\in\CT(\bp)$~\citep[Thm.~2]{backupCBF}. 
With the BCBF-admissible set, we present our first subproblem:

\begin{subproblem}[BCBF-Constrained Policy Optimization]\label{prob: sub1}
    Fix a backup policy $\bp$ and backup horizon $T$, and assume $\X_0\subseteq\CT(\bp)$. Find a $\phi$-parameterized policy $\pi_\phi$ that maximizes the RL objective while satisfying the BCBF-constraints:
    \begin{align}
        \max_\pinom\quad J(\pinom)\quad
        \mathrm{s.t.}\quad \pi_\phi(x)\in\Kb_\bcbf(x;\bp)\quad\forall x\in\CT(\bp).
    \end{align}
\end{subproblem}
\begin{remark}\label{remark: bcbf_mesh}
    In practice, we enforce the constraints on a finite relative-time mesh $0=t'_0<t'_1<\cdots<t'_N=T$.
    Under standard Lipschitz regularity~\citep{backupCBF} and Remark~\ref{remark: ct_dt}, sampled BCBF enforcement at instants $t_k$ implies continuous-time safety of the ZOH trajectory. 
\end{remark}

\subsubsection{Learning the safe-arrival policy to enlarge the implicit invariant set}\label{sec: sub2}

The conservatism in Subproblem~\ref{prob: sub1} is governed by the backup-induced set $\CT(\pi_b)$: 
a small $\CT(\pi_b)$ gives the policy a narrow region in which to optimize.
To enlarge this region without losing the BCBF guarantee, we anchor the construction of the backup policy with a certified base set and use a learnable safe-arrival policy only to steer states to that base set, as follows:
\begin{align}\label{eq: bp_decompose}
    \bp(x) = \begin{cases}
        \pi_\B(x), & x\in\B \\
        \piarr(x), & x\notin\B.
    \end{cases}
\end{align}
Here, $\pi_\B:\B\to\U$ is the \textbf{base controller}, which renders the base set $\B$ forward invariant under $f_{\pi_\B}$, and $\piarr:\X\setminus\B\to\U$ is the \textbf{safe-arrival policy}, which is tasked with steering the system from a state outside $\B$ into $\B$ within the backup horizon $T$ while staying in the safe set $\calS$.
While the terminal pair $(\B,\pi_\B)$ needs to be certified for forward invariance, the \sapolicy $\piarr$ does not.

\paragraph{Certified base set and base controller.}
Constructing a valid $(\B,\pi_\B)$ is the prerequisite for a BCBF, and thus also for our \oursRA~framework. 
Here, we show that this construction is always possible and tractable for nonlinear dynamical systems under a mild condition. 
Crucially, this does not require computationally expensive synthesis or global search. 
Intuitively, constructing a small local invariant set around an equilibrium is much easier than synthesizing a globally valid explicit invariant set, which is often intractable for high-dimensional systems.

\begin{assumption}[Local Stabilizability]\label{assume: local_stable}
There exists $x^\star\in\mathrm{int}(\calS)$ and $u^\star\in\mathrm{int}(\U)$ such that: (a) $(x^\star,u^\star)$ is an equilibrium of~\eqref{eq: affine_dyn}, i.e., $f(x^\star)+g(x^\star)u^\star=0$; and (b) the linearization $(A,B)$ of~\eqref{eq: affine_dyn} about $(x^\star, u^\star)$, with $A=\frac{\partial(f+gu^\star)}{\partial x}\big|_{x=x^\star}$ and $B=g(x^\star)$, is locally stabilizable.
\end{assumption}

Assumption~\ref{assume: local_stable} is standard in nonlinear control~\citep{khalil2002nonlinear} and is satisfied by any continuously differentiable system in a neighborhood of an equilibrium $(x^\star,u^\star)$ that admits first-order-controllable linearization. 

\begin{restatable}[Existence of a Certified Local Base Set]{theorem}{basesetthm}\label{thm: base_set}
    Under Assumption~\ref{assume: local_stable}, there exists a linear feedback gain $K\in\R^{m\times n}$, a symmetric positive-definite matrix $P\succ0$, and a constant $\bar c>0$ such that the base controller $\pi_\B(x) = u^\star - K(x-x^\star)$ and the sublevel set $\B_c = \left\{ x\in\X: (x-x^\star)^\top P (x-x^\star)\le c\right\}, \, c\in(0,\bar c]$ 
    together satisfy the following properties: (i) $\B_c\subseteq\calS$; (ii) $\pi_\B(x)\in\U$ for every $x\in\B_c$; (iii) $\B_c$ is control-invariant under $\pi_\B$; and (iv) every trajectory with $x(0)\in\B_c$ satisfies $x(t)\to x^\star$ as $t\to\infty$ and $x(t)\in\B_c$ for all $t\ge 0$. 
\end{restatable}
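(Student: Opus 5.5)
The plan is the standard Lyapunov-based local stabilization argument (cf.\ Khalil, linearization/indirect method). By Assumption~\ref{assume: local_stable}(b), $(A,B)$ is stabilizable, so there is a gain $K$ with $A_K:=A-BK$ Hurwitz; for concreteness we take $K$ from an LQR design. We then fix any $Q\succ0$ and let $P\succ0$ be the unique solution of the Lyapunov equation $A_K^\top P+PA_K=-Q$. With $e:=x-x^\star$ and $\pi_\B(x)=u^\star-Ke$, the closed loop is $\dot e=f(x)+g(x)(u^\star-Ke)=:F_K(e)$. We have $F_K(0)=0$ by Assumption~\ref{assume: local_stable}(a), and we write $F_K(e)=A_Ke+\rho(e)$. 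Assuming $f,g$ are $C^1$, as the remark after Assumption~\ref{assume: local_stable} suggests, the remainder satisfies $\|\rho(e)\|/\|e\|\to0$ as $e\to0$. Note that the $x$-derivative of $g(x)(-K e)$ at $x^\star$ is $-g(x^\star)K=-BK$, consistent with the linearization.

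The first step is to show that $V(e)=e^\top Pe$ decreases strictly on a neighborhood. We compute $\dot V=-e^\top Qe+2e^\top P\rho(e)$. Choosing $r>0$ such that $\|\rho(e)\|\le \frac{\lambda_{\min}(Q)}{4\lambda_{\max}(P)}\|e\|$ for $\|e\|\le r$, we obtain $\dot V\le-\tfrac12\lambda_{\min}(Q)\|e\|^2\le -\tfrac{\lambda_{\min}(Q)}{2\lambda_{\max}(P)}V$ on the ball $\{\|e\|\le r\}$.

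The second step chooses $\bar c$ so that the ellipsoid fits inside every region of interest. Since $x^\star\in\mathrm{int}(\calS)$, there is $r_\calS>0$ with $x^\star+\{\|e\|\le r_\calS\}\subseteq\calS\cap\X$. Since $u^\star\in\mathrm{int}(\U)$, there is $r_\U$ with $\|u-u^\star\|\le r_\U\Rightarrow u\in\U$, and $\|Ke\|\le\|K\|\|e\|$. The set $\B_c$ lies in $\{\|e\|\le\sqrt{c/\lambda_{\min}(P)}\}$. We therefore set
\[
\bar c=\lambda_{\min}(P)\min\{r,\;r_\calS,\;r_\U/\|K\|\}^2,
\]
with the convention that the last term is omitted if $K=0$. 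This choice gives (i) and (ii) immediately for every $c\in(0,\bar c]$.

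The third step establishes (iii) and (iv). On $\B_c$ we have $\dot V<0$ except at $e=0$, so the sublevel set $\{V\le c\}$ is forward invariant. To make this rigorous, suppose a trajectory leaves; then there is a first exit time at which $V=c$ and $\dot V\ge0$, which contradicts $\dot V<0$ there. Local Lipschitzness gives existence and uniqueness of solutions, and compactness of $\B_c$ rules out finite escape, so solutions exist for all $t\ge0$. Since $\pi_\B$ keeps the state in $\B_c$ with admissible inputs, $\B_c$ is control invariant. The exponential bound $V(t)\le V(0)e^{-\kappa t}$ then yields $x(t)\to x^\star$.

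The main obstacle is the quantitative choice of $r$, i.e., controlling the linearization remainder uniformly near $x^\star$. This requires $C^1$ regularity of $f$ and $g$, whereas the paper only states local Lipschitzness; it is the part of the argument most sensitive to the hypotheses. A secondary subtlety is the discrete-time reading of Remark~\ref{remark: ct_dt}. We would state the result for the continuous-time flow, as in the theorem, and note that the ZOH version follows in the same way for small $\dt$ by applying a discrete Lyapunov argument to $e^{A_K\dt}$.
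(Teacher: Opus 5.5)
Your proposal is correct and follows essentially the same route as the paper's proof: a Hurwitz gain $K$, the Lyapunov equation for $P$, and the $o(\|e\|)$ linearization remainder yielding $\dot V\le -\tfrac12\lambda_{\min}(Q)\|e\|^2$ on a ball around $x^\star$, with $\bar c$ taken as the minimum of the levels that enforce decrease, $\calS$-containment, and input admissibility. The only differences are in detail: you make the radii explicit and write out the first-exit invariance argument and the exponential-decay bound, which the paper leaves to the standard Lyapunov theorem in Khalil.
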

Thm.~\ref{thm: base_set} shows that any stabilizing linear feedback $K$ (e.g., pole placement, LQR, etc.) followed by a sublevel-set check for input feasibility and $\calS$-containment (i.e., $\B\subseteq\calS$) produces a valid $(\B,\pi_\B)$.
Hereafter, we fix one valid choice $(\B,\pi_\B)$ for the base set and base controller produced by Thm.~\ref{thm: base_set}.

With $(\B,\pi_\B)$ fixed, $\CT(\bp)$ is governed by the \sapolicy $\piarr$ and the backup horizon $T$. 
While $\CT(\bp)$ monotonically increases with $T$ in the set inclusion sense~\citep[Lemma 1]{backupCBF}, it also increases the number of constraints in $\Kb_\bcbf$ and could render the optimization problem inefficient. 
A more straightforward solution is to come up with a better $\piarr$ that can bring more states in $\calS$ into $\B$ given a fixed horizon.
To this end, we introduce our second subproblem, learning a \sapolicy $\piarr$ to maximize the \textit{safe-arrival set}, and thereby the BCBF-induced invariant set $\CT$.

\begin{subproblem}[Safe-Arrival Set Maximization]\label{prob: sub2}
Given a certified base set $\B$ and base controller $\pi_\B$ from Thm.~\ref{thm: base_set}, a backup horizon $T>0$, a compact design region $\Omega\subseteq\calS$, and a finite reference measure $\mu$ on $\Omega\setminus\B$, find a $\theta$-parameterized safe-arrival policy $\piarrL:\X\setminus\B\to\U$ that maximizes the measure of states in $\Omega\setminus\B$ from which it can safely arrive at $\B$ within $T$:
\begin{align}\label{eq: backup_recoverable_set_obj}
    \max_{\piarrL}\quad \mu_\mathrm{SA}\left(\piarrL;\,\Omega,\,\B\right) := 
    \mu\!\left(\CT( \lbp ) \cap \left(\Omega\setminus\B\right)\right)
    \quad
    \mathrm{s.t.} \quad
    \lbp(x)=
    \begin{cases}
        \pi_\B(x), & x\in\B,\\
        \piarrL(x), & x\notin\B.
    \end{cases}
\end{align}
\end{subproblem}
\subsubsection{Certified decomposition guarantee}\label{sec: decompose_guarantee}
Subproblem~\ref{prob: sub2} enlarges the backup-induced invariant set $\CT(\lbp)$ over which safety is guaranteed, and motivates the \ralow formulation (Sec.~\ref{subsec: sav_func}). 
Subproblem~\ref{prob: sub1} then optimizes the task return inside the corresponding BCBF-admissible input set~\eqref{eq: bcbf_set}. 
Accordingly, for a fixed backup policy $\bp$, we define the BCBF-feasible policy class as $\Pi_{\bcbf}(\bp):=\{\pi:\X\to\U : \pi(x)\in\Kb_{\bcbf}(x;\bp)\,\,\forall x\in\CT(\bp)\}$.
The following result formalizes this decomposition.
\begin{restatable}[Certified Decomposition and Exactness of \oursRA]{theorem}{certifythm}\label{thm: decomposition}
    Suppose $(\B,\pi_\B)$ satisfies Thm.~\ref{thm: base_set}. 
    Let $\lbpstar$ be a solution of Subproblem~\ref{prob: sub2}, and let $\piphistar$ be a solution of Subproblem~\ref{prob: sub1} with the fixed backup policy $\lbpstar$.
    Assume the exact BCBF constraints in~\eqref{eq: bcbf_set} are enforced. Then: 
    (i) $\Pi_\bcbf\left( \lbpstar \right)\subseteq\Pi_\safe$ and hence $\piphistar$ is feasible for Problem~\ref{prob: main};
    (ii) $\piphistar$ is optimal among all policies certified by $\lbpstar$, i.e., $J( \piphistar ) = \sup_{\pi\in\Pi_\bcbf( \lbpstar)} J(\pi) \le \sup_{\pi\in\Pi_\safe} J(\pi)$; 
    and (iii) if there exists a globally optimal solution $\pi^\star_\safe$ of Problem~\ref{prob: main} such that $\pi^\star_\safe\in\Pi_\bcbf\left(\lbpstar\right)$, then $\piphistar$ is also globally optimal for Problem~\ref{prob: main}.
    Thus, the two subproblems solve a certified inner approximation of Problem~\ref{prob: main}. They solve Problem~\ref{prob: main} exactly whenever the learned backup policy induces a BCBF-feasible class containing an optimal safe policy. 
\end{restatable}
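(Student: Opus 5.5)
The plan is to prove (i) by a forward-invariance argument, and then obtain (ii) and (iii) as set-inclusion consequences of (i).

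\emph{Part (i).} Fix any $\pi\in\Pi_\bcbf(\lbpstar)$. I will show that $\CT(\lbpstar)$ is forward invariant for the closed loop $x_{k+1}=F(x_k,\pi(x_k))$.

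The first step is to establish that $\CT(\lbpstar)$ is well posed. By Thm.~\ref{thm: base_set}, $\B$ is forward invariant under $\pi_\B$, and $\lbpstar$ coincides with $\pi_\B$ on $\B$. Hence $\B\subseteq\CT(\lbpstar)\subseteq\calS$, and the implicit BCBF $h_\CT$ in~\eqref{eq: bcbf_implicit_set} is a valid candidate barrier whose zero-superlevel set is exactly $\CT(\lbpstar)$.

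Next, I invoke the result quoted in Sec.~\ref{subsec: safe_control}~\citep[Prop.~2]{backupCBF}: enforcing the pointwise constraints in~\eqref{eq: bcbf_set} at $u=\pi(x)$ implies the CBF inequality $\dot h_\CT+\alpha(h_\CT)\ge 0$, understood in the generalized (min-of-functions) sense. A Nagumo-type argument then keeps $h_\CT\ge0$ along the continuous flow. By Remark~\ref{remark: ct_dt} and Remark~\ref{remark: bcbf_mesh}, this transfers to the sampled-data ZOH trajectory, so $x_k\in\CT(\lbpstar)$ for all $k$.

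To start the induction I need $x_0\in\CT(\lbpstar)$, which follows from the standing assumption $\X_0\subseteq\CT(\lbpstar)$ in Subproblem~\ref{prob: sub1}. Then $x_k\in\CT(\lbpstar)\subseteq\calS$ for all $k$. Finally, $u_k=\pi(x_k)\in\Kb_\bcbf\subseteq\U$, so $\pi\in\Pi_\safe$. Since $\piphistar$ is feasible for Subproblem~\ref{prob: sub1}, it lies in $\Pi_\bcbf(\lbpstar)$, and therefore $\piphistar\in\Pi_\safe$.

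The main obstacle is this middle step. Two issues must be handled: the non-smooth $\min$ in $h_\CT$, including the $\min$ over $t'\in[0,T]$, and the gap between continuous and sampled time. I will not reprove these; I will lean on the cited BCBF results and the stated regularity remarks.

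One subtlety is that feasibility of the constraint set never has to be argued for $\pi$ itself, since membership $\pi(x)\in\Kb_\bcbf(x;\lbpstar)$ is given. Feasibility is only needed for nonemptiness of $\Pi_\bcbf(\lbpstar)$, and this follows because $\lbpstar$ itself is admissible on $\CT$~\citep[Thm.~2]{backupCBF}, so the supremum in (ii) is over a nonempty class.

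\emph{Part (ii).} The equality $J(\piphistar)=\sup_{\Pi_\bcbf(\lbpstar)}J$ is the definition of $\piphistar$ as an optimal solution of Subproblem~\ref{prob: sub1}. The inequality $\sup_{\Pi_\bcbf(\lbpstar)}J\le\sup_{\Pi_\safe}J$ follows from monotonicity of the supremum under the inclusion $\Pi_\bcbf(\lbpstar)\subseteq\Pi_\safe$ established in (i).

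\emph{Part (iii).} If $\pi^\star_\safe\in\Pi_\bcbf(\lbpstar)$, then by (ii)
\[
\sup_{\Pi_\safe}J=J(\pi^\star_\safe)\le\sup_{\Pi_\bcbf(\lbpstar)}J=J(\piphistar)\le\sup_{\Pi_\safe}J ,
\]
so equality holds throughout. Combined with the feasibility of $\piphistar$ from (i), this shows $\piphistar$ is globally optimal for Problem~\ref{prob: main}.
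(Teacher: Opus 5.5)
Your proposal is correct and follows the paper's proof essentially step for step. Part (i) goes through forward invariance of $\CT(\lbpstar)$, obtained from the pointwise BCBF inequalities via the cited sufficiency result, together with $\X_0\subseteq\CT(\lbpstar)\subseteq\calS$ and $\Kb_\bcbf(x;\lbpstar)\subseteq\U$; parts (ii) and (iii) follow from optimality over $\Pi_\bcbf(\lbpstar)$ and the inclusion in (i). Your extra remarks on nonemptiness of $\Pi_\bcbf(\lbpstar)$ (since $\lbpstar$ itself is admissible) and on the sampled-data transfer are harmless refinements the paper leaves implicit.
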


\subsection{\rafull Function}\label{subsec: sav_func}
Subproblem~\ref{prob: sub2} optimizes the backup policy to maximize the size of the induced implicit invariant set, thereby imposing fewer restrictions on the RL policy in Subproblem~\ref{prob: sub1} and enabling potentially higher rewards. 
To this end, we introduce the \ralow{} function, a novel indicator-style objective that quantifies a policy's ability to safely return the system to the target set. 
Optimizing the \ralow{} function encourages the policy to reach $\B$ quickly while avoiding the failure set $\F := \X \setminus \calS$, thereby enlarging $\CT(\bp)$. 
We define $\vb b(x) := \vb1_\B(x)$, $\vb f(x) := \vb1_\F(x)$, and $\vb c(x) := 1 - \vb b(x) - \vb f(x)$, so that $\vb c(x)=1$ precisely on the continuation set $\X \setminus (\B \cup \F)$. 
We refer to the policy optimized for this purpose as the safe-arrival policy $\piarr$.

\begin{definition}[\rafull Function]\label{def: save_arrival_value}
    For a deterministic safe-arrival policy $\piarr$ and discount factor $\beta\in(0,1)$, the discounted safe-arrival Q-function is:
    \begin{align}\label{eq: save_q_func}
        Q_{\mathrm{SA},\beta}^\piarr(x,u):= \mathbb E_{x,u}^\piarr\left[ \sum_{k=0}^\infty \beta^k \left( \prod_{\tau=0}^{k-1} \mathbf{c}(x_\tau) \right) \mathbf{b}(x_k)\right],
    \end{align}
    where $x_0=x, u_0=u, x_{k+1}=F(x_k,u_k)$, and $u_k=\piarr(x_k)$ for $k\ge1$, with the empty product interpreted as $1$. The state-value function is $V_{\mathrm{SA},\beta}^\piarr(x):=Q_{\mathrm{SA},\beta}^\piarr(x,\piarr(x))$.
\end{definition}
The factor $\prod_{\tau=0}^{k-1}\mathbf{c}(x_\tau)$ is a ``survival'' gate: it is $1$ only while the rollout remains in the continuation set, and becomes $0$ immediately after the trajectory enters either $\B$ or $\F$. 
Hence~\eqref{eq: save_q_func} rewards exactly the first safe arrival to $\B$, discounted by how many steps it takes. 
Under deterministic dynamics,~\eqref{eq: save_q_func} equals $\beta^N$ if the rollout reaches $\B$ safely in exactly $N$ steps, and $0$ otherwise. 
Consequently, maximizing the \ralow function trains a policy that first prefers safe arrival over failure, and among safe-arrival policies prefers those that reach $\B$ sooner. 
Importantly, \ralow{} satisfies the following self-consistency condition:
\begin{equation}\label{eq: save_bellman}
    Q_{\mathrm{SA},\beta}^{\piarr}(x,u)
    =
    \vb b(x)
    +
    \beta \, \vb c(x)\,
    Q_{\mathrm{SA},\beta}^{\piarr}
    \left(F(x,u), \piarr\left(F(x,u)\right)\right),
\end{equation}
as well as a corresponding Bellman equation. 
This structure enables the safe-arrival policy to be trained using standard RL pipelines.
The full details and additional properties of the safe-arrival value function are deferred to App.~\ref{app: save}. 

\subsection{Phase I: Training the Safe-Arrival Policy}\label{sec: phase1}
Phase I addresses Subproblem~\ref{prob: sub2} (Fig.~\ref{fig: framework_hero}, left panel) by training a parameterized safe-arrival policy $\piarrL$ with the discounted Bellman recursion~\eqref{eq: save_bellman}. 
Letting $\rho_{\mathrm{arr}}$ be a design distribution supported on $\Omega\setminus\B$, we optimize the following objective:
\begin{equation}\label{eq: save_phase1_surrogate}
    \max_{\theta}\; J_{\mathrm{SA},\beta}(\theta)
    :=
    \mathbb E_{x\sim \rho_{\mathrm{arr}}}\!\left[Q_{\mathrm{SA},\beta}^{\piarrL}(x,\piarrL(x))\right].
\end{equation}
The general Phase~I training procedure and algorithm are in App.~\ref{app: save_beta}. 
Note that the only ingredients specific for the safe-arrival policy training are the indicators $\vb b,\vb f,\vb c$, first-hit termination at $\B$ and $\F$, and~\eqref{eq: save_phase1_surrogate}. 
Everything else can be supplied by a chosen RL backbone. 
Hence, the safe-arrival policy training can be instantiated with actor-critic, Q-learning, and related methods. 


\subsection{Phase II: Training with the Control-Invariant Layer}\label{sec: main_method}
Phase II solves Subproblem~\ref{prob: sub1} (Fig.~\ref{fig: framework_hero}, right panel). 
With $\piarrstar$, the safe-arrival policy trained from Phase I, we define the fixed composed backup policy by setting $\pi_b^\star(x)=\pi_B(x)$ for $x\in\mathcal B$ and $\pi_b^\star(x)=\pi^\star_{\mathrm{SA}}(x)$ otherwise. 
With $\lbpstar$ fixed, the backup-induced invariant set $\CT(\lbpstar)$ and the BCBF-admissible set $\Kb_\bcbf(x;\lbpstar)$ are fixed as well. Assuming $\U$ is convex, $\Kb_\bcbf(x;\lbpstar)$ is convex for each $x\in\CT(\lbpstar)$ because the BCBF constraints are affine in $u$ under a fixed backup flow.
Extending from the constraint projection layer (Def.~\ref{def: safety_proj_layer}), we define the \textit{control-invariant layer} as follows.

\begin{definition}[Control-Invariant Layer]\label{def: backup_layer}
Given the fixed composed backup policy $\lbpstar$ and backup horizon $T$, the control-invariant layer is the state-dependent projection operator
\begin{align}\label{eq: backup_layer}
    \calP_\BL(\pinom)(x)
    :=
    \argmin_{u}\,\|u-\pinom(x)\|_2^2
    \quad
    \mathrm{s.t.}\quad
    u\in \Kb_\bcbf(x;\lbpstar).
\end{align}
\end{definition}
\eqref{eq: backup_layer} is essentially the BCBF-QP written as a differentiable projection layer. 
Note that for a relative-time mesh of $N+1$ nodes (Remark~\ref{remark: bcbf_mesh}), this QP has at least $N + 2+ 2m$ affine constraints in $u$. The constraint count thus grows linearly with the backup horizon, which precludes a closed-form solution and motivates the usage of a differentiable QP solver (App.~\ref{app: backup_layer}).
We now define our \ours{} policy:
\begin{definition}[\ours\ Policy]\label{def: ps2_policy}
For $\pinom:\X\to\U$, the \ours\ policy is 
\begin{equation}\label{eq: ps2_policy}
    \pi^\phi_{\mathrm{\ours}}(x;\lbpstar)
    :=
    \calP_\BL(\pinom)(x).
\end{equation}
\end{definition}

This reparameterization turns Subproblem~\ref{prob: sub1} into unconstrained optimization over policy parameters: 
$\max_{\phi} J(\pi^\phi_{\mathrm{\ours}}(\cdot;\lbpstar))$.
Because the projection layer is differentiable, gradients propagate end-to-end through~\eqref{eq: backup_layer}. Hence Phase II is agnostic to the RL backbone with the only architectural modification being the control-invariant layer appended to the policy output. 
Regardless of the stochasticity of the policy output, the projection is applied deterministically. 
Therefore, \oursRA{} guarantees safety in both training and deployment, and is expressive as shown in the following theoretical results.
\begin{restatable}[Safety Guarantee of \oursRA]{theorem}{safetythm}\label{thm: ps2_safety}
    Suppose $(\B,\pi_\B)$ satisfies Thm.~\ref{thm: base_set}, let $\lbpstar$ be the fixed composed backup policy, and assume $\X_0\subseteq \CT(\lbpstar)$. Then for any policy $\pinom$, the corresponding \ours\ policy satisfies $\pi^\phi_{\mathrm{\ours}} \in \Pi_{\bcbf}(\lbpstar)$. 
    By Thm.~\ref{thm: decomposition}, $\pi^\phi_{\mathrm{\ours}}\in \Pi_{\safe}$ and is therefore feasible for Problem~\ref{prob: main}. 
    Every realized input satisfies $u_k\in\U$ and the sampled closed-loop trajectory satisfies $x_k\in\calS$ for all $k\ge 0$ and all $x_0\in\X_0$. 
    Under the sampled-data conditions, the continuous-time ZOH trajectory remains in $\calS$ as well. 
\end{restatable}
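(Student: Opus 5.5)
The argument has two parts. First I show that the projection is well defined and lands in $\Kb_\bcbf(x;\lbpstar)$ on $\CT(\lbpstar)$. Then I use Thm.~\ref{thm: decomposition}(i) together with an induction on $k$ to get safety along trajectories.

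\textbf{Step 1: the projection is well posed.} Fix $x\in\CT(\lbpstar)$. By \citep[Thm.~2]{backupCBF}, the backup input $\lbpstar(x)$ satisfies every BCBF constraint in~\eqref{eq: bcbf_set}, so $\Kb_\bcbf(x;\lbpstar)\neq\emptyset$. For fixed $x$ the backup rollout $\Phi_{\lbpstar}(x,\cdot)$ and its sensitivity $\Psi_{\lbpstar}(x,\cdot)$ do not depend on $u$, so each constraint is an affine inequality in $u$ and defines a closed half-space. Intersecting these half-spaces with the compact convex set $\U$ gives a nonempty, compact, convex set. The map $u\mapsto\|u-\pinom(x)\|_2^2$ is strictly convex, so the argmin in~\eqref{eq: backup_layer} exists and is unique.

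This holds for any output $\pinom(x)$, including a stochastic sample, since the projection is applied deterministically afterward. Hence $\pi^\phi_{\mathrm{\ours}}(x)\in\Kb_\bcbf(x;\lbpstar)\subseteq\U$ for all $x\in\CT(\lbpstar)$. Outside $\CT(\lbpstar)$ the membership condition in the definition of $\Pi_\bcbf(\lbpstar)$ imposes nothing; I only need some $\U$-valued extension there, for example a projection onto $\U$ alone. This establishes $\pi^\phi_{\mathrm{\ours}}\in\Pi_\bcbf(\lbpstar)$.

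\textbf{Step 2: safety along trajectories.} Thm.~\ref{thm: decomposition}(i) gives $\Pi_\bcbf(\lbpstar)\subseteq\Pi_\safe$, which I could simply cite. For transparency I would also unpack it as an induction showing that $x_k\in\CT(\lbpstar)$ for all $k$.

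For the base case, $x_0\in\X_0\subseteq\CT(\lbpstar)$ by assumption. For the inductive step, suppose $x_k\in\CT(\lbpstar)$. Then $u_k\in\Kb_\bcbf(x_k;\lbpstar)$. By \citep[Prop.~2]{backupCBF}, the BCBF constraints imply the CBF condition for the implicit function $h_\CT$ in~\eqref{eq: bcbf_implicit_set}. Under the sampled-data conditions of Remarks~\ref{remark: ct_dt} and~\ref{remark: bcbf_mesh}, this keeps $h_\CT\ge0$ over the ZOH interval, so $x_{k+1}=F(x_k,u_k)\in\CT(\lbpstar)$.

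Since $\CT(\lbpstar)\subseteq\calS$, which follows from the $t'=0$ term in its definition, we get $x_k\in\calS$ for all $k$. Every input satisfies $u_k\in\U$ by Step 1, and the inter-sample states stay in $\calS$ by the same sampled-data argument.

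\textbf{Main obstacle.} Everything except the inductive step is bookkeeping. The inductive step relies on the continuous-time-to-sampled-data transfer: the CBF inequality holds only at the sample instants (and on the mesh), yet it must imply invariance of $\CT$ across each ZOH interval. I would not reprove this. Instead I would invoke it exactly as packaged in Remark~\ref{remark: ct_dt} and Thm.~\ref{thm: decomposition}(i), and state explicitly that the result inherits its standing assumptions: Lipschitz regularity, a sufficiently small $\dt$ or margin-tightened $\alpha$'s, and a sufficiently fine mesh.

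A secondary point is measurability or regularity of $\lbpstar$ across $\partial\B$, which is needed for the flow and sensitivity to exist. I would note this as part of the same standing BCBF assumptions.
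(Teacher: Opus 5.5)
Your proposal is correct and follows the paper's argument: the projection lands in $\Kb_\bcbf(x;\lbpstar)$ for every $x\in\CT(\lbpstar)$, which gives $\pi^\phi_{\mathrm{\ours}}\in\Pi_\bcbf(\lbpstar)$, and safety then follows from Thm.~\ref{thm: decomposition}(i). The paper leaves several of your steps implicit, namely the well-posedness check (nonemptiness via BCBF feasibility of $\lbpstar(x)$, and a $\U$-valued extension off $\CT(\lbpstar)$) and the explicit induction on $k$; these add detail but do not change the route.
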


\begin{restatable}[Universal Approximation of \oursRA]{theorem}{uatthm}\label{thm: ps2_uat}
Fix $\lbpstar$, assume $\U$ is convex, and let $\Omega\subseteq \CT(\lbpstar)$ be compact. 
Let $C(\Omega,\U)$ denote the space of continuous maps $\Omega\to\U$ with norm $\|\pi\|_\infty:=\sup_{x\in\Omega}\|\pi(x)\|_\infty$. 
For function classes $\mathcal G_{\rm NN},\mathcal G\subseteq C(\Omega,\U)$, 
assume $\mathcal G_{\rm NN}$ universally approximates $\mathcal G$, i.e., 
for every $\pi\in\mathcal G$ and $\epsilon>0$, there exists 
$\pinom\in\mathcal G_{\rm NN}$ such that 
$\|\pi-\pinom\|_\infty<\epsilon$. 
Define the class of \ours{} policies obtained by composing~\eqref{eq: backup_layer} with $\mathcal G_{\rm NN}$-class policies as
\begin{equation}
    \Pi_{\ours}^{\mathcal G_{\rm NN}}(\lbpstar;\Omega)
    :=
    \left\{
        \calP_\BL(\pinom):\pinom\in\mathcal G_{\rm NN}
    \right\},
\end{equation}
and the BCBF-feasible policy class on $\Omega$ as
\begin{equation}
    \Pi_{\bcbf}(\lbpstar;\Omega)
    :=
    \left\{
        \pi\in\mathcal G:
        \pi(x)\in\Kb_\bcbf(x;\lbpstar)
        \quad \forall x\in\Omega
    \right\}.
\end{equation}
Then $\Pi_{\ours}^{\mathcal G_{\rm NN}}(\lbpstar;\Omega)$ universally 
approximates $\Pi_{\bcbf}(\lbpstar;\Omega)$ under $\|\cdot\|_\infty$.
\ifisrestate\else\footnote{The same statement holds in $L^p(\Omega,\U)$ by the corresponding $L^p$ version of \citep[Thm.~12]{hardnet}, where $L^p(\Omega,\U)$ denotes $p$-integrable maps with norm $\|\pi\|_p=(\int_\Omega \|\pi(x)\|_p^p dx)^{1/p}$.}\fi

\end{restatable}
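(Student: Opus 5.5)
The plan is to reduce the claim to the nonexpansiveness of Euclidean projection onto a closed convex set, applied pointwise in $x$, with the paper's $\|\cdot\|_\infty$ norm handled through equivalence of norms on $\R^m$. This is the same mechanism behind \citep[Thm.~12]{hardnet}; alternatively, we can invoke that theorem directly with $\Kb(x)=\Kb_\bcbf(x;\lbpstar)$, after checking its hypotheses. I will argue directly.

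\emph{Step 1 (the projection is well defined).} Fix $x\in\Omega\subseteq\CT(\lbpstar)$. The constraints in~\eqref{eq: bcbf_set} are affine in $u$ for fixed $x$, because the backup rollout and the sensitivity are computed under the fixed $\lbpstar$. The set $\U$ is convex and compact. Hence $\Kb_\bcbf(x;\lbpstar)$ is closed and convex. It is also nonempty, since $\lbpstar(x)\in\Kb_\bcbf(x;\lbpstar)$ for all $x\in\CT(\lbpstar)$ by \citep[Thm.~2]{backupCBF}. Therefore the minimizer in~\eqref{eq: backup_layer} exists and is unique, and $\calP_\BL(\pinom)(x)$ is the Euclidean projection $\mathrm{proj}_{\Kb_\bcbf(x;\lbpstar)}(\pinom(x))$.

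\emph{Step 2 (feasible targets are fixed points).} Take $\pi\in\Pi_\bcbf(\lbpstar;\Omega)$ and $\epsilon>0$. Since $\pi(x)\in\Kb_\bcbf(x;\lbpstar)$ for every $x\in\Omega$, we have $\calP_\BL(\pi)(x)=\pi(x)$ pointwise. By nonexpansiveness of projection onto a closed convex set,
\begin{align*}
\|\calP_\BL(\pinom)(x)-\pi(x)\|_2 = \|\mathrm{proj}(\pinom(x))-\mathrm{proj}(\pi(x))\|_2 \le \|\pinom(x)-\pi(x)\|_2 .
\end{align*}

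\emph{Step 3 (choice of network and norm conversion).} Since $\Pi_\bcbf(\lbpstar;\Omega)\subseteq\mathcal G$, universal approximation of $\mathcal G$ by $\mathcal G_{\rm NN}$ gives $\pinom\in\mathcal G_{\rm NN}$ with $\|\pi-\pinom\|_\infty<\epsilon/\sqrt m$. Using $\|v\|_\infty\le\|v\|_2\le\sqrt m\|v\|_\infty$, this yields $\|\calP_\BL(\pinom)(x)-\pi(x)\|_\infty<\epsilon$ uniformly in $x\in\Omega$. Taking the supremum over $\Omega$ gives the claim.

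\emph{Main obstacle.} The delicate point is making sure $\calP_\BL(\pinom)$ is a legitimate member of the comparison space, so that the sup-norm distance is meaningful; this matters if membership in $C(\Omega,\U)$ is required. Pointwise, the sup-norm estimate needs no regularity: $\calP_\BL(\pinom)$ takes values in $\U$, and the bound in Step 3 holds at every $x$. Continuity in $x$, however, would require continuity of the set-valued map $x\mapsto\Kb_\bcbf(x;\lbpstar)$. I would try to obtain it from continuity of the flow and of the sensitivity $\Psi$, together with a Slater-type condition. If that is unavailable, I would note that the approximation statement as posed only requires the sup-norm bound, and fall back on \citep[Thm.~12]{hardnet}, which handles the same issue. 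For the $L^p$ footnote, the same pointwise inequality, integrated over $\Omega$, gives the result directly.
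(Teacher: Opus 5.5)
Your argument is correct and is essentially the paper's proof. The paper checks that $\Kb_\bcbf(x;\lbpstar)$ is nonempty (BCBF feasibility of $\lbpstar$ on $\CT(\lbpstar)$) and convex (affine rows under the fixed backup flow, convex $\U$), then invokes \citep[Thm.~12]{hardnet}, which rests on the same nonexpansiveness of projection that you carry out inline; your version is slightly more careful in making explicit the closedness of $\Kb_\bcbf(x;\lbpstar)$, the $\sqrt m$ conversion between the Euclidean norm and the componentwise $\|\cdot\|_\infty$ of the statement, and the fact that the sup-norm bound needs no continuity of $\calP_\BL(\pinom)$ in $x$, a point the paper does not discuss.
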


Together, Thms.~\ref{thm: ps2_safety} and~\ref{thm: ps2_uat} show that the CIL enforces safety without sacrificing expressivity. 
Thm.~\ref{thm: ps2_safety} guarantees that every projected policy is BCBF-feasible and hence safe, while Thm.~\ref{thm: ps2_uat} shows that the projection preserves the approximation power of the chosen policy architecture over the BCBF-feasible class. 
Thus, \oursRA{} combines hard safety guarantees with the policy expressivity needed for high-performance RL.

\subsection{Scalability of \oursRA}\label{subsec: scalability}
The scalability of \oursRA{} comes from reducing provably safe RL to rollout-based operations and a control-space projection, rather than a global invariant set computation. 
From a practical standpoint, \oursRA{} is simple to instantiate: $h_\calS$ can be a distance-to-obstacle margin, and the only component that must be explicitly certified is the local base pair $(\B,\pi_\B)$, which is easily obtained via Thm.~\ref{thm: base_set}. 
The learned \sapolicy{} need not be certified separately, as once it is fixed, our framework extends the base set to the rollout-induced invariant set $\CT(\lbpstar)$ by construction, avoiding direct synthesis or verification of a global invariant set. 

Moreover, the CIL remains small in decision dimension even when the state is high-dimensional. 
With $N$ backup nodes, $n_\calS$ safety constraints, and $n_\B$ terminal constraints, the CIL has $m$ control variables and $O(N n_\calS+n_\B+2m)$ affine rows.\footnote{For the explicit QP construction, see App.~\ref{app: backup_layer_qp}.} 
Thus, the number of constraints grows only linearly with the backup mesh size or the system dimension. 
In contrast, computing the global invariant set by discretizing the Hamilton-Jacobi reachability PDE incurs complexity that grows exponentially with the state dimension~\citep{bansal2017hamilton}.

Furthermore, \oursRA{} is agnostic to the RL backbone and the policy architecture: the CIL is appended after the output of $\pinom$, and Thm.~\ref{thm: ps2_uat} shows that this projection preserves universal approximation over feasible policies. 
Thus, unlike post-hoc DNN verification methods that are architecture-dependent~\citep{wei2025modelverification,brown2022unified}, \oursRA{} is not limited to a specific network family.

Finally, although our presentation uses an analytic control-affine model, \oursRA{} does not fundamentally rely on a hand-derived symbolic decomposition $f(x)+g(x)u$.
More generally, given a differentiable simulator, backup rollouts and the sensitivities needed by the CIL can be computed by automatic differentiation for only the safety-relevant outputs. 
The differentiable-simulator variant is further discussed in App.~\ref{app: ps2_scalability_argument}.

\section{Experiments}\label{sec: experiments}
We evaluate \oursRA{} on two tracking tasks whose reference trajectories intentionally leave the safe set. The first is a low-dimensional unicycle lane-keeping task, where dense grid evaluation lets us visualize how the learned safe-arrival policy enlarges the backup-induced invariant set. The second is an agile $10$-dimensional quadrotor powerloop task with body rate and thrust as control, which tests scalability under tight actuation and a high-relative-degree altitude constraint. In both cases, the reward is a negative weighted tracking cost. 
Full details and extended analyses are provided in App.~\ref{app: unicycle_experiment_details} and App.~\ref{app: unicycle_extended_analysis} for the unicycle environment, and App.~\ref{app: quadrotor_experiment_details} and App.~\ref{app: quadrotor_extended_analysis} for the quadrotor.

\paragraph{Baselines and protocol.}
We compare \oursSAV{} (\ours+$\piarrL$) against seven baselines: (\texttt{Pen}) low/high-penalty SAC; (\texttt{Con}) CPO and SAC-Lag.~\citep{cpo,ha2021sacLag,yang2021wcsac}; and (\texttt{PSRL}) CBF-RL~\citep{cbfRL}, MPS~\citep{bastani2021mps}, and \oursABP{}, where \oursABP{} replaces the learned safe-arrival policy with an analytic one. 
CBF-RL uses a sum-of-squares CBF for unicycle and an HOCBF for quadrotor, and the filter is used for training but removed at evaluation, following~\citep{cbfRL}. We give MPS a longer recovery horizon than \oursRA. 
Tables~\ref{tab: hero_unicycle}--\ref{tab: hero_quadrotor} report sampled-data metrics under each method's controller. 
All methods are trained with $10$ seeds and evaluated on $1{,}000$ episodes per seed, and we report the IQM with $95\%$ CIs and the aggregated safety. 
See App.~\ref{app: ps2_implementation_details} for \oursRA{} implementation details and App.~\ref{app: baseline_details} for the baselines.

\subsection{Unicycle: Lane-Keeping}\label{sec: unicycle_experiment}
The unicycle state is $x=[y,v,\psi]^\top$, 
and inputs are $a_\cmdtext\in[-5,5]\,\mathrm{m/s^2}$ and $r_\cmdtext\in[-1,1]$ rad/s. The reference is a 20 sec sinusoid with amplitude $2.5$ m and desired speed $v_{\mathrm{des}}=5$ m/s, while the safe set is $\calS=\{x: |y|\le1.8,\;|\psi|\le\pi/3\}$. The base set is an LQR-induced ellipsoid about $x^\star=[0,v_{\mathrm{des}},0]^\top$. 
Dense safe-arrival evaluation on a $201\times121\times201$ grid shows that the learned $\piarrL$ enlarges the safe-arrival measure $\mu_{\mathrm{SA}}$ from $0.23$ for the analytic policy to $0.33$.
Figure~\ref{fig: hero_unicycle} visualizes this larger induced set and the resulting less-conservative tracking behavior.

\begin{figure}[h!]
    \centering
    \includegraphics[width=1.0\linewidth]{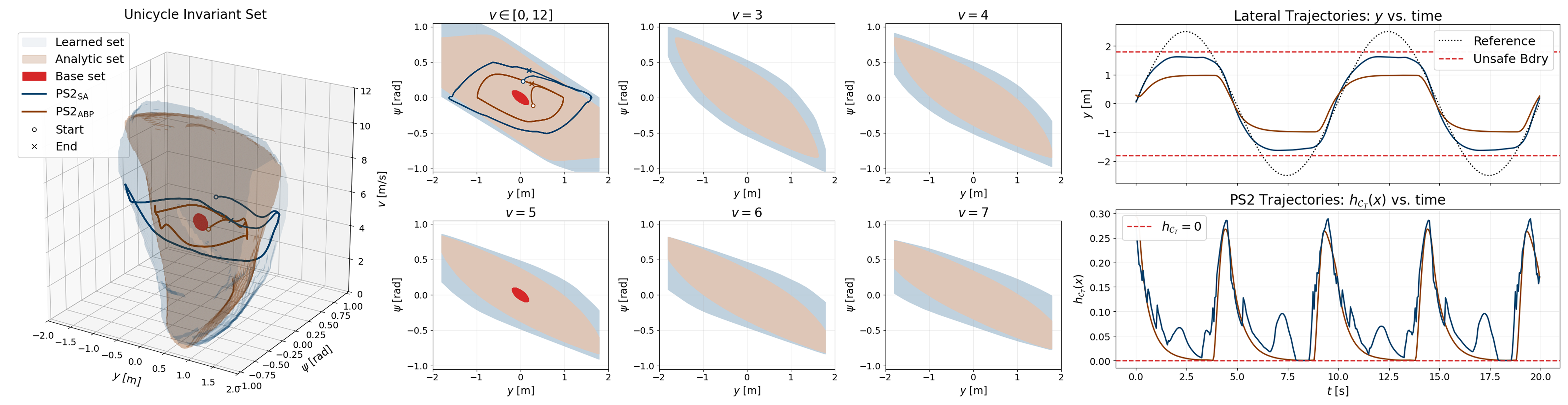}
    \caption{Unicycle lane-keeping. The learned safe-arrival policy enlarges the backup-induced invariant set compared to the one induced by the analytic policy, enabling \oursSAV{} to track closer to the intentionally unsafe reference while remaining inside the lane constraints.}
    \label{fig: hero_unicycle}
\end{figure}

Table~\ref{tab: hero_unicycle} shows that \oursSAV{} achieves $100\%$ safety with the best lateral and heading tracking among methods that are also $100\%$ safe. Compared with \oursABP{}, the learned policy reduces lateral RMSE from $0.97$ m to $0.52$ m, showing that Phase~I directly reduces downstream conservatism.
The higher velocity RMSE of \oursSAV{} is consistent with the task objective, which prioritizes lateral tracking over speed tracking through a larger reward weight. 
\oursSAV{} therefore trades speed accuracy for much lower lateral and heading errors. 
SAC-Pen$_{\text{high}}$ is also safe, but is substantially more conservative, while the low-penalty and CMDP baselines achieve tracking only by allowing frequent violations. CBF-RL and MPS have formal safety mechanisms, but under the evaluated sampled-data deployment they are not uniformly safe.

\begin{table}[htpb]
    \centering
    \caption{Unicycle lane-keeping experiment results across models trained with 10 different seeds.
    }
    \label{tab: hero_unicycle}
    \resizebox{1.0\linewidth}{!}{%
    \begin{tabular}{cl ccc ccc}
        \toprule
        & & \multicolumn{3}{c}{\textbf{Tracking Performance (RMSE)}} & \multicolumn{3}{c}{\textbf{Safety Performance}}\\
        \cmidrule(lr){3-5} \cmidrule(lr){6-8}
        & \textbf{Safe RL} & \makecell{$y$ (m) ($\downarrow$)\\IQM [95\% CI]} & \makecell{$v$ (m/s) ($\downarrow$)\\IQM [95\% CI]} & \makecell{$\psi$ (rad) ($\downarrow$)\\IQM [95\% CI]} & \makecell{Total Safety ($\uparrow$)\\Safe Ep/Total Ep} & \makecell{Per-seed Safety ($\uparrow$)\\IQM [95\% CI]} & \makecell{Worst Viol. ($\downarrow$)\\$y$ (m)}\\
        \midrule
        \multirow{2}{*}{\rotatebox[origin=c]{90}{\texttt{Pen}}}
        & SAC-Pen$_{\text{low}}$ & 0.53 [0.34, 0.96] & 2.14 [1.20, 3.11] & 0.19 [0.09, 0.38] & 1.8\% $\pm$ 5.4\% & 0.0\% [0.0\%, 3.0\%] & 8.12 \\
        & SAC-Pen$_{\text{high}}$ & 1.11 [0.99, 1.25] & 1.85 [1.26, 2.84] & 0.15 [0.13, 0.19] & \textbf{100\% $\pm$ 0.0\%} & \textbf{100\% [100\%, 100\%]} & \textbf{0.00} \\
        \cmidrule(lr){2-8}
        \multirow{2}{*}{\rotatebox[origin=c]{90}{\texttt{Con}}}
        & SAC-Lag. & 0.52 [0.40, 0.66] & 1.22 [0.99, 1.54] & 0.12 [0.09, 0.14] & 30.0\% $\pm$ 45.8\% & 16.7\% [0.0\%, 66.7\%] & 1.03\\
        & CPO & 2.51 [1.66, 7.29] & 4.44 [3.05, 5.36] & 0.27 [0.20, 0.57] & 64.2\% $\pm$ 40.2\% & 73.7\% [32.5\%, 99.6\%] & 76.94 \\
        \cmidrule(lr){2-8}
        \multirow{4}{*}{\rotatebox[origin=c]{90}{\texttt{PSRL}}}
        & CBF-RL & 1.78 [1.75, 1.80] & 4.72 [3.69, 4.91] & 0.27 [0.24, 0.30] & 99.8\% $\pm$ 0.003\% & 100\% [99.6\%, 100\%] & 0.00\\
        & MPS & 0.68 [0.55, 0.78] & 0.48 [0.43, 0.53] & 0.17 [0.14, 0.19] & 97.8\% $\pm$ 0.005\% & 97.8\% [97.5\%, 98.2\%] & 0.67 \\
        & \textcolor{oursABP}{\oursABP} & 0.97 [0.97, 0.98] & \textbf{0.43 [0.35, 0.55]} & 0.15 [0.14, 0.15] & \textbf{100\% $\pm$ 0.0\%} & \textbf{100\% [100\%, 100\%]} & \textbf{0.00} \\
        & \textbf{\textcolor{oursRA}{\oursSAV}} & \textbf{0.52 [0.51, 0.54]} & 0.83 [0.76, 0.88] & \textbf{0.11 [0.10, 0.11]} & \textbf{100\% $\pm$ 0.0\%} & \textbf{100\% [100\%, 100\%]} & \textbf{0.00} \\
        \bottomrule
    \end{tabular}
    }
\end{table}

\subsection{Quadrotor: Powerloop Maneuver}\label{sec: quadrotor_experiment}
The quadrotor state is $x=[\mathbf p^\top,\mathbf v^\top,\mathbf q^\top]^\top\in\R^{10}$, 
and the input is thrust $\acmd\in[0,4g]$ and body rates $\wcmd=[\omega_x,\omega_y,\omega_z]^\top\in[-18,18]^3\,\mathrm{rad/s}$. The reference is a powerloop inspired by~\citep{deepdroneacrobatics}: starting at $4.5$ m/s, the quadrotor tracks a vertical loop of radius $1.5$ m centered at $[0,0,2]^\top$ m while executing a $360^\circ$ flip. The safe set is $\calS=\{x:p_z\le 3\}$. 
Although the safety specification is a single altitude bound, the constraint is enforced through the full $10$-dimensional dynamics and all actuations.

\begin{figure}[ht!]
    \centering
    \includegraphics[width=0.8\linewidth]{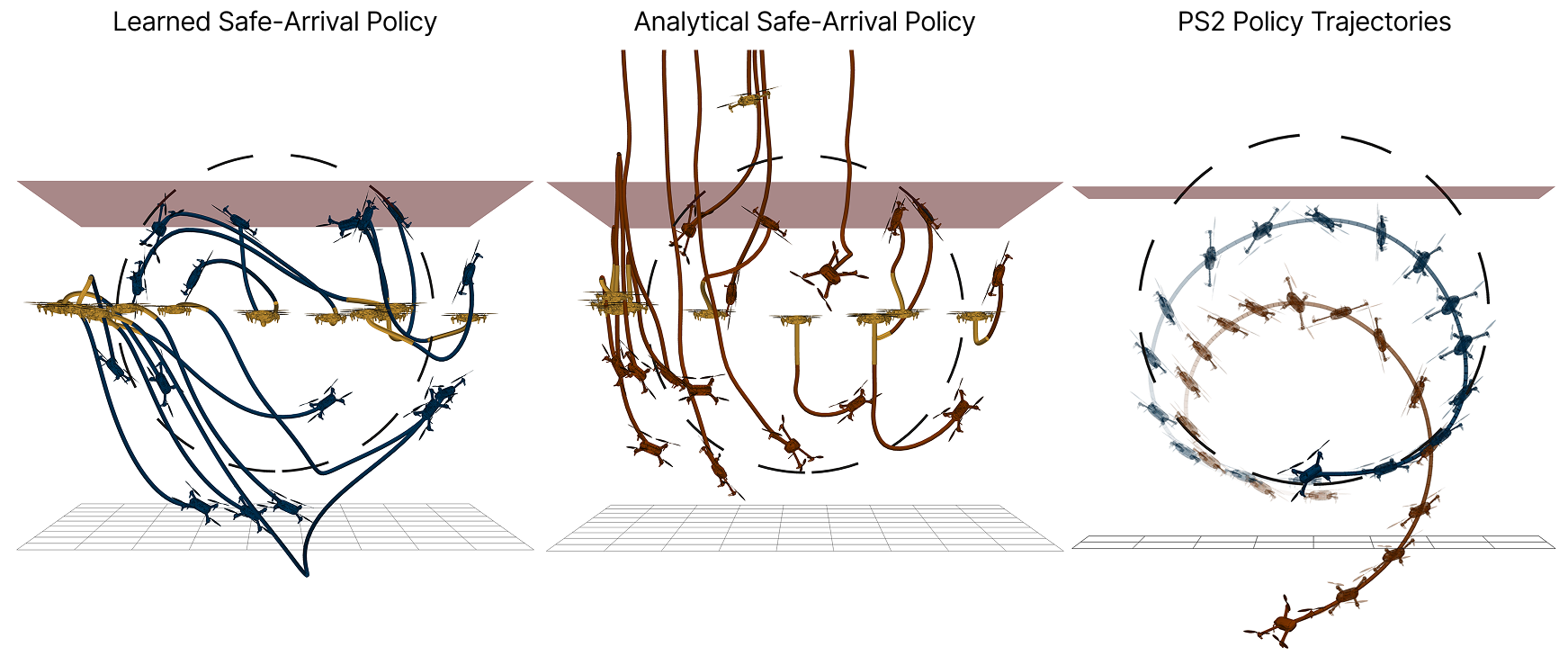}
    \caption{Safe-arrival policies in action: (left) \textbf{\textcolor{oursRA}{$\piarrL$ learned policy}} and (middle) \textbf{\textcolor{oursABP}{$\piarr$ analytic policy}}. The \textbf{\textcolor{lqrYellow}{base set trajectory segments}} are controlled by the \textbf{\textcolor{lqrYellow}{base controller $\pi_\B$}}. (Right) \ours\ policies tracking the powerloop reference (dashed line): \textbf{\textcolor{oursRA}{\oursSAV}} with \textbf{\textcolor{oursRA}{$\piarrL$}} and \textbf{\textcolor{oursABP}{\oursABP}} with \textbf{\textcolor{oursABP}{$\piarr$}}.}
    \label{fig: hero_quadrotor}
\end{figure}

Table~\ref{tab: hero_quadrotor} shows that \oursSAV{} is the best-performing method among all policies with $100\%$ safety. 
Relative to \oursABP{}, it reduces all RMSEs by approximately $40$--$57\%$. 
Compared to MPS, it reduces $\vb p$-RMSE by about $70\%$ despite MPS being given a longer recovery horizon. The penalty and CBF-RL baselines still exhibit violations, and the CMDP baselines show the limitation of enforcing safety only in expectation. The learned $\piarrL$ is especially important in this agile setting. 
Fig.~\ref{fig: hero_quadrotor} shows that $\piarrL$ recovers from task-relevant near-ceiling states much better compared to the analytic policy.
Quantitatively, the learned safe-arrival policy increases task-relevant recoverability from $60.9\%$ to $85.9\%$ over the analytic backup, with the largest gains near the ceiling ($35.5\%\!\to\!69.3\%$). 
This larger near-ceiling safe-arrival set explains the downstream tracking performance gap in Table~\ref{tab: hero_quadrotor}.

\begin{table}[htpb!]
    \centering
    \caption{Quadrotor powerloop experiment results across models trained with 10 different seeds.
    }
    \label{tab: hero_quadrotor}
    \resizebox{1.0\linewidth}{!}{%
    \begin{tabular}{clccc ccc}
        \toprule
        & & \multicolumn{3}{c}{\textbf{Tracking Performance (RMSE)}} & \multicolumn{3}{c}{\textbf{Safety Performance}}\\
        \cmidrule(lr){3-5} \cmidrule(lr){6-8}
        & \textbf{Safe RL} & \makecell{$\mathbf{p}$ (m) ($\downarrow$)\\IQM [95\% CI]} & \makecell{$\mathbf{v}$ (m/s) ($\downarrow$)\\IQM [95\% CI]} & \makecell{$\theta_\mathbf{q}$ (rad) ($\downarrow$)\\IQM [95\% CI]}  & \makecell{Total Safety ($\uparrow$)\\Safe Ep/Total Ep} & \makecell{Per-seed Safety ($\uparrow$)\\IQM [95\% CI]} & \makecell{Worst Viol. ($\downarrow$)\\$z$ (m)}\\
        \midrule
        \multirow{2}{*}{\rotatebox[origin=c]{90}{\texttt{Pen}}}
        & SAC-Pen$_{\text{low}}$ & 1.00 [0.72, 1.36] & 1.88 [1.51, 2.36] & 1.15 [0.74, 1.59] & 47.3\% $\pm$ 44.6\% & 45.5\% [7.2\%, 89.7\%] & 0.62  \\
        & SAC-Pen$_{\text{high}}$ & 1.14 [1.02, 1.48] & 2.24 [1.84, 2.63] & 1.54 [1.30, 1.79] & 99.9\% $\pm$ 6e-4\% & 100\% [99.9\%, 100\%] & 0.004 \\
        \cmidrule(lr){2-8}
        \multirow{2}{*}{\rotatebox[origin=c]{90}{\texttt{Con}}}
        & SAC-Lag. & 1.94 [1.70, 2.53] & 4.30 [3.97, 4.58] & 2.12 [1.77, 2.26] & 58.6\% $\pm$ 43.9\% & 64.3\% [19.0\%, 100\%] & 1.79 \\
        & CPO & 10.62 [7.08, 13.17] & 11.55 [9.28, 13.94] & 2.17 [2.04, 2.36] & 70.0\%  $\pm$ 45.8\% & 83.3\% [33.3\%, 100\%] & 2.41 \\
        \cmidrule(lr){2-8}
        \multirow{4}{*}{\rotatebox[origin=c]{90}{\texttt{PSRL}}}
        & CBF-RL & 2.29 [1.71, 2.97] & 4.36 [4.00, 4.64] & 2.06 [1.80, 2.29] & 99.8\% $\pm$ 0.005\% & 100\% [99.7\%, 100\%] & 0.21 \\
        & MPS & 1.99 [1.95, 2.05] & 4.07 [3.89, 4.21] & 1.38 [1.31, 1.51] & \textbf{100\% $\pm$ 0.0\%} & \textbf{100\% [100\%, 100\%]} & \textbf{0.00} \\
        & \textcolor{oursABP}{\oursABP} &  1.39 [1.30, 1.53] & 1.74 [1.62, 1.92] & 0.75 [0.71, 0.90] & \textbf{100\% $\pm$ 0.0\%} & \textbf{100\% [100\%, 100\%]} & \textbf{0.00} \\
        & \textbf{\textcolor{oursRA}{\oursSAV}} & \textbf{0.60 [0.53, 0.70]} & \textbf{0.96 [0.75, 1.18]} & \textbf{0.45 [0.33, 0.53]} & \textbf{100\% $\pm$ 0.0\%} & \textbf{100\% [100\%, 100\%]} & \textbf{0.00} \\
        
        \bottomrule
    \end{tabular}
    }
\end{table}

\paragraph{Ablations and computation.}
App.~\ref{app: backup-layer-ablation} isolates the control-invariant layer's role on the quadrotor task. 
Removing it after \oursRA{} training drops safety from $100\%$ to $0\%$, while adding it only as a post-hoc filter remains safe but yields $43$--$53\%$ worse tracking than end-to-end \oursRA{} training. 
App.~\ref{app: computation_time} shows that the computational overhead is modest. 
Safe-arrival policy training takes $14$ minutes for unicycle and $40$ minutes for quadrotor. 
\oursSAV{} training takes $4.6$ hours for unicycle and $13.6$ hours for quadrotor, and inference takes $0.35$ ms for unicycle and $0.80$ ms for quadrotor. 
All training and deployment were run as single-GPU jobs on a node with one NVIDIA V100 GPU.

\section{Conclusion and Limitations}\label{sec: conclusion}
In this work, we present \oursRA{}, a two-phase framework for safe RL that learns a backup policy with the \ralow{} function and then trains the task policy end-to-end through a differentiable control-invariant layer. 
By replacing explicit synthesis of control-invariant sets with learned backup rollouts and a safe control-space projection, \oursRA{} is a scalable RL framework that attains hard safety guarantees while retaining strong task performance. 
Experiments on lane keeping and a $10$-dimensional quadrotor powerloop show $100\%$ safety for \oursRA{} during deployment and substantially lower tracking error than the baselines, demonstrating its scalability, performance, and safety guarantees.

\paragraph{Limitations.}
The formal guarantees of \oursRA{} require a control-affine analytic dynamics model or a differentiable simulator. 
These assumptions are reasonable for provable safety: without prior knowledge of the system dynamics, formal guarantees for a neural-network policy are generally impossible.
Extending \oursRA{} to uncertain dynamics and perception-based constraints, as well as evaluating it on hardware, are important directions for future work.

\newpage

{\small
\bibliographystyle{plainnat}
\bibliography{main}
}

\newpage
\appendix

\etocdepthtag.toc{appendix}
{\LARGE\bf Appendix}
\vspace{1em}

\etocsettagdepth{}{none}              
\etocsettagdepth{appendix}{subsection}

\etocsettocstyle{}{}

\tableofcontents
\vspace{2em}
\hrule
\vspace{2em}

\newpage
\section{Notations and Symbols}\label{app: notations}

For convenience, we summarize the notation used in the paper's main text.
Appendix-only proof variables, algorithmic implementation variables, and experimental-detail symbols that do not appear in the main text are omitted.
Tables~\ref{tab: notation-sets}--\ref{tab: notation-bcbf-projection} group the remaining symbols by category, with the right column indicating where each symbol is first introduced.

\begin{table}[htpb]
    \centering
    \caption{Sets, spaces, and policy classes used in the main text.}
    \label{tab: notation-sets}
    \small
    \begin{tabular}{@{}p{0.20\linewidth}p{0.58\linewidth}p{0.16\linewidth}@{}}
        \toprule
        Symbol & Description & First used \\
        \midrule
        $\R$, $\R^n$, $\R^m$ & Real numbers; $n$-dim. real space; $m$-dim. real space & Sec.~\ref{subsec: safe_control} \\
        $n$, $m$ & State and control-input dimensions & Eq.~\eqref{eq: affine_dyn} \\
        $\X\subseteq\R^n$ & State space of interest & Eq.~\eqref{eq: affine_dyn} \\
        $\U\subseteq\R^m$ & Compact admissible input set encoding actuator limits & Eq.~\eqref{eq: affine_dyn} \\
        $\calS$ & Safe set, $\calS:=\{x\in\X:h_\calS(x)\ge0\}$ & Sec.~\ref{subsec: safe_control} \\
        $\B$ & Certified base set, $\B:=\{x:h_\B(x)\ge0\}\subseteq\calS$ & Sec.~\ref{subsec: safe_control} \\
        $\CT(\bp)$ & $T$-time backup-induced control-invariant set under policy $\bp$ & Def.~2.1 \\
        $\mathcal R(\B,\calS,f_{\pi_b},T)$ & States that reach $\B$ within horizon $T$ under $\bp$ while remaining in $\calS$ & Def.~2.1 \\
        $\F$ & Failure set, $\F:=\X\setminus\calS$ & Sec.~\ref{subsec: sav_func} \\
        $\Omega$, $\mu$ & Compact design region and finite reference measure for safe-arrival set maximization & Subproblem~\ref{prob: sub2} \\
        $\X_0$, $\rho_0$ & Initial-state support and initial-state distribution & Problem~\ref{prob: main} \\
        $C(\Omega,\U)$, $\|\cdot\|_\infty$ & Space of continuous maps $\Omega\to\U$ endowed with the sup norm $\|\pi\|_\infty:=\sup_{x\in\Omega}\|\pi(x)\|_\infty$ & Thm.~\ref{thm: ps2_uat} \\
        $\mathcal G_{\rm NN}$, $\mathcal G$ & Chosen pre-CIL policy architecture class and target policy class, with $\mathcal G_{\rm NN}$ assumed to universally approximate $\mathcal G$ & Thm.~\ref{thm: ps2_uat} \\
        $\Pi_{\ours}^{\mathcal G_{\rm NN}}(\lbpstar;\Omega)$ & Class of \ours{} policies obtained by composing the CIL with policies in $\mathcal G_{\rm NN}$ & Thm.~\ref{thm: ps2_uat} \\
        $\Pi_{\safe}$ & Feasible policy class for Problem~\ref{prob: main}, enforcing $x_k\in\calS$ and $u_k\in\U$ pointwise & Problem~\ref{prob: main} \\
        $\Pi_{\bcbf}(\bp)$ & BCBF-feasible policy class certified by a fixed backup policy $\bp$ & Sec.~\ref{sec: decompose_guarantee} \\
        $\Pi_{\bcbf}(\lbpstar;\Omega)$ & BCBF-feasible target policy subclass restricted to compact $\Omega$  & Thm.~\ref{thm: ps2_uat} \\
        \bottomrule
    \end{tabular}
\end{table}

\begin{table}[htpb]
    \centering
    \caption{Dynamics, time, and local-base-set quantities used in the main text.}
    \label{tab: notation-dynamics}
    \small
    \begin{tabular}{@{}p{0.20\linewidth}p{0.58\linewidth}p{0.16\linewidth}@{}}
        \toprule
        Symbol & Description & First used \\
        \midrule
        $x$, $u$ & Continuous-time state and control input; also state/action in the sampled MDP & Eq.~\eqref{eq: affine_dyn} \\
        $\dot x$ & Time derivative of the state & Eq.~\eqref{eq: affine_dyn} \\
        $f:\R^n\to\R^n$ & Drift of the control-affine system & Eq.~\eqref{eq: affine_dyn} \\
        $g:\R^n\to\R^{n\times m}$ & Control vector field of the control-affine system & Eq.~\eqref{eq: affine_dyn} \\
        $f_\pi(x)$ & Closed-loop vector field under policy $\pi$, $f_\pi(x):=f(x)+g(x)\pi(x)$ & Sec.~\ref{subsec: safe_control} \\
        $\Phi_\pi(x_0,t)$ & Closed-loop flow map under policy $\pi$ from initial state $x_0$ & Sec.~\ref{subsec: safe_control} \\
        $\flowB(x,t)$ & Backup flow map under $\bp$ & Def.~\ref{subsec: safe_control} \\
        $F$ & Deterministic sampled/ZOH transition map, $x_{k+1}=F(x_k,u_k)$ & Sec.~\ref{sec: rl-prelim} \\
        $t$, $t'$ & Continuous time and relative backup-rollout time & Sec.~\ref{subsec: safe_control},~\ref{sec: sub1} \\
        $k$, $\tau$ & Discrete timestep and dummy product/summation index & Sec.~\ref{sec: rl-prelim}, Eq.~\eqref{eq: save_q_func} \\
        $t_k$, $\dt$ & Sampling instant $t_k=k\dt$ for sampling period $\dt$ & Rem.~\ref{remark: ct_dt} \\
        $T$, $N$ & Backup horizon and backup-mesh size; for a mesh $0=t'_0<\cdots<t'_N=T$, there are $N+1$ rollout nodes and $T=N\dt$ & Def.~2.1, Sec.~\ref{sec: phase1} \\
        $\Psi_\bp(x,t)$ & Sensitivity of the backup flow, $\partial\Phi_\bp(x,t)/\partial x$ & Sec.~\ref{sec: sub1} \\
        $x^\star$, $u^\star$ & Equilibrium state and input satisfying $f(x^\star)+g(x^\star)u^\star=0$ & Assump.~\ref{assume: local_stable} \\
        $A$, $B$ & Linearization matrices about $(x^\star,u^\star)$; distinct from the set $\B$ & Assump.~\ref{assume: local_stable} \\
        $K$ & Linear feedback gain for the base controller $\pi_\B(x)=u^\star-K(x-x^\star)$ & Thm.~\ref{thm: base_set} \\
        $P\succ0$ & Positive-definite matrix defining the local ellipsoidal base set & Thm.~\ref{thm: base_set} \\
        $c$, $\bar c$, $\B_c$ & Base set level, base set upper-level, and local base set $\B_c=\{x:(x-x^\star)^\top P(x-x^\star)\le c\}$ & Thm.~\ref{thm: base_set} \\
        \bottomrule
    \end{tabular}
\end{table}

\begin{table}[htpb]
    \centering
    \caption{Policies, objectives, and value functions used in the main text.}
    \label{tab: notation-policies-values}
    \small
    \begin{tabular}{@{}p{0.20\linewidth}p{0.58\linewidth}p{0.16\linewidth}@{}}
        \toprule
        Symbol & Description & First used \\
        \midrule
        $\pi:\X\to\U$ & Generic state-feedback policy & Sec.~\ref{subsec: safe_control} \\
        $\pinom$ & Neural RL policy parameterized by $\phi$ & Sec.~\ref{sec: rl-prelim} \\
        $\bp$ & Backup policy used to induce $\CT(\bp)$ & Sec.~\ref{subsec: safe_control} \\
        $\pi_\B$ & Base controller that renders $\B$ forward invariant & Sec.~\ref{sec: sub2} \\
        $\piarr$ & Safe-arrival policy used outside $\B$ in the composed backup policy & Eq.~\eqref{eq: bp_decompose} \\
        $\piarrL$ & Learned safe-arrival policy with parameters $\theta$ & Subproblem~\ref{prob: sub2} \\
        $\lbp$ & Composed learned backup policy induced by $\pi_\B$ and $\piarrL$ & Subproblem~\ref{prob: sub2} \\
        $\piarrstar$, $\lbpstar$ & Trained safe-arrival policy and fixed composed backup policy used in Phase~II & Sec.~\ref{sec: main_method} \\
        $\pi^\phi_{\mathrm{\ours}}(x;\lbpstar)$ & \ours{} policy obtained by projecting $\pinom$ through the control-invariant layer & Def.~\ref{def: ps2_policy} \\
        $r(x,u)$ & MDP reward function for task performance & Sec.~\ref{sec: rl-prelim} \\
        $J(\pi)$, $J(\pinom)$ & Expected discounted return objective & Sec.~\ref{sec: rl-prelim}, Problem~\ref{prob: main} \\
        $\gamma\in(0,1)$ & Discount factor for the task RL objective & Sec.~\ref{sec: rl-prelim} \\
        $Q_\gamma^\pinom(x,u)$ & State-action value function for the task RL objective under $\pinom$ & Sec.~\ref{sec: rl-prelim} \\
        $\beta\in(0,1)$ & Discount factor for the safe-arrival value objective & Def.~\ref{def: save_arrival_value} \\
        $Q_{\mathrm{SA},\beta}^{\piarr}$, $V_{\mathrm{SA},\beta}^{\piarr}$ & Safe-arrival Q- and state-value functions under $\piarr$ & Eq.~\eqref{eq: save_q_func} \\
        $J_{\mathrm{SA},\beta}(\theta)$ & Safe-arrival objective optimized in Phase~I & Eq.~\eqref{eq: save_phase1_surrogate} \\
        $\rho_{\mathrm{arr}}$ & Design distribution for Phase~I safe-arrival training & Sec.~\ref{sec: phase1} \\
        $\mu_{\mathrm{SA}}(\piarrL;\Omega,\B)$ & Measure objective for safe-arrival set maximization & Eq.~\eqref{eq: backup_recoverable_set_obj} \\
        \bottomrule
    \end{tabular}
\end{table}

\begin{table}[htpb]
    \centering
    \caption{Barrier, indicator, and projection quantities used in the main text.}
    \label{tab: notation-bcbf-projection}
    \small
    \begin{tabular}{@{}p{0.20\linewidth}p{0.58\linewidth}p{0.16\linewidth}@{}}
        \toprule
        Symbol & Description & First used \\
        \midrule
        $h_\calS$, $h_\B$ & Differentiable set functions defining $\calS$ and $\B$ & Sec.~\ref{subsec: safe_control} \\
        $\nabla h(x)$ & Gradient of a differentiable barrier/certificate function & Sec.~\ref{subsec: safe_control} \\
        $\alpha$, $\alpha_\calS$, $\alpha_\B$ & Extended class-$\K_\infty$ functions used in CBF/BCBF inequalities & Sec.~\ref{subsec: safe_control}, Eq.~\eqref{eq: bcbf_set} \\
        $h_\CT$ & Implicit BCBF defining the backup-induced set $\CT$ & Eq.~\eqref{eq: bcbf_implicit_set} \\
        $\D_{t'}^\calS(x,u)$, $\D_T^\B(x,u)$ & BCBF derivative terms for the safe-set rollout and terminal base-set condition & Sec.~\ref{sec: sub1}\\
        $\Kb(x)$ & Generic state-dependent convex constraint set in a projection layer & Def.~\ref{def: safety_proj_layer} \\
        $\Kb_{\bcbf}(x;\bp)$ & BCBF-admissible input set for a fixed backup policy $\bp$ & Eq.~\eqref{eq: bcbf_set} \\
        $\calP$ & Constraint projection layer based on \hardCVX & Def.~\ref{def: safety_proj_layer} \\
        $\calP_\BL$ & Control-invariant layer, i.e., the BCBF-QP projection onto $\Kb_{\bcbf}(x;\lbpstar)$ used by \oursRA & Def.~\ref{def: backup_layer} \\
        $\|\cdot\|_2$ & Euclidean norm used in projection objectives & Def.~\ref{def: safety_proj_layer} \\
        $\vb1_\B$, $\vb1_\F$ & Indicators of the base set $\B$ and failure set $\F$ & Sec.~\ref{subsec: sav_func} \\
        $\vb b(x)$, $\vb f(x)$, $\vb c(x)$ & Indicators of $\B$, $\F$, and the continuation set, with $\vb c=1-\vb b-\vb f$ & Sec.~\ref{subsec: sav_func} \\
        $n_\calS$, $n_\B$ & Numbers of scalar safe-set rollout constraints and terminal base-set constraints used in the finite-mesh CIL construction & Sec.~\ref{subsec: scalability} \\
        \bottomrule
    \end{tabular}
\end{table}

\newpage

\section{Related Work}\label{app: related_work}

In this section, we review prior work related to our proposed framework. We organize existing methods into five categories.

\paragraph{Safe RL via Constrained Policy Optimization.}
This line of work imposes constraints from learned value functions through constrained policy optimization, such as Lagrangian methods. 
A widely adopted formulation is the constrained Markov decision process (CMDP) framework \citep{cmdp}, where an auxiliary cost function is introduced and the expected trajectory cost is constrained to remain below a predefined threshold \citep{cpo, chow2017risk, tessler2018reward, ha2021sacLag}. 
More recently, state-wise safe RL methods \citep{zhao2023statewisesafereinforcementlearning} have gained attention: instead of enforcing safety in expectation through a cost value function, they learn a neural certificate function and impose it as a constraint to encourage state-wise safety \citep{ma2022joint, yu2022reachability, li2024safe}. However, neither class of methods can guarantee the safety of the learned policy. 
First, the constraints are provided by learned neural-network values, which are not provably reliable in general. 
Second, constrained policy optimization itself does not guarantee that the outputs of the policy network will satisfy the imposed constraints. 
Although these methods scale to expressive policies and high-dimensional systems, and can achieve good empirical performance in some cases, catastrophic failures may still occur during deployment due to the lack of formal safety guarantees. 
This gap motivates the study of provably safe RL.

\paragraph{Safe RL via Verified Certificate.}
A second line of work attaches a formal safety certificate to the policy. 
Unlike neural certificate functions learned purely from loss functions, formal safety guarantees require the synthesis and verification of a valid certificate. 
Common certificate functions include control barrier function (CBF)~\citep{ames2014control, ames2017cbf, ames2019control}, safety index (SI)~\citep{liu2014control, wei2019safe}, and reachability value function \citep{bansal2017hamilton, mitchell2005HJ}. 
Although they take different forms, these certificates can be viewed as functional representations of control-invariant sets, characterizing states from which persistent safety can be maintained under an admissible controller. 
In safe RL, such certificates are typically used as safety filters or shields that modify unsafe actions proposed by a learned policy ~\citep{cheng2019cbfRL, cbfRL, tonkens2022cbfWithHJ, alshiekh2018shielding, zhao2025implicitsafeset, bastani2021mps}.

The main challenge is that obtaining a valid certificate amounts to synthesizing or verifying a control-invariant set, which poses significant scalability challenges. 
Hamilton--Jacobi (HJ) reachability analysis~\citep{bansal2017hamilton, mitchell2005HJ} provides a formal mathematical framework for computing maximal control-invariant sets for nonlinear systems. 
However, it requires solving a nonlinear partial differential equation (PDE) over a discretized state space, whose computational cost scales exponentially with the state dimension.
Sum-of-squares (SOS) optimization~\citep{jarvis2003some, korda2014convex, siaGo2025yun, raCBF-ours, dai2022convex, clark2022semi} synthesizes polynomial safety certificates by relaxing polynomial nonnegativity conditions into SOS constraints, which can be solved via semidefinite programming. 
While effective for polynomial dynamics and semialgebraic safe sets, SOS methods scale poorly with state dimension and polynomial degree, and often produce conservative invariant sets that are much smaller than the maximal sets computed by HJ reachability when the latter is tractable.

Compared to the above approaches, \oursRA{} avoids the fundamental bottleneck of explicit invariant-set synthesis by learning a backup policy that induces an implicit control-invariant set online. 
This implicit set is then used to provide formal safety guarantees while enabling scalability to higher-dimensional systems.

\paragraph{Backup Control Barrier Functions.}
Our framework builds on the backup control barrier function (BCBF) framework~\citep{backupCBF, gurriet2020backupcbf_access, singletary2020sampleddata}. 
During online deployment, a BCBF certifies the set of states from which a fixed deterministic backup policy can return the system to a small base set within a finite horizon while remaining inside the safe set. 
This construction yields an implicit control-invariant set, whose associated safety-filter constraint is of relative degree one by construction and can explicitly account for input limits~\citep{ames2019control}. 
However, the backup policy is typically hand-designed, such as an LQR around an equilibrium~\citep{anderson2007optimal, khalil2002nonlinear}, and the resulting implicit invariant set is therefore often limited in size~\citep{kim2026backupbasedsafetyfilterscomparative}. 
\oursRA{} departs from this line of work in two key aspects. 
First, we learn the backup policy using the proposed safe-arrival value function, which characterizes time-optimal safe recovery to the base set.
Second, the BCBF constraints are embedded into a control-invariant layer via differentiable projection~\citep{hardnet, qpax}, enabling the RL policy to be trained end-to-end while preserving formal safety guarantees, rather than using the safety filter only as a post-hoc correction during deployment.

\paragraph{RL for Reach-Avoid Specifications.}
Optimal control for reach-avoid specifications~\citep{margellos2011hamilton} jointly captures the objective of reaching a target set while avoiding a failure set, making it a natural formulation for designing backup policies. 
However, classical optimal-control approaches for reach-avoid tasks are computationally expensive and quickly become intractable for high-dimensional systems. 
More recently, RL-based methods have been proposed to solve reach-avoid problems with improved scalability, making them more tractable for complex dynamical systems \citep{hsu2021reachavoidRL, li2023learning}.
\citet{hsu2021reachavoidRL} propose a time-discounted reach-avoid Bellman equation built from dense target and safety-margin functions, typically instantiated with signed-distance-like geometric quantities inside a nested max--min backup. 
Although this formulation captures reach-avoid semantics, the learned critic depends not only on the logical event of reaching the target before failure, but also on the choice, scale, and geometry of these task-specific shaping functions. 
This makes the optimization sensitive and difficult in practice: stable learning requires warm-starting with a standard RL objective and progressively annealing the discount factor.
In contrast, our method learns an indicator-style probabilistic critic that directly represents the safe-arrival event. 
The critic is aligned with the underlying reach-avoid specification rather than an auxiliary dense geometric surrogate, leading to cleaner credit assignment and easier optimization. 
As a result, our RL implementation follows a standard training pipeline and learns effective policies without task-specific heuristics. 
Moreover, although our main theoretical development focuses on deterministic systems and deterministic backup policies, the safe-arrival value function is probabilistic by construction and naturally extends to stochastic dynamics and stochastic policies.

\paragraph{Differentiable Optimization Layers.}
This line of work embeds optimization problems as differentiable layers inside neural networks. 
OptNet~\citep{amos2017optnet} differentiates through quadratic programs via implicit differentiation of the Karush--Kuhn--Tucker (KKT) conditions, enabling constrained optimization modules to be trained end-to-end. 
\hardnet~\citep{hardnet} provides a unified framework for efficiently training neural networks under general affine and convex constraints, with universal-approximation guarantees for networks equipped with such constrained layers. Recent variants include LMI-Net~\citep{tang2026lmi}, which specializes to linear matrix inequalities, and HardNet++~\citep{goertzen2026hardnet++}, which handles more general nonlinear constraints.
The control-invariant layer proposed in \oursRA{} inherits the universal-approximation guarantee of \hardCVX{}. 
Unlike prior differentiable layers, however, its constraints are not specified directly in closed form; instead, they are induced by the rollout of a learned backup policy. 
This allows \oursRA{} to enforce rollout-based control invariance while retaining the expressiveness needed for high-performance policies on high-dimensional systems.

\section{Background on Safe Control Theory}\label{app: safe_control_background}

This appendix provides the safe control background used in Sec.~\ref{subsec: safe_control} and Sec.~\ref{sec: sub1}. 
We focus on the deterministic, control-affine system~\eqref{eq: affine_dyn} in Sec.~\ref{subsec: safe_control}. 
For convenience, the system is restated below:
\begin{align}\label{app_eq: affine_system}
    \dot x = f(x)+g(x)u,\qquad x\in\X\subseteq\R^n,\quad u\in\U\subseteq\R^m,
\end{align}
where $\U$ encodes actuator limits and $\X$ is the state space of interest.
The functions $f:\R^n\to\R^n$ and $g:\R^n\to\R^{n\times m}$ are assumed locally Lipschitz, so that the closed-loop trajectories are well defined under the feedback policies considered in this work.

\subsection{Control Invariance}
As mentioned in Sec.~\ref{subsec: safe_control}, safety of the system~\eqref{app_eq: affine_system} is specified through a safe set 
\begin{align}\label{app_eq: safe_set}
    \calS:=\{ x\in\X: h_\calS(x)\ge0 \}
\end{align}
where $h^\calS:\R^n\rightarrow\R$ is continuously differentiable. 
Importantly, not every state in $\calS$ is necessarily safe in the dynamical sense. 
Under bounded inputs, there may be states inside $\calS$ from which all admissible control inputs eventually lead to a safety violation.
Thus, safe control typically seeks an invariant subset of $\calS$, rather than enforcing membership in $\calS$ alone~\citep{ames2019control, liu2014control, wei2019safe}.
\begin{definition}[Control Invariance]
     A set $\mathcal I\subseteq \calS$ is control-invariant if there exists a state-feedback policy $\pi:\X\rightarrow\U$ such that, for every initial condition $x(0)\in \mathcal I$, the corresponding closed-loop trajectory satisfies $x(t)\in\mathcal I$ forall $t\ge0$. 
\end{definition}
This definition is the controlled analogue of Nagumo's set-invariance condition~\citep{nagumo}.
In the ideal case, one would compute the maximal control-invariant subset of $\calS$ and restrict the controller to remain inside it.
However, computing such a set is difficult for nonlinear systems with input limits.

\subsection{Control Barrier Function}
Control barrier functions provide a differential certificate for such invariant subsets.
Let 
\begin{align}\label{app_eq: cbf_set}
    \C := \{ x\in\X: h_\C(x)\ge0 \}\subseteq\calS
\end{align}
be a candidate control-invariant set.
A control barrier function certifies that, at each point in $\C$, there exists an admissible input that prevents $h_\C$ from decreasing too quickly~\citep{ames2014control}.

\begin{definition}[Control Barrier Function]
     A continuously differentiable function $h_\C:\R^n\rightarrow\R$ is a control barrier function (CBF)~\citep{ames2014control} for the zero-superlevel set $\C:=\{ x: h_\C(x)\ge0\}\subseteq \calS$ if there exists an extended class-$\mathcal K_\infty$ function $\alpha:\R\to\R$ such that, for all $x\in\C$,
    \begin{align}\label{app_eq: cbf_constraint}
        \sup_{u\in\U}\left\{ \nabla h_\C(x)^\top \left( f(x)+g(x)u \right) +\alpha(h_\C(x))\right\} \ge0.
    \end{align}
\end{definition}

The condition~\eqref{app_eq: cbf_constraint} guarantees that the following state-dependent feasible input is nonempty:
\begin{align}\label{app_eq: feasible_input_set}
    \Kb_\C(x) : = \left\{ u\in\U: \nabla h_\C(x)^\top \left(f(x)+g(x)u\right)+\alpha(h_\C(x))\ge 0 \right\}.
\end{align}
A standard CBF safety filter minimally modifies a nominal input $\unom$ by solving
\begin{align}\label{eq: app_cbf_qp}
    &\usafe(x) = \argmin_{u\in\U} \left\| u-\unom \right\|_2^2\quad \\
    &\mathrm{s.t.} \quad \nabla h_\C(x)^\top\left( f(x) + g(x)u \right) + \kappa\left(h_\C(x)\right) \ge0.
\end{align}
When $\U$ is convex, this is a convex quadratic program since the CBF constraint~\eqref{app_eq: cbf_constraint} is affine in $u$~\citep{ames2017cbf}.

\begin{proposition}[CBF Invariance Guarantee]
    Suppose $h_\C$ is a CBF for $\C$ and a locally Lipschitz feedback policy
    $\pi$ satisfies $\pi(x)\in\Kb_\C(x)$ for all $x\in\C$. Then $\C$ is forward
    invariant under $\pi$. Consequently, if $x(0)\in\C$, then $x(t)\in\C\subseteq
    \calS$ for all $t\ge0$.
\end{proposition}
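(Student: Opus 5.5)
The plan is a standard Lyapunov/comparison argument along closed-loop trajectories. Fix an initial condition $x(0)\in\C$ and let $x(t)$ be the solution of $\dot x = f(x)+g(x)\pi(x)$. This solution exists and is unique on some maximal interval $[0,t_{\max})$, because $f$, $g$ and $\pi$ are locally Lipschitz. Define the scalar function $\eta(t):=h_\C(x(t))$. It is continuously differentiable in $t$, since $h_\C$ is $C^1$ and $x(\cdot)$ is $C^1$. The goal is to show $\eta(t)\ge 0$ for all $t\in[0,t_{\max})$.

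\emph{Step 1: differential inequality.} The difficulty is that the hypothesis $\pi(x)\in\Kb_\C(x)$ is only assumed on $\C$. It therefore gives the inequality
\begin{align*}
    \dot\eta(t)=\nabla h_\C(x(t))^\top\big(f(x(t))+g(x(t))\pi(x(t))\big)\ge -\alpha(\eta(t))
\end{align*}
only at times $t$ with $x(t)\in\C$. So we cannot apply the comparison lemma globally. Instead we argue by contradiction.

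\emph{Step 2: contradiction via a first exit time.} Suppose $\eta(t_1)<0$ for some $t_1$. Let $t_0:=\sup\{t\le t_1:\eta(t)\ge 0\}$. By continuity, $\eta(t_0)=0$ and $\eta<0$ on $(t_0,t_1]$. This is the main obstacle: on $(t_0,t_1]$ the trajectory lies outside $\C$, so the constraint gives no information there. Following the usual treatment, I would handle it in one of two ways.
\begin{enumerate}[label=(\alph*)]
    \item Invoke the standing convention in the CBF literature that the inequality holds on an open neighborhood $D\supseteq\C$. Then, for $t$ slightly beyond $t_0$, we have $\dot\eta\ge-\alpha(\eta)$. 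The comparison lemma, applied against the unique solution $\zeta\equiv 0$ of $\dot\zeta=-\alpha(\zeta)$, $\zeta(t_0)=0$, gives $\eta\ge 0$ near $t_0$. Uniqueness of this scalar ODE needs $\alpha$ locally Lipschitz, which I would assume, as is standard.
    \item Apply Nagumo's theorem directly. The condition $\dot\eta\ge-\alpha(0)=0$ on $\partial\C$ together with regularity ($\nabla h_\C\neq 0$ on $\partial\C$) gives the subtangentiality condition. This yields forward invariance of the closed set $\C$ for a flow with unique solutions.
\end{enumerate}
Either route contradicts $\eta<0$ on $(t_0,t_1]$.

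\emph{Step 3: completion.} We have shown $x(t)\in\C$ for all $t\in[0,t_{\max})$. Under the compactness of $\C$ (or of $\X$), the trajectory stays in a compact set. Hence $t_{\max}=\infty$, so the trajectory is defined and remains in $\C$ for all $t\ge0$. Finally, the inclusion $\C\subseteq\calS$ gives $x(t)\in\calS$ for all $t\ge 0$, which is the claim.
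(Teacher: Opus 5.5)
The paper gives no argument of its own here; it only cites Cor.~2 of Ames et al. Your self-contained argument is therefore a genuinely different and more informative route, and it is correct under the hypotheses you add.

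What it buys over the citation is that it exposes a real hypothesis mismatch. The proposition as literally stated is false: it requires the inequality and $\pi(x)\in\Kb_\C(x)$ only on $\C$, with no regularity of $h_\C$. So your extra assumptions are necessary, not cosmetic. For example, take $f(x)=(0,1)^\top$, $g(x)=(1,0)^\top$ and $h_\C(x)=-x_2^2$. Then $\C=\{x_2=0\}$ and $\nabla h_\C=0$ on $\C$, so the CBF inequality holds on $\C$ for every $\alpha$. Hence $\Kb_\C(x)=\U$ and $\pi\equiv 0$ is admissible, yet $x(t)=(0,t)^\top$ leaves $\C$ at once. This example violates both of your routes:
\begin{itemize}
\item route (a), because for small $x_2>0$ you get $\dot h_\C=-2x_2<0<-\alpha(h_\C)$, and no input helps since $\nabla h_\C^\top g=0$;
\item route (b), because $\nabla h_\C=0$ on $\partial\C$.
\end{itemize}
Route (a) matches the cited result, which is formulated for a CBF defined on a set $D\supseteq\C$ with the controller satisfying the constraint on all of $D$. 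Route (b) keeps the condition on $\C$ alone and pays for it with the regularity assumption $\nabla h_\C\neq 0$ on $\partial\C$.

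Two refinements. First, in route (a) you do not need $\alpha$ to be locally Lipschitz. For $t_2>t_0$ close enough to $t_0$ that $x(t)\in D$, you have $\eta<0$ on $(t_0,t_2]$, hence $\dot\eta\ge-\alpha(\eta)>0$ there. The mean value theorem then gives $\eta(t_2)>\eta(t_0)=0$, a contradiction, with no appeal to uniqueness of the comparison ODE. Second, the compactness you assume in Step 3 is needed only because the paper's notion of invariance asks for all $t\ge0$. Under the usual convention of invariance on the maximal interval of existence, you could drop it.
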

\begin{proof}
    Proof in \citep[Cor.~2]{ames2017cbf}.
\end{proof}

The main challenge is not using a CBF once it is known, but synthesizing a valid $h_\C$ whose superlevel set is control-invariant under the bounded input set $\U$.
The backup-CBF construction addresses this by avoiding the explicit synthesis of a global invariant set.

\subsection{Backup Control Barrier Function} 
Backup control barrier functions~\citep{backupCBF, gurriet2020backupcbf_access} start from a small certified base set and enlarge it implicitly using the rollout of a deterministic backup policy. Let
\begin{align}
    \B := \{x\in\X:h_\B(x)\ge0\}\subseteq\calS
\end{align}
be a known control-invariant base set, and let $\pi_B:\B\to\U$ be a certified base controller. 
In \oursRA, this base pair is constructed locally around an equilibrium in Thm.~\ref{thm: base_set}. 
Let $\bp:\X\to\U$ be a deterministic backup policy that agrees with the certified base controller on $\B$, i.e., $\bp(x)=\pi_B(x)$ for $x\in\B$. 
Define the closed-loop backup trajectory as
\begin{align}
    f_\bp(x):=f(x)+g(x)\bp(x),
\end{align}
and let $\flowB(x,t)$ denote the corresponding flow map.

\begin{definition}[Backup-induced Control-Invariant Set]
    Given a backup horizon $T>0$, the $T$-time backup-induced set is
    \begin{align}\label{eq: app_bcbf_set}
        \CT(\bp)
        :=
        \mathcal R(\B,\calS,f_\bp,T)
        :=
        \left\{
        x\in\X:
        \flowB(x,T)\in\B
        \;\wedge\;
        \flowB(x,\td)\in\calS\quad \forall \td\in[0,T]
        \right\}.
    \end{align}
\end{definition}
Thus, $\CT(\bp)$ contains exactly the states from which the backup policy reaches the certified base set within time $T$ while remaining in the safe set throughout the backup rollout.

\begin{proposition}[Backup-Induced Invariance]
    If $\B$ is control-invariant under $\pi_B$ and $\bp=\pi_B$ on $\B$, then     $\B\subseteq\CT(\bp)\subseteq\calS$, and $\CT(\bp)$ is control-invariant.
\end{proposition}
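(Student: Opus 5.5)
The proof splits into three parts: the two inclusions $\B\subseteq\CT(\bp)$ and $\CT(\bp)\subseteq\calS$, and then control invariance, which I will show using the backup policy $\bp$ itself as the witnessing feedback. Throughout I use that $\bp=\pi_B$ on $\B$ and that $f,g$ are locally Lipschitz, so the backup flow $\flowB$ is well defined. I also use the semigroup property $\flowB(\flowB(x,s),t)=\flowB(x,s+t)$.

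\emph{Inclusions.} Take $x\in\B$. Since $\bp=\pi_B$ on $\B$ and $\B$ is forward invariant under $\pi_B$, the closed-loop trajectory starting at $x$ coincides with the $\pi_B$-trajectory for as long as it stays in $\B$. By invariance it stays in $\B$ for all $t\ge0$; formally, this follows from uniqueness of solutions of the locally Lipschitz field $f_{\pi_B}$ restricted to $\B$. Hence $\flowB(x,\td)\in\B\subseteq\calS$ for all $\td\in[0,T]$, and in particular $\flowB(x,T)\in\B$. This gives $x\in\CT(\bp)$. For the other inclusion, any $x\in\CT(\bp)$ satisfies $x=\flowB(x,0)\in\calS$ by the defining condition at $\td=0$.

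\emph{Invariance.} Fix $x\in\CT(\bp)$ and $s\ge0$, and set $y=\flowB(x,s)$. There are two cases.
\begin{itemize}
\item If $s\le T$: for $\td\in[0,T-s]$ we have $\flowB(y,\td)=\flowB(x,s+\td)\in\calS$ by membership of $x$. For $\td\in[T-s,T]$ we have $\flowB(y,\td)=\flowB(\flowB(x,T),\td-(T-s))$, which lies in $\B\subseteq\calS$ by the forward invariance shown above applied to $\flowB(x,T)\in\B$. The same argument at $\td=T$ gives $\flowB(y,T)\in\B$. So $y\in\CT(\bp)$.
\item If $s>T$: $y=\flowB(\flowB(x,T),s-T)\in\B\subseteq\CT(\bp)$ by the first inclusion.
\end{itemize}
Thus the trajectory under the feedback $\bp$ never leaves $\CT(\bp)$, which is control invariance with $\pi=\bp$. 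The admissibility $\bp(x)\in\U$ holds by definition of $\bp:\X\to\U$.

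\emph{Main obstacle.} The delicate point is the step ``the $\bp$-trajectory from a point of $\B$ stays in $\B$.'' This requires identifying the $\bp$-flow with the $\pi_B$-flow, and $\bp$ may be discontinuous across $\partial\B$, as in the switched form \eqref{eq: bp_decompose}. My plan is to assume, as in the BCBF literature, that $\bp$ is locally Lipschitz so that solutions are unique. Forward invariance of $\B$ under $\pi_B$ then implies that the unique solution of $\dot x=f_{\bp}(x)$ from $\B$ agrees with the $\pi_B$-solution and remains in $\B$. If only the sampled (ZOH) setting of Remark~\ref{remark: ct_dt} is intended, the same argument goes through verbatim with $F$ in place of the flow and $s,\td$ restricted to integer multiples of $\dt$, where uniqueness is automatic.
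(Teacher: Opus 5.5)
Your proof is correct, and it is the standard flow-composition (semigroup) argument behind the result; the paper does not prove this itself but cites it as \citep[Thm.~1]{backupCBF}, and the uniqueness assumption on the $\bp$ closed loop that you flag is the standing regularity assumption there (matching the paper's ``closed-loop trajectories are well defined''), so it is not an extra gap. One small correction: what identifies $\flowB(x,\cdot)$ with the $\pi_B$-trajectory for $x\in\B$ is uniqueness of solutions of $\dot x=f_{\bp}(x)$, not of $f_{\pi_B}$ restricted to $\B$, as your final paragraph (rather than your first) correctly says.
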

\begin{proof}
    Proof in~\citep[Thm.~1]{backupCBF}.
\end{proof}

The set $\CT(\bp)$ can be represented as the zero-superlevel set of the implicit CBF
\begin{align}\label{app_eq: bcbf_barrier}
    h_\CT(x)
    =
    \min
    \left\{
    \min_{\td\in[0,T]} h_\calS\!\left(\flowB(x,\td)\right),
    \;
    h_\B\!\left(\flowB(x,T)\right)
    \right\}.
\end{align}
The nonsmooth minimum in~\eqref{app_eq: bcbf_barrier} makes a direct CBF-QP inconvenient. Instead, BCBF enforces sufficient CBF inequalities pointwise along the backup rollout and at the terminal base-set condition.

Let
\begin{align}
    \Psi_\bp(x,\td):=\frac{\partial \flowB(x,\td)}{\partial x}
\end{align}
denote the sensitivity of the backup flow with respect to its initial condition.
When $f_\bp$ is differentiable, $\Psi_\bp$ satisfies the variational equation
\begin{align}\label{eq: app_sensitivity_ode}
    \frac{d}{d\td}\Psi_\bp(x,\td)
    =
    \frac{\partial f_\bp}{\partial x}\!\left(\flowB(x,\td)\right)
    \Psi_\bp(x,\td),
    \qquad
    \Psi_\bp(x,0)=I.
\end{align}

For a candidate current input $u$, define the rollout and terminal derivative terms
\begin{align}\label{eq: app_bcbf_derivatives}
    \D_\td^\calS(x,u)
    &:=
    \nabla h_\calS\!\left(\flowB(x,\td)\right)^\top
    \left[
    \Psi_\bp(x,\td)\big(f(x)+g(x)u\big)
    -
    f_\bp\!\left(\flowB(x,\td)\right)
    \right],
    \\
    \D_T^\B(x,u)
    &:=
    \nabla h_\B\!\left(\flowB(x,T)\right)^\top
    \Psi_\bp(x,T)\big(f(x)+g(x)u\big).
\end{align}

The subtraction term in $\D_\td^\calS$ appears because the safety constraints along the backup rollout are indexed by future time.
As real time advances, the same future point on the backup rollout moves closer by the backup dynamics $f_\bp$. 
The terminal condition uses a fixed terminal horizon $T$, so it has no corresponding subtraction term.

Given extended class-$\K_\infty$ functions $\alpha_\calS$ and $\alpha_\B$, the BCBF-admissible input set is
\begin{equation}\label{eq: app_bcbf_admissible_set}
\begin{split}
    \Kb_\bcbf(x;\bp)
    :=
    \big\{
    u\in\U:\;
    &\D_\td^\calS(x,u)
    +
    \alpha_\calS\!\left(h_\calS\!\left(\flowB(x,\td)\right)\right)
    \ge0
    \quad \forall \td\in[0,T],
    \\
    &\D_T^\B(x,u)
    +
    \alpha_\B\!\left(h_\B\!\left(\flowB(x,T)\right)\right)
    \ge0
    \big\}.
\end{split}
\end{equation}
For fixed $x$ and fixed $\bp$, the rollout $\flowB(x,\td)$ and sensitivity $\Psi_\bp(x,\td)$ are constants with respect to the optimization variable $u$.
Therefore, all constraints in~\eqref{eq: app_bcbf_admissible_set} are affine in $u$, and $\Kb_\bcbf(x;\bp)$ is convex whenever $\U$ is convex.

The corresponding BCBF-QP is
\begin{align}\label{eq: app_bcbf_qp}
    \usafe(x)
    =
    \argmin_{u}\quad
    &\|u-\unom(x)\|_2^2
    \\
    \mathrm{s.t.}\quad
    &u\in \Kb_\bcbf(x;\bp).
\end{align}

\begin{proposition}[BCBF Feasibility and Safety]
    For every $x\in\CT(\bp)$, the set $\Kb_\bcbf(x;\bp)$ is nonempty. 
    In particular, $\bp(x)\in\Kb_\bcbf(x;\bp)$. 
    Moreover, any locally Lipschitz policy $\pi$ that satisfies $\pi(x)\in\Kb_\bcbf(x;\bp)$ for all $x\in\CT(\bp)$ renders $\CT(\bp)$ forward invariant, and therefore keeps the trajectory inside $\calS$.
\end{proposition}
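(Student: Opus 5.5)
Two things must be shown: for each $x\in\CT(\bp)$ the backup input $\bp(x)$ lies in $\Kb_\bcbf(x;\bp)$, which gives nonemptiness; and any locally Lipschitz $\pi$ with $\pi(x)\in\Kb_\bcbf(x;\bp)$ renders $\CT(\bp)$ forward invariant. Both parts hinge on one identity. Using the semigroup property $\flowB(\flowB(x,s),\td)=\flowB(x,s+\td)$, we differentiate in $s$ at $s=0$ to get
\begin{align*}
\Psi_\bp(x,\td)\,f_\bp(x) = f_\bp\!\left(\flowB(x,\td)\right).
\end{align*}
This holds wherever $f_\bp$ is differentiable along the rollout, or more generally via the flow-commutation argument.

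\textbf{Feasibility.} Substitute $u=\bp(x)$, so that $f(x)+g(x)u=f_\bp(x)$. The identity gives $\D^\calS_\td(x,\bp(x))=0$ for every $\td\in[0,T]$. Since $x\in\CT(\bp)$, we have $h_\calS(\flowB(x,\td))\ge0$, so $\alpha_\calS(\cdot)\ge0$ and each rollout constraint holds.

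For the terminal constraint, $\D^\B_T(x,\bp(x))=\nabla h_\B(\flowB(x,T))^\top f_\bp(\flowB(x,T))$. The point $y:=\flowB(x,T)$ lies in $\B$, and on $\B$ we have $\bp=\pi_\B$. By Thm.~\ref{thm: base_set}, $\B$ is a sublevel set of the Lyapunov function $(x-x^\star)^\top P(x-x^\star)$ with $h_\B=c-(x-x^\star)^\top P(x-x^\star)$. The invariance proof of that theorem gives $\dot h_\B\ge0$ on $\B$, and in fact $\dot h_\B\ge -\alpha_\B(h_\B)$ for suitable $\alpha_\B$; invariance alone only forces $\dot h_\B\ge0$ on the boundary. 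So I would either state the standing requirement that $h_\B$ be a CBF for $\pi_\B$ with the chosen $\alpha_\B$, as in \citep{backupCBF}, or verify it from the Lyapunov decrease $\dot V\le0$ on $\B_c$. Then the terminal constraint holds and $\bp(x)\in\Kb_\bcbf(x;\bp)\subseteq\U$, with $\bp(x)\in\U$ by construction.

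\textbf{Invariance.} Let $\pi$ be locally Lipschitz with $\pi(x)\in\Kb_\bcbf(x;\bp)$ on $\CT(\bp)$, and let $x(t)$ be the resulting trajectory. For each fixed $\td$ define $H_\td(t):=h_\calS(\flowB(x(t),\td-t))$ for $t\le\td$. The chain rule gives $\dot H_\td=\D^\calS_{\td-t}(x(t),\pi(x(t)))$, and the constraint then yields $\dot H_\td\ge-\alpha_\calS(H_\td)$. Similarly, $H_T(t):=h_\B(\flowB(x(t),T))$ satisfies $\dot H_T\ge-\alpha_\B(H_T)$. The comparison lemma for extended class-$\K_\infty$ functions then keeps each of these quantities nonnegative when it starts nonnegative.

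\textbf{Main obstacle.} The difficulty is converting these pointwise constraints on the moving re-indexed family into the conclusion $h_\CT(x(t))\ge0$ for all $t$, because $h_\CT$ is a nonsmooth minimum over a continuum of indices. I would argue by contradiction. Let $t_1$ be the first exit time from $\CT(\bp)$. At $t_1$ some active index, either a rollout time $\td^\ast$ or the terminal condition, attains $h_\CT=0$, and the corresponding constraint forces a nonnegative upper-right Dini derivative of that branch. A Danskin-type argument for the min over a compact index set then gives $D^+h_\CT\ge-\alpha(h_\CT)$ near $t_1$, and Nagumo's theorem yields invariance. This step is exactly the content of \citep[Prop.~2, Thm.~1]{backupCBF}, which I would cite for the technical details.
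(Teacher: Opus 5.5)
The paper gives no argument of its own here; it simply cites \citep[Thm.~2]{backupCBF}. Your feasibility part is a correct, self-contained replacement. The flow identity $\Psi_\bp(x,\td)f_\bp(x)=f_\bp(\flowB(x,\td))$ gives $\D^\calS_\td(x,\bp(x))=0$, so each rollout row reduces to $\alpha_\calS(h_\calS(\flowB(x,\td)))\ge0$.

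You are also right that the terminal row needs $\nabla h_\B^\top f_{\pi_\B}\ge-\alpha_\B(h_\B)$ on $\B$, and that control invariance of $\B$ alone (the hypothesis in the appendix setting) does not supply it. In this paper you can drop the qualifier ``for suitable $\alpha_\B$.'' The proof of Thm.~\ref{thm: base_set} gives $\dot V<0$ on $\{0<V\le c_1\}$, hence $\dot h_\B=-\dot V\ge0$ on $\B_c$. So the terminal row holds at $u=\bp(x)$ for every extended class-$\K_\infty$ function $\alpha_\B$. For a general certified pair $(\B,\pi_\B)$, however, this is a genuine extra hypothesis that the bare citation hides.

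Deferring invariance to \citep{backupCBF} puts you on the same footing as the paper, but the sketch you give for that step would not work as written. In a Danskin argument on $h_\CT(x(t))$, the branches are the fixed-index functions $t\mapsto h_\calS(\flowB(x(t),\td^\ast))$. Their derivative is $\D^\calS_{\td^\ast}(x,\pi(x))+\partial_{\td}h_\calS(\flowB(x,\td))|_{\td=\td^\ast}$, not $\D^\calS_{\td^\ast}$, because the rows carry the term $-f_\bp(\flowB(x,\td))$. This correction vanishes at interior minimizers and is nonnegative at $\td^\ast=0$, but it is nonpositive at $\td^\ast=T$, so the constraint does not control that branch. In addition, ``near $t_1$'' uses the rows at states outside $\CT(\bp)$, where they are not assumed.

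You do not need Danskin: your re-indexed functions finish the proof if you index by absolute time $s\in[0,\infty)$ instead of $\td\in[0,T]$. Let $G_s(t):=h_\calS(\flowB(x(t),s-t))$ on $[\max\{0,s-T\},s]$, so $\dot G_s\ge-\alpha_\calS(G_s)$ by your chain-rule computation.
\begin{itemize}
\item For $s\le T$, $G_s$ starts nonnegative at $t=0$ since $x_0\in\CT(\bp)$.
\item For $s>T$, $G_s$ starts nonnegative at $t=s-T$, because your terminal comparison $H_T\ge0$ keeps $\flowB(x(t),T)\in\B\subseteq\calS$.
\end{itemize}
Hence $h_\calS(\flowB(x(t),\td))=G_{t+\td}(t)\ge0$ for all $t\ge0$ and $\td\in[0,T]$. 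Together with $H_T(t)\ge0$, this is exactly $x(t)\in\CT(\bp)$. Each rollout point is followed along its own characteristic, so the nonsmooth minimum is never differentiated.

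One caveat remains, and it is the usual one. The comparisons are valid only while the rows hold along the trajectory, so ruling out an exit needs the rows on a neighborhood of $\CT(\bp)$, as in standard CBF results. The statement's ``for all $x\in\CT(\bp)$'' glosses over this, and the citation must cover it.
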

\begin{proof}
    Proof in \citep[Thm.~2]{backupCBF}.    
\end{proof}

Finally, BCBFs also address the relative-degree issue that arises when the original safety function $h_\calS$ depends on states that are not directly actuated. 
The composed functions $h_\calS\circ\flowB(\cdot,\td)$ depend on the current input through the flow sensitivity $\Psi_\bp(x,\td)g(x)$. 
Under mild controllability and nondegeneracy assumptions, these composed constraints have relative degree one for positive rollout times~\citep[Thm.~3]{backupCBF}. 
This is why \oursRA{} can specify safety directly in physically meaningful coordinates, such as altitude or lane position, without constructing high-order CBFs for the full system.

\section{\rafull Function}\label{app: save}

The \ralow{} function is a central ingredient of Phase I (Sec.~\ref{sec: phase1}). 
This section provides a formal characterization of the proposed value function and its key properties. 
Although we introduce it for training the backup policy in our \oursRA{} framework, the construction may be of independent interest and is not limited to the deterministic setting: the same methodology extends naturally to stochastic dynamics and randomized policies. 
For this reason, we present the definitions in terms of trajectory laws, so that the notation applies directly to stochastic settings while reducing to pathwise statements in the deterministic case considered here.

We consider a deterministic control system
\begin{align}\label{app_eq: deterministic_system}
    x_{k+1}=F(x_k,u_k),\qquad x_k\in\X,\quad u_k\in\U,
\end{align}
together with deterministic policies $\pi:\X\to\U$. Let $\B\subseteq\X$ denote the base set, let $\calS\subseteq\X$ denote the safe set, and define the failure set by
\begin{align}
    \F:=\X\setminus\calS.
\end{align}
The continuation set is
\begin{align}\label{app_eq: save_continuation_set}
    \calH:=\X\setminus(\B\cup\F).
\end{align}
Throughout, we assume $\B\cap\F=\emptyset$, so that $\B$, $\F$, and $\calH$ form a disjoint partition of $\X$. We use the indicators
\begin{align}\label{app_eq: save_indicators}
    \vb b(x)=\mathbf{1}_{\B}(x),\qquad
    \vb f(x)=\mathbf{1}_{\F}(x),\qquad
    \vb c(x)=1-\vb b(x)-\vb f(x)=\mathbf{1}_{\calH}(x).
\end{align}

For an initial state-action pair $(x,u)\in\X\times\U$ and a policy $\pi$, let
\begin{align}
    \omega_{x,u}^{\pi}=(x_0,u_0,x_1,u_1,\ldots)
\end{align}
denote the rollout generated by
\begin{align}\label{app_eq: save_rollout}
    x_0=x,\qquad
    u_0=u,\qquad
    x_{k+1}=F(x_k,u_k)\ \text{for }k\ge0,\qquad
    u_k=\pi(x_k)\ \text{for }k\ge1.
\end{align}
Because the dynamics and policy are deterministic, the rollout $\omega_{x,u}^{\pi}$ is unique.

We write $\mathbb P_{x,u}^{\pi}$ for the induced law on trajectory space and $\mathbb E_{x,u}^{\pi}$ for expectation with respect to this law. In the deterministic setting,
\begin{align}\label{app_eq: deterministic_dirac_law}
    \mathbb P_{x,u}^{\pi}=\delta_{\omega_{x,u}^{\pi}},
    \qquad
    \mathbb E_{x,u}^{\pi}[Z]=Z(\omega_{x,u}^{\pi})
\end{align}
for every trajectory functional $Z$. Thus any event depending only on the controlled rollout has probability either $0$ or $1$. We keep the law-and-expectation notation because it remains unchanged when randomness is introduced into the policy or the dynamics.

The hitting times of the base and failure sets are the trajectory functionals
\begin{align}\label{app_eq: save_hitting_times}
    \tau_\B(\omega)=\inf\{k\ge0:x_k\in\B\},
    \qquad
    \tau_\F(\omega)=\inf\{k\ge0:x_k\in\F\},
\end{align}
with the convention $\inf\emptyset=\infty$. When the rollout is clear, we simply write $\tau_\B$ and $\tau_\F$. The safe-arrival event is
\begin{align}\label{app_eq: save_safe_arrival_event}
    \mathcal E_{\mathrm{SA}}
    :=
    \{\omega:\tau_\B(\omega)<\tau_\F(\omega),\ \tau_\B(\omega)<\infty\}.
\end{align}
This event occurs exactly when the trajectory arrives at the base set in finite time before entering the failure set.

\subsection{Undiscounted \rafull}\label{app: undiscounted_save}

We first define the undiscounted \ralow. This value is the feasibility identifier associated with the safe-arrival specification: it identifies whether a policy succeeds in arriving at the base set before failure.

\begin{definition}[Undiscounted \rafull]
    For a deterministic policy $\pi$, define
    \begin{align}\label{app_eq: save_undiscounted_def}
        Q_{\mathrm{SA}}^{\pi}(x,u)
        :=
        \mathbb P_{x,u}^{\pi}(\mathcal E_{\mathrm{SA}})
        =
        \mathbb P_{x,u}^{\pi}(\tau_\B<\tau_\F,\ \tau_\B<\infty).
    \end{align}
    The associated state-value function is
    \begin{align}\label{app_eq: save_undiscounted_state_def}
        V_{\mathrm{SA}}^{\pi}(x)
        :=
        Q_{\mathrm{SA}}^{\pi}(x,\pi(x)).
    \end{align}
\end{definition}

The event-based definition admits an equivalent pathwise representation in terms of the indicator functions in~\eqref{app_eq: save_indicators}.

\begin{proposition}[Indicator Representation of Undiscounted \rafull]
\label{app_prop: indicator_representation}
    For every deterministic policy $\pi$ and every $(x,u)\in\X\times\U$,
    \begin{align}\label{app_eq: indicator_representation}
        Q_{\mathrm{SA}}^{\pi}(x,u)
        =
        \mathbb E_{x,u}^{\pi}
        \left[
            \sum_{k=0}^{\infty}
            \left(
                \prod_{i=0}^{k-1}\vb c(x_i)
            \right)
            \vb b(x_k)
        \right],
    \end{align}
    where the empty product is interpreted as $1$.
\end{proposition}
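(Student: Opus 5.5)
Since the dynamics and policy are deterministic, \eqref{app_eq: deterministic_dirac_law} reduces both sides to evaluations on the unique rollout $\omega_{x,u}^{\pi}=(x_0,u_0,x_1,\ldots)$. The left-hand side becomes the indicator $\mathbf 1_{\mathcal E_{\mathrm{SA}}}(\omega_{x,u}^{\pi})$. The right-hand side becomes the deterministic series $S:=\sum_{k\ge0}\bigl(\prod_{i<k}\vb c(x_i)\bigr)\vb b(x_k)$. It therefore suffices to prove the pathwise identity $S=\mathbf 1_{\mathcal E_{\mathrm{SA}}}$ for an arbitrary sequence $(x_k)_{k\ge0}$ in $\X$. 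Every term of $S$ is in $\{0,1\}$, so $S$ is well defined in $[0,\infty]$ and no convergence issue arises.

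The key step is to identify which terms of $S$ can be nonzero. Let $\sigma:=\min(\tau_\B,\tau_\F)$ be the first exit time from the continuation set $\calH$. Because $\B$, $\F$, $\calH$ partition $\X$ (using $\B\cap\F=\emptyset$), we have $\vb c(x_i)=1$ if and only if $x_i\in\calH$. Hence $\prod_{i<k}\vb c(x_i)=1$ exactly when $k\le\sigma$, and the product is $0$ for $k>\sigma$, since the factor $i=\sigma$ vanishes. For $k<\sigma$ we also have $x_k\in\calH$, so $\vb b(x_k)=0$. Thus at most the single term $k=\sigma$ survives, provided $\sigma<\infty$, and it equals $\vb b(x_\sigma)$.

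We then split into cases. If $\sigma=\infty$, the trajectory stays in $\calH$ forever, every term is zero, $S=0$, and $\tau_\B=\infty$, so $\mathcal E_{\mathrm{SA}}$ fails. If $\sigma<\infty$, then $S=\vb b(x_\sigma)$, and $x_\sigma$ lies in exactly one of $\B$ or $\F$ by disjointness. If $x_\sigma\in\B$, then $\tau_\B=\sigma<\infty$, and $\tau_\F>\sigma$ because $x_\sigma\notin\F$ and $x_i\in\calH$ for $i<\sigma$; so the event holds and $S=1$. If $x_\sigma\in\F$, then $\tau_\F=\sigma<\tau_\B$, so the event fails and $S=0$. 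In all three cases $S=\mathbf 1_{\mathcal E_{\mathrm{SA}}}$, which proves the identity.

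The only delicate step is the bookkeeping of the survival gate: showing that the products vanish beyond $\sigma$ and that the $\vb b$ factors vanish before $\sigma$. Both follow directly from the partition. One must also note the edge case $x_0\in\B$: here $\sigma=0$, the empty product equals $1$, and the sum equals $1$, which is consistent with $\tau_\B=0<\tau_\F$.
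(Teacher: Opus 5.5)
Your proof is correct and follows the paper's argument. Both establish the pathwise identity $\sum_{k\ge0}\bigl(\prod_{i<k}\vb c(x_i)\bigr)\vb b(x_k)=\mathbf{1}_{\{\tau_\B<\tau_\F,\ \tau_\B<\infty\}}$ by showing that only the first-exit term can be nonzero, then pass to the Dirac expectation; the only cosmetic difference is that you split cases on the exit time $\sigma=\min(\tau_\B,\tau_\F)$ from $\calH$, whereas the paper splits on whether the safe-arrival event occurs.
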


\begin{proof}
    Define the pathwise quantity
    \begin{align}
        Y
        :=
        \sum_{k=0}^{\infty}
        \left(
            \prod_{i=0}^{k-1}\vb c(x_i)
        \right)
        \vb b(x_k).
    \end{align}
    We show that
    \begin{align}\label{app_eq: indicator_Y}
        Y
        =
        \mathbf{1}_{\{\tau_\B<\tau_\F,\ \tau_\B<\infty\}}.
    \end{align}

    Suppose first that $\tau_\B<\tau_\F$ and $\tau_\B<\infty$. Then $x_i\in\calH$ for all $i<\tau_\B$, so $\vb c(x_i)=1$ for $i=0,\ldots,\tau_\B-1$. Moreover, $x_{\tau_\B}\in\B$, so $\vb b(x_{\tau_\B})=1$. Hence the term in $Y$ with $k=\tau_\B$ equals $1$. For $k<\tau_\B$, we have $x_k\notin\B$, so $\vb b(x_k)=0$. For $k>\tau_\B$, the product $\prod_{i=0}^{k-1}\vb c(x_i)$ contains the factor $\vb c(x_{\tau_\B})=0$. Therefore all terms except the term with $k=\tau_\B$ vanish, and $Y=1$.

    Conversely, suppose that the event $\{\tau_\B<\tau_\F,\ \tau_\B<\infty\}$ does not occur. If the trajectory enters $\F$ before entering $\B$, then $\vb b(x_k)=0$ for all $k\le\tau_\F$, and for all $k>\tau_\F$ the product contains the factor $\vb c(x_{\tau_\F})=0$. Hence every term in $Y$ is zero. If the trajectory never enters $\B$, then $\vb b(x_k)=0$ for all $k$, so again $Y=0$.

    Thus~\eqref{app_eq: indicator_Y} holds pathwise. Taking expectation with respect to $\mathbb P_{x,u}^{\pi}$ gives~\eqref{app_eq: indicator_representation}.
\end{proof}

The indicator representation yields the following one-step self-consistency structure.

\begin{proposition}[Self-Consistency Condition of Undiscounted \rafull]
    For every deterministic policy $\pi$ and every $(x,u)\in\X\times\U$,
    \begin{align}\label{app_eq: save_undiscounted_self_consistency}
        Q_{\mathrm{SA}}^{\pi}(x,u)
        =
        \vb b(x)
        +
        \vb c(x)\,
        Q_{\mathrm{SA}}^{\pi}\!\left(F(x,u),\pi(F(x,u))\right).
    \end{align}
\end{proposition}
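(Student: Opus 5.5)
I will prove the identity through the indicator representation of Proposition~\ref{app_prop: indicator_representation}, working pathwise on the unique deterministic rollout $\omega_{x,u}^{\pi}$ and splitting into the three disjoint cases $x\in\B$, $x\in\F$, and $x\in\calH$. Write $Y(\omega)=\sum_{k\ge0}\big(\prod_{i=0}^{k-1}\vb c(x_i)\big)\vb b(x_k)$. Proposition~\ref{app_prop: indicator_representation} gives $Q_{\mathrm{SA}}^{\pi}(x,u)=\mathbb E_{x,u}^{\pi}[Y]$, and in the deterministic setting~\eqref{app_eq: deterministic_dirac_law} this is simply $Y(\omega_{x,u}^{\pi})$.

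First I isolate the $k=0$ term of $Y$. Since the empty product is $1$, this term equals $\vb b(x_0)=\vb b(x)$. For $k\ge1$, every product $\prod_{i=0}^{k-1}\vb c(x_i)$ contains the factor $\vb c(x_0)$. Factoring it out and reindexing $k\mapsto k+1$, $i\mapsto i+1$ gives
\begin{align*}
Y(\omega_{x,u}^{\pi})=\vb b(x)+\vb c(x)\sum_{k\ge0}\Big(\prod_{i=0}^{k-1}\vb c(x_{i+1})\Big)\vb b(x_{k+1}).
\end{align*}
The interchange is harmless: all terms are nonnegative, and by~\eqref{app_eq: indicator_Y} at most one term is nonzero, so the series is in fact a finite sum.

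Next I identify the shifted sum. The shifted sequence $(x_1,x_2,\ldots)$ is exactly the rollout $\omega^{\pi}_{x',u'}$ with $x'=F(x,u)$ and $u'=\pi(x')$. This holds because in~\eqref{app_eq: save_rollout} we have $u_1=\pi(x_1)$ and $u_k=\pi(x_k)$ for all later $k$, and the dynamics are time-invariant. Hence the shifted sum equals $Y(\omega^{\pi}_{x',u'})$. By Proposition~\ref{app_prop: indicator_representation} applied at $(x',\pi(x'))$, this is $Q_{\mathrm{SA}}^{\pi}(F(x,u),\pi(F(x,u)))$, which yields~\eqref{app_eq: save_undiscounted_self_consistency}.

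As a sanity check, I will also verify the three cases directly. If $x\in\B$, then $\vb b=1$, $\vb c=0$, so the right-hand side is $1$, and indeed $\tau_\B=0<\tau_\F$. If $x\in\F$, both indicators vanish, matching $\tau_\F=0$. If $x\in\calH$, success of the original rollout is equivalent to success of the shifted one. The only delicate point is this shift identification, namely that the control at step $1$ is $\pi(x_1)$ rather than an arbitrary action. That follows immediately from the rollout convention, so I expect no real obstacle.
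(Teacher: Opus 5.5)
Your proposal is correct and follows essentially the same route as the paper. Both proofs split off the $k=0$ term $\vb b(x)$ of the indicator representation, factor $\vb c(x_0)=\vb c(x)$ out of the remaining terms, reindex, and identify the shifted rollout with the one started at $\bigl(F(x,u),\pi(F(x,u))\bigr)$; your case-by-case check is consistent but not needed.
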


\begin{proof}
    Starting from~\eqref{app_eq: indicator_representation},
    \begin{align}
        Q_{\mathrm{SA}}^{\pi}(x,u)
        &=
        \mathbb E_{x,u}^{\pi}
        \left[
            \sum_{k=0}^{\infty}
            \left(
                \prod_{i=0}^{k-1}\vb c(x_i)
            \right)
            \vb b(x_k)
        \right] \\
        &=
        \mathbb E_{x,u}^{\pi}
        \left[
            \vb b(x_0)
            +
            \sum_{k=1}^{\infty}
            \left(
                \prod_{i=0}^{k-1}\vb c(x_i)
            \right)
            \vb b(x_k)
        \right] \\
        &=
        \vb b(x)
        +
        \vb c(x)\,
        \mathbb E_{x,u}^{\pi}
        \left[
            \sum_{k=1}^{\infty}
            \left(
                \prod_{i=1}^{k-1}\vb c(x_i)
            \right)
            \vb b(x_k)
        \right].
    \end{align}
    Re-indexing with $j=k-1$ gives
    \begin{align}
        Q_{\mathrm{SA}}^{\pi}(x,u)
        =
        \vb b(x)
        +
        \vb c(x)\,
        \mathbb E_{x,u}^{\pi}
        \left[
            \sum_{j=0}^{\infty}
            \left(
                \prod_{i=1}^{j}\vb c(x_i)
            \right)
            \vb b(x_{j+1})
        \right].
    \end{align}
    Let $x^+=F(x,u)=x_1$. From step $1$ onward, the shifted rollout is the rollout initialized at $(x^+,\pi(x^+))$ under the same policy $\pi$. Therefore,
    \begin{align}
        \mathbb E_{x,u}^{\pi}
        \left[
            \sum_{j=0}^{\infty}
            \left(
                \prod_{i=1}^{j}\vb c(x_i)
            \right)
            \vb b(x_{j+1})
        \right]
        =
        Q_{\mathrm{SA}}^{\pi}(x^+,\pi(x^+)).
    \end{align}
    Substituting $x^+=F(x,u)$ proves the result.
\end{proof}

The optimal undiscounted values are
\begin{align}\label{app_eq: save_optimal_undiscounted_values}
    Q_{\mathrm{SA}}^{\star}(x,u)
    :=
    \sup_{\pi}Q_{\mathrm{SA}}^{\pi}(x,u),
    \qquad
    V_{\mathrm{SA}}^{\star}(x)
    :=
    \max_{u\in\U}Q_{\mathrm{SA}}^{\star}(x,u).
\end{align}
Since the first action $u$ is fixed in $Q_{\mathrm{SA}}^{\star}(x,u)$ and only the continuation policy remains to be optimized after the successor state, the optimal values satisfy
\begin{align}\label{app_eq: save_optimal_undiscounted_bellman}
    Q_{\mathrm{SA}}^{\star}(x,u)
    =
    \vb b(x)
    +
    \vb c(x)
    \max_{u^+\in\U}
    Q_{\mathrm{SA}}^{\star}\!\left(F(x,u),u^+\right).
\end{align}
Equivalently,
\begin{align}\label{app_eq: save_optimal_undiscounted_bellman_state}
    V_{\mathrm{SA}}^{\star}(x)
    =
    \max_{u\in\U}
    \left[
        \vb b(x)
        +
        \vb c(x)
        V_{\mathrm{SA}}^{\star}(F(x,u))
    \right].
\end{align}

The undiscounted value defines the states from which safe arrival is feasible.

\begin{definition}[Safe-arrival region]
    The deterministic safe-arrival region is
    \begin{align}\label{app_eq: save_safe_arrival_region}
        \mathcal R_{\mathrm{SA}}
        :=
        \{x\in\X:V_{\mathrm{SA}}^{\star}(x)=1\}.
    \end{align}
\end{definition}

Equivalently, $\mathcal R_{\mathrm{SA}}$ is the set of states from which there exists a deterministic policy that arrives at $\B$ before entering $\F$.

\begin{definition}[Minimum safe arrival time]\label{app_def: save_minimum_time_to_go}
    Define
    \begin{align}\label{app_eq: save_minimum_time_to_go}
        d(x)
        :=
        \inf
        \Bigl\{
            k\in\{0,1,2,\ldots\}:
            &\ \exists (u_0,\ldots,u_{k-1})\in\U^k
            \ \text{such that} \notag\\
            &\ x_0=x,\quad x_{i+1}=F(x_i,u_i)\ \text{for }i=0,\ldots,k-1, \notag\\
            &\ x_i\in\calH\ \text{for all }i<k,\quad x_k\in\B
        \Bigr\},
    \end{align}
    with the convention $d(x)=\infty$ if the set above is empty. For $k=0$, the safety condition over $i<k$ is vacuous, so $d(x)=0$ exactly when $x\in\B$.
\end{definition}

The following finite-step construction gives an equivalent characterization of the safe-arrival region.

\begin{proposition}[Finite-step characterization of the safe-arrival region]
    Define
    \begin{align}\label{app_eq: save_finite_step_sets}
        \mathcal R_0&:=\B, \\
        \mathcal R_{n+1}
        &:=
        \mathcal R_n
        \cup
        \left\{
            x\in\calH:
            \exists u\in\U\ \text{such that}\ F(x,u)\in\mathcal R_n
        \right\},
        \qquad n\ge0.
    \end{align}
    Then
    \begin{align}\label{app_eq: save_region_finite_step}
        \mathcal R_{\mathrm{SA}}
        =
        \bigcup_{n=0}^{\infty}\mathcal R_n
        =
        \{x\in\X:d(x)<\infty\}.
    \end{align}
    Moreover, for every $x\in\mathcal R_{\mathrm{SA}}$,
    \begin{align}\label{app_eq: save_d_from_finite_step_sets}
        d(x)=\min\{n\ge0:x\in\mathcal R_n\}.
    \end{align}
\end{proposition}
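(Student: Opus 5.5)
The proof establishes three facts, in this order: (a) $\bigcup_n\mathcal R_n=\{x:d(x)<\infty\}$ together with the formula $d(x)=\min\{n:x\in\mathcal R_n\}$; (b) $\{x:d(x)<\infty\}\subseteq\mathcal R_{\mathrm{SA}}$; and (c) $\mathcal R_{\mathrm{SA}}\subseteq\{x:d(x)<\infty\}$. Throughout, I use the pathwise identity from Prop.~\ref{app_prop: indicator_representation}: in the deterministic setting, $Q_{\mathrm{SA}}^\pi(x,u)\in\{0,1\}$, and it equals $1$ exactly when $\tau_\B<\tau_\F$ and $\tau_\B<\infty$ on the rollout $\omega^\pi_{x,u}$.

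For (a), I will prove by induction on $n$ that
\[
\mathcal R_n=\{x:d(x)\le n\}.
\]
The base case is immediate, since $d(x)=0$ exactly when $x\in\B=\mathcal R_0$. For the inductive step, take $x$ with $d(x)=k\le n+1$ and $k\ge1$. A witnessing sequence gives $x\in\calH$ and $x_1=F(x,u_0)$, and the shifted sequence shows $d(x_1)\le k-1\le n$. By the induction hypothesis $x_1\in\mathcal R_n$, so $x\in\mathcal R_{n+1}$. Conversely, let $x\in\mathcal R_{n+1}\setminus\mathcal R_n$. Then $x\in\calH$ and there is $u$ with $F(x,u)\in\mathcal R_n$. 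The induction hypothesis gives a witness of length at most $n$ from $F(x,u)$, and prepending $u$ gives a witness from $x$. The requirement $x_i\in\calH$ holds at $i=0$ because $x\in\calH$, and at later indices because of the witness. Hence $d(x)\le n+1$. Taking the union over $n$ gives the first equality and the formula for $d$, because the sets $\mathcal R_n$ are nested.

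For (b), fix $x$ with $d(x)=k<\infty$ and an open-loop witness $u_0,\dots,u_{k-1}$ producing states $x_0,\dots,x_k$. The one subtle point is that $V^\star_{\mathrm{SA}}$ is a supremum over \emph{feedback} policies, so the open-loop witness has to be realized by a feedback law. I would choose a witness of minimal length $k=d(x)$. Then the states $x_0,\dots,x_k$ are pairwise distinct: if $x_i=x_j$ with $i<j$, the loop between them could be removed to give a shorter witness. Because the states are distinct, the policy $\pi(x_i):=u_i$ for $i<k$, with $\pi$ arbitrary elsewhere, is well defined. Its rollout from $(x,\pi(x))$ reproduces the witness trajectory, so $\tau_\B=k<\tau_\F$. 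This gives $Q^\pi_{\mathrm{SA}}(x,\pi(x))=1$, hence $Q^\star_{\mathrm{SA}}(x,u_0)=1$ and $V^\star_{\mathrm{SA}}(x)=1$.

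For (c), suppose $V^\star_{\mathrm{SA}}(x)=1$. Since $\U$ is compact and the maximum in the definition of $V^\star_{\mathrm{SA}}$ is taken over $u$, there is $u$ with $Q^\star_{\mathrm{SA}}(x,u)=1$. Because each $Q^\pi_{\mathrm{SA}}$ takes only the values $0$ and $1$, a supremum equal to $1$ forces some $\pi$ to attain $Q^\pi_{\mathrm{SA}}(x,u)=1$; no limiting argument is needed. The rollout of that $\pi$ then hits $\B$ at a finite time $\tau_\B$ before $\F$, so all earlier states lie in $\calH$. Its action sequence is therefore a witness, and $d(x)\le\tau_\B<\infty$.

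The main obstacle is the open-loop-to-feedback conversion in step (b), which the distinct-states argument handles. A secondary point is to confirm that the definition of $V^\star_{\mathrm{SA}}$ as a maximum is attained, or simply to read it as a supremum; either reading is harmless here because the values are $\{0,1\}$-valued.
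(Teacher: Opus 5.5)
Your proof is correct and follows the paper's route: the core is the same induction showing $\mathcal R_n=\{x:d(x)\le n\}$, from which the union identity and the formula for $d$ follow. Where the paper only asserts that $\{x:d(x)<\infty\}=\mathcal R_{\mathrm{SA}}$ ``by the definition of the optimal undiscounted value,'' your steps (b) and (c) justify it. In particular, the loop-removal argument shows that a minimal-length open-loop witness visits distinct states and so is realizable by a stationary feedback policy, and the $\{0,1\}$-valuedness (not compactness of $\U$) makes the supremum over policies and the maximum over $u$ attained.
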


\begin{proof}
    We prove by induction that $x\in\mathcal R_n$ if and only if safe arrival from $x$ can be achieved in at most $n$ steps.

    For $n=0$, this holds because $\mathcal R_0=\B$. Suppose the claim holds for some $n\ge0$. If $x\in\mathcal R_{n+1}$, then either $x\in\mathcal R_n$, in which case the induction hypothesis applies, or $x\in\calH$ and there exists $u\in\U$ such that $F(x,u)\in\mathcal R_n$. In the latter case, applying $u$ for one step and then using the induction hypothesis from $F(x,u)$ gives safe arrival in at most $n+1$ steps.

    Conversely, suppose safe arrival from $x$ can be achieved in at most $n+1$ steps. If it can be achieved in at most $n$ steps, then $x\in\mathcal R_n\subseteq\mathcal R_{n+1}$. Otherwise, the first step is taken from a state in $\calH$, and some first action $u$ leads to a successor from which safe arrival can be achieved in at most $n$ steps. By the induction hypothesis, this successor lies in $\mathcal R_n$, so $x\in\mathcal R_{n+1}$.

    Thus $\bigcup_{n=0}^{\infty}\mathcal R_n$ is exactly the set of states with finite minimum safe arrival time. This set is also $\mathcal R_{\mathrm{SA}}$ by the definition of the optimal undiscounted value. The identity~\eqref{app_eq: save_d_from_finite_step_sets} follows from the same induction argument and the definition of $d(x)$.
\end{proof}

\subsection{Discounted \rafull}\label{app: discounted_save}

We now define the discounted \rafull. The discounted value retains the same first-arrival structure as the undiscounted one, while assigning larger value to earlier safe arrival at the base set.

\begin{definition}[Discounted \rafull]
    Fix a discount factor $\beta\in(0,1)$. For a deterministic policy $\pi$, define
    \begin{align}\label{app_eq: save_discounted_def}
        Q_{\mathrm{SA},\beta}^{\pi}(x,u)
        :=
        \mathbb E_{x,u}^{\pi}
        \left[
            \sum_{k=0}^{\infty}
            \beta^k
            \left(
                \prod_{i=0}^{k-1}\vb c(x_i)
            \right)
            \vb b(x_k)
        \right].
    \end{align}
    The associated state-value function is
    \begin{align}\label{app_eq: save_discounted_state_def}
        V_{\mathrm{SA},\beta}^{\pi}(x)
        :=
        Q_{\mathrm{SA},\beta}^{\pi}(x,\pi(x)).
    \end{align}
\end{definition}

The discounted sum pays only at the first safe arrival time. This gives both a hitting-time interpretation and a random-horizon interpretation.

\begin{proposition}[Interpretations of Discounted \rafull]
    For every deterministic policy $\pi$ and every $(x,u)\in\X\times\U$,
    \begin{align}\label{app_eq: save_discounted_hitting}
        Q_{\mathrm{SA},\beta}^{\pi}(x,u)
        =
        \mathbb E_{x,u}^{\pi}
        \left[
            \beta^{\tau_\B}
            \mathbf{1}_{\{\tau_\B<\tau_\F,\ \tau_\B<\infty\}}
        \right].
    \end{align}
    Moreover, let $N$ be independent of the controlled rollout and satisfy
    \begin{align}\label{app_eq: save_discounted_geometric_horizon}
        \mathbb P_N(N=k)=(1-\beta)\beta^k,
        \qquad k=0,1,2,\ldots.
    \end{align}
    Then
    \begin{align}\label{app_eq: save_discounted_random_horizon}
        Q_{\mathrm{SA},\beta}^{\pi}(x,u)
        =
        \left(\mathbb P_{x,u}^{\pi}\otimes\mathbb P_N\right)
        \left(
            \tau_\B<\tau_\F,\ \tau_\B\le N
        \right).
    \end{align}
\end{proposition}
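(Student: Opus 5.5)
The plan is to derive both identities from the pathwise argument already used in Proposition~\ref{app_prop: indicator_representation}. I will first reduce the random-variable identity to a deterministic statement about a single rollout. Define the pathwise quantity
\begin{align}
    Y_\beta := \sum_{k=0}^{\infty}\beta^k\left(\prod_{i=0}^{k-1}\vb c(x_i)\right)\vb b(x_k).
\end{align}
As in the proof of Proposition~\ref{app_prop: indicator_representation}, the summand with index $k$ is nonzero only if $x_i\in\calH$ for every $i<k$ and $x_k\in\B$. This happens exactly when $k=\tau_\B$ and $\tau_\B<\tau_\F$.

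The case analysis is then the same as before. On the event $\{\tau_\B<\tau_\F,\ \tau_\B<\infty\}$, the only surviving term is the one with $k=\tau_\B$, and it equals $\beta^{\tau_\B}$. Off this event, every term vanishes. Hence, pathwise,
\begin{align}
    Y_\beta=\beta^{\tau_\B}\mathbf 1_{\{\tau_\B<\tau_\F,\ \tau_\B<\infty\}}.
\end{align}
Taking $\mathbb E_{x,u}^{\pi}$ of both sides gives~\eqref{app_eq: save_discounted_hitting}. All terms are nonnegative and bounded by $\beta^k$, so the series converges and there is no issue exchanging sum and expectation (monotone convergence, or trivially by the Dirac law in the deterministic case).

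For the random-horizon identity, I condition on the rollout and use the independence of $N$. For a fixed finite $\tau_\B$,
\begin{align}
    \mathbb P_N(N\ge \tau_\B)=\sum_{k\ge\tau_\B}(1-\beta)\beta^k=\beta^{\tau_\B}.
\end{align}
Therefore, by Fubini/Tonelli on the product measure $\mathbb P_{x,u}^{\pi}\otimes\mathbb P_N$,
\begin{align}
    \left(\mathbb P_{x,u}^{\pi}\otimes\mathbb P_N\right)(\tau_\B<\tau_\F,\ \tau_\B\le N)=\mathbb E_{x,u}^{\pi}\!\left[\mathbf 1_{\{\tau_\B<\tau_\F\}}\,\mathbb P_N(N\ge\tau_\B)\right].
\end{align}
The event $\{\tau_\B\le N\}$ forces $\tau_\B<\infty$ because $N$ is finite almost surely, so the right-hand side equals $\mathbb E_{x,u}^{\pi}[\beta^{\tau_\B}\mathbf 1_{\{\tau_\B<\tau_\F,\ \tau_\B<\infty\}}]$. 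Combining this with the first identity gives~\eqref{app_eq: save_discounted_random_horizon}.

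No step is genuinely hard. The only points needing care are the measurability of the hitting times and the product-measure/Fubini step. Both are immediate: $\tau_\B$ is determined by countably many coordinate events, and the integrand is nonnegative and bounded.
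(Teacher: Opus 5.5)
Your proposal is correct and follows the paper's proof essentially verbatim: the same pathwise identity $Y_\beta=\beta^{\tau_\B}\mathbf{1}_{\{\tau_\B<\tau_\F,\ \tau_\B<\infty\}}$ gives the first claim, and the tail formula $\mathbb P_N(N\ge k)=\beta^k$ together with independence gives the second. The only cosmetic difference is that you integrate out $N$ via Tonelli, whereas the paper sums $\mathbb P^{\pi}_{x,u}(\tau_\B=k<\tau_\F)\,\beta^k$ over $k$; these are the same computation.
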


\begin{proof}
    Define
    \begin{align}
        Y_\beta
        :=
        \sum_{k=0}^{\infty}
        \beta^k
        \left(
            \prod_{i=0}^{k-1}\vb c(x_i)
        \right)
        \vb b(x_k).
    \end{align}
    The same pathwise argument used in the proof of Proposition~\ref{app_prop: indicator_representation} shows that, on the event $\{\tau_\B<\tau_\F,\ \tau_\B<\infty\}$, the only nonzero term in $Y_\beta$ occurs at $k=\tau_\B$, and its value is $\beta^{\tau_\B}$. Outside this event, all terms vanish. Hence
    \begin{align}
        Y_\beta
        =
        \beta^{\tau_\B}
        \mathbf{1}_{\{\tau_\B<\tau_\F,\ \tau_\B<\infty\}}.
    \end{align}
    Taking expectation proves~\eqref{app_eq: save_discounted_hitting}.

    For the random-horizon identity, note that
    \begin{align}
        \mathbb P_N(N\ge k)
        =
        \sum_{n=k}^{\infty}(1-\beta)\beta^n
        =
        \beta^k.
    \end{align}
    By independence of $N$ and the controlled rollout,
    \begin{align}
        &\left(\mathbb P_{x,u}^{\pi}\otimes\mathbb P_N\right)
        \left(
            \tau_\B<\tau_\F,\ \tau_\B\le N
        \right) \notag\\
        &\qquad
        =
        \sum_{k=0}^{\infty}
        \mathbb P_{x,u}^{\pi}(\tau_\B=k<\tau_\F)\,
        \mathbb P_N(N\ge k) \\
        &\qquad
        =
        \sum_{k=0}^{\infty}
        \beta^k
        \mathbb P_{x,u}^{\pi}(\tau_\B=k<\tau_\F) \\
        &\qquad
        =
        \mathbb E_{x,u}^{\pi}
        \left[
            \beta^{\tau_\B}
            \mathbf{1}_{\{\tau_\B<\tau_\F,\ \tau_\B<\infty\}}
        \right].
    \end{align}
    Combining this identity with~\eqref{app_eq: save_discounted_hitting} proves~\eqref{app_eq: save_discounted_random_horizon}.
\end{proof}

The discounted value satisfies the following one-step self-consistency relation.

\begin{proposition}[Self-Consistency Condition of Discounted \rafull]
    For every deterministic policy $\pi$ and every $(x,u)\in\X\times\U$,
    \begin{align}\label{app_eq: save_discounted_self_consistency}
        Q_{\mathrm{SA},\beta}^{\pi}(x,u)
        =
        \vb b(x)
        +
        \beta\,\vb c(x)\,
        Q_{\mathrm{SA},\beta}^{\pi}\!\left(F(x,u),\pi(F(x,u))\right).
    \end{align}
\end{proposition}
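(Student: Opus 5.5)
The plan mirrors the proof of the undiscounted self-consistency condition~\eqref{app_eq: save_undiscounted_self_consistency}, with one extra factor of $\beta$ tracked through the re-indexing. Start from the definition~\eqref{app_eq: save_discounted_def}. Work pathwise on the unique rollout $\omega_{x,u}^{\pi}$, since in the deterministic setting $\mathbb E_{x,u}^{\pi}$ is evaluation at that rollout by~\eqref{app_eq: deterministic_dirac_law}. Split off the $k=0$ term. Because the empty product equals $1$ and $\beta^0=1$, this term is $\vb b(x_0)=\vb b(x)$.

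For the remaining terms $k\ge1$, the product $\prod_{i=0}^{k-1}\vb c(x_i)$ always contains the factor $\vb c(x_0)=\vb c(x)$, which does not depend on $k$. Factor out $\vb c(x)$ and one power of $\beta$, writing $\beta^k=\beta\cdot\beta^{k-1}$. Then re-index with $j=k-1$. This gives
\begin{align*}
Q_{\mathrm{SA},\beta}^{\pi}(x,u)=\vb b(x)+\beta\,\vb c(x)\sum_{j=0}^{\infty}\beta^{j}\Bigl(\prod_{i=1}^{j}\vb c(x_i)\Bigr)\vb b(x_{j+1}).
\end{align*}
Exchanging summation and factoring is legitimate because every term is nonnegative and bounded by $\beta^k$, so the series converges absolutely. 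By Proposition on the interpretations of the discounted value, it has at most one nonzero term anyway.

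Let $x^+=F(x,u)=x_1$. The shifted sequence $(x_1,u_1,x_2,\ldots)$ has $u_1=\pi(x_1)$ and $u_{k}=\pi(x_k)$ for all later $k$. It is therefore exactly the rollout $\omega^{\pi}_{x^+,\pi(x^+)}$ generated by~\eqref{app_eq: save_rollout}. Substituting $y_j=x_{j+1}$, the remaining sum becomes $\sum_j\beta^j\bigl(\prod_{i=0}^{j-1}\vb c(y_i)\bigr)\vb b(y_j)$, which is by definition $Q_{\mathrm{SA},\beta}^{\pi}(x^+,\pi(x^+))$. This gives the claim.

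The only point needing care is this shift identification. For a general (stochastic) setting it would be a Markov-property step. Here it reduces to uniqueness of the deterministic rollout, and determinism of $\pi$ guarantees that the first action of the shifted rollout is $\pi(x^+)$. No other step presents an obstacle.
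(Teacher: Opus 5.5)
Your proposal is correct and follows the paper's proof essentially step for step. You split off the $k=0$ term, factor out $\beta$ and the continuation indicator at $x$, re-index with $j=k-1$, and identify the shifted rollout with the one initialized at $(F(x,u),\pi(F(x,u)))$. Your extra remarks on convergence and on determinism guaranteeing the shifted first action are valid but go slightly beyond what the paper spells out.
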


\begin{proof}
    Starting from~\eqref{app_eq: save_discounted_def},
    \begin{align}
        Q_{\mathrm{SA},\beta}^{\pi}(x,u)
        &=
        \mathbb E_{x,u}^{\pi}
        \left[
            \vb b(x_0)
            +
            \sum_{k=1}^{\infty}
            \beta^k
            \left(
                \prod_{i=0}^{k-1}\vb c(x_i)
            \right)
            \vb b(x_k)
        \right] \\
        &=
        \vb b(x)
        +
        \beta\,\vb c(x)\,
        \mathbb E_{x,u}^{\pi}
        \left[
            \sum_{k=1}^{\infty}
            \beta^{k-1}
            \left(
                \prod_{i=1}^{k-1}\vb c(x_i)
            \right)
            \vb b(x_k)
        \right].
    \end{align}
    Re-indexing with $j=k-1$ gives
    \begin{align}
        Q_{\mathrm{SA},\beta}^{\pi}(x,u)
        =
        \vb b(x)
        +
        \beta\,\vb c(x)\,
        \mathbb E_{x,u}^{\pi}
        \left[
            \sum_{j=0}^{\infty}
            \beta^j
            \left(
                \prod_{i=1}^{j}\vb c(x_i)
            \right)
            \vb b(x_{j+1})
        \right].
    \end{align}
    Let $x^+=F(x,u)=x_1$. The shifted rollout from step $1$ onward is the rollout initialized at $(x^+,\pi(x^+))$ under the same policy $\pi$. Therefore,
    \begin{align}
        \mathbb E_{x,u}^{\pi}
        \left[
            \sum_{j=0}^{\infty}
            \beta^j
            \left(
                \prod_{i=1}^{j}\vb c(x_i)
            \right)
            \vb b(x_{j+1})
        \right]
        =
        Q_{\mathrm{SA},\beta}^{\pi}(x^+,\pi(x^+)).
    \end{align}
    Substituting $x^+=F(x,u)$ proves the result.
\end{proof}

Define the optimal discounted values by
\begin{align}\label{app_eq: save_optimal_discounted_values}
    Q_{\mathrm{SA},\beta}^{\star}(x,u)
    :=
    \sup_{\pi}Q_{\mathrm{SA},\beta}^{\pi}(x,u),
    \qquad
    V_{\mathrm{SA},\beta}^{\star}(x)
    :=
    \max_{u\in\U}Q_{\mathrm{SA},\beta}^{\star}(x,u).
\end{align}
The corresponding Bellman equation is
\begin{align}\label{app_eq: save_optimal_discounted_bellman}
    Q_{\mathrm{SA},\beta}^{\star}(x,u)
    =
    \vb b(x)
    +
    \beta\,\vb c(x)
    \max_{u^+\in\U}
    Q_{\mathrm{SA},\beta}^{\star}\!\left(F(x,u),u^+\right).
\end{align}
Equivalently,
\begin{align}\label{app_eq: save_optimal_discounted_bellman_state}
    V_{\mathrm{SA},\beta}^{\star}(x)
    =
    \max_{u\in\U}
    \left[
        \vb b(x)
        +
        \beta\,\vb c(x)
        V_{\mathrm{SA},\beta}^{\star}(F(x,u))
    \right].
\end{align}

The discounted Bellman operator
\begin{align}
    (\mathcal T_\beta Q)(x,u)
    :=
    \vb b(x)
    +
    \beta\,\vb c(x)
    \max_{u^+\in\U}
    Q(F(x,u),u^+)
\end{align}
is a $\beta$-contraction in the sup norm. Indeed, for any two bounded action-value functions $Q_1,Q_2$,
\begin{align}
    \left|
        (\mathcal T_\beta Q_1)(x,u)
        -
        (\mathcal T_\beta Q_2)(x,u)
    \right|
    &\le
    \beta\,\vb c(x)
    \max_{u^+\in\U}
    \left|
        Q_1(F(x,u),u^+)
        -
        Q_2(F(x,u),u^+)
    \right| \\
    &\le
    \beta
    \|Q_1-Q_2\|_\infty.
\end{align}
Consequently, the discounted Bellman equation has a unique bounded fixed point, namely $Q_{\mathrm{SA},\beta}^{\star}$.

The discounted and undiscounted fixed-policy values have the same success support.

\begin{proposition}[Discounting Preserves Safe-Arrival Feasibility]
    For every deterministic policy $\pi$, every $(x,u)\in\X\times\U$, and every $\beta\in(0,1)$,
    \begin{align}\label{app_eq: save_fixed_policy_support_equivalence}
        Q_{\mathrm{SA}}^{\pi}(x,u)=1
        \quad\Longleftrightarrow\quad
        Q_{\mathrm{SA},\beta}^{\pi}(x,u)>0.
    \end{align}
    Moreover,
    \begin{align}\label{app_eq: save_fixed_policy_beta_limit}
        \lim_{\beta\uparrow1}Q_{\mathrm{SA},\beta}^{\pi}(x,u)
        =
        Q_{\mathrm{SA}}^{\pi}(x,u).
    \end{align}
\end{proposition}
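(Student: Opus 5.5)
The plan is to reduce both claims to pathwise statements about the unique deterministic rollout $\omega_{x,u}^{\pi}$. Two facts already established above do most of the work. By the indicator representation (Proposition~\ref{app_prop: indicator_representation}) and the deterministic Dirac law~\eqref{app_eq: deterministic_dirac_law},
\[
Q_{\mathrm{SA}}^{\pi}(x,u)=\mathbf 1_{\{\tau_\B<\tau_\F,\ \tau_\B<\infty\}}(\omega_{x,u}^{\pi})\in\{0,1\}.
\]
By the hitting-time interpretation~\eqref{app_eq: save_discounted_hitting},
\[
Q_{\mathrm{SA},\beta}^{\pi}(x,u)=\beta^{\tau_\B}\,\mathbf 1_{\{\tau_\B<\tau_\F,\ \tau_\B<\infty\}}
\]
evaluated on the same rollout. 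With these two identities in hand, both claims become elementary statements about a single trajectory.

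For the equivalence~\eqref{app_eq: save_fixed_policy_support_equivalence}, split on whether the safe-arrival event $\mathcal E_{\mathrm{SA}}$ occurs along $\omega_{x,u}^{\pi}$.
\begin{itemize}
\item If $\mathcal E_{\mathrm{SA}}$ occurs, then $Q_{\mathrm{SA}}^{\pi}(x,u)=1$. Moreover $\tau_\B$ is a finite nonnegative integer, so $Q_{\mathrm{SA},\beta}^{\pi}(x,u)=\beta^{\tau_\B}>0$ for every $\beta\in(0,1)$.
\item If $\mathcal E_{\mathrm{SA}}$ does not occur, then the indicator vanishes, so both values are $0$. Neither side of the equivalence holds.
\end{itemize}
The two cases are exhaustive, so the equivalence follows.

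For the limit~\eqref{app_eq: save_fixed_policy_beta_limit}, use the same case split. In the success case $\tau_\B$ is a fixed finite integer independent of $\beta$, because the rollout does not depend on $\beta$; hence $\beta^{\tau_\B}\to1=Q_{\mathrm{SA}}^{\pi}(x,u)$ as $\beta\uparrow1$. In the failure case the discounted value is identically $0$, which matches $Q_{\mathrm{SA}}^{\pi}(x,u)=0$.

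The only point needing care is that the rollout, and therefore $\tau_\B$ and the event $\mathcal E_{\mathrm{SA}}$, does not depend on $\beta$. This is immediate, since $\beta$ enters only the payoff functional and not $F$ or $\pi$; I would state it explicitly. A stochastic extension would replace the pointwise limit by monotone convergence, with $\beta^{\tau_\B}\mathbf 1_{\mathcal E_{\mathrm{SA}}}\uparrow\mathbf 1_{\mathcal E_{\mathrm{SA}}}$ bounded by $1$. In that setting the support equivalence would read $Q_{\mathrm{SA}}^{\pi}>0\Leftrightarrow Q_{\mathrm{SA},\beta}^{\pi}>0$ rather than equality to $1$, but the deterministic case does not need this.
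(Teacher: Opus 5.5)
Your proposal is correct and follows the paper's proof: both split on whether the safe-arrival event occurs along the unique deterministic rollout, obtain $Q_{\mathrm{SA}}^{\pi}(x,u)=1$ and $Q_{\mathrm{SA},\beta}^{\pi}(x,u)=\beta^{\tau_\B}>0$ when it does and both values $0$ when it does not, and get the limit from $\beta^{\tau_\B}\to1$ for a fixed finite $\tau_\B$. Your explicit remark that the rollout does not depend on $\beta$ is a point the paper leaves implicit, and your stochastic-extension aside is accurate but not needed here.
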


\begin{proof}
    Since the rollout is deterministic, either the safe-arrival event occurs or it does not. If it occurs, then $\tau_\B=N$ for some finite $N$, and
    \begin{align}
        Q_{\mathrm{SA}}^{\pi}(x,u)=1,
        \qquad
        Q_{\mathrm{SA},\beta}^{\pi}(x,u)=\beta^N>0.
    \end{align}
    If it does not occur, then both values are zero. This proves~\eqref{app_eq: save_fixed_policy_support_equivalence}. The limit~\eqref{app_eq: save_fixed_policy_beta_limit} follows from $\beta^N\to1$ as $\beta\uparrow1$ for every finite $N$.
\end{proof}

The next proposition gives the precise time-optimal implication of discounting.

\begin{proposition}[Arrival-Time Optimality of the Discounted Value]\label{app_prop: save_discounted_time_optimality}
    For every $x\in\X$,
    \begin{align}\label{app_eq: save_time_optimal_value}
        V_{\mathrm{SA},\beta}^{\star}(x)
        =
        \begin{cases}
            \beta^{d(x)}, & d(x)<\infty,\\
            0, & d(x)=\infty.
        \end{cases}
    \end{align}
    Moreover, for every $(x,u)\in\X\times\U$,
    \begin{align}\label{app_eq: save_time_optimal_q_value}
        Q_{\mathrm{SA},\beta}^{\star}(x,u)
        =
        \begin{cases}
            1, & x\in\B,\\
            \beta^{1+d(F(x,u))}, & x\in\calH\ \text{and}\ d(F(x,u))<\infty,\\
            0, & \text{otherwise}.
        \end{cases}
    \end{align}
    Consequently, for every $\beta\in(0,1)$,
    \begin{align}\label{app_eq: save_region_support_equivalence}
        \mathcal R_{\mathrm{SA}}
        =
        \{x\in\X:V_{\mathrm{SA}}^{\star}(x)=1\}
        =
        \{x\in\X:d(x)<\infty\}
        =
        \{x\in\X:V_{\mathrm{SA},\beta}^{\star}(x)>0\},
    \end{align}
    and
    \begin{align}\label{app_eq: save_optimal_beta_limit}
        \lim_{\beta\uparrow1}
        V_{\mathrm{SA},\beta}^{\star}(x)
        =
        V_{\mathrm{SA}}^{\star}(x).
    \end{align}
\end{proposition}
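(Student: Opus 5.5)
We prove the three claims in order: the formula for $V^\star_{\mathrm{SA},\beta}$, then the formula for $Q^\star_{\mathrm{SA},\beta}$, and finally the two corollaries.

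\textbf{Formula for $V^\star_{\mathrm{SA},\beta}$.} We establish two matching inequalities, using the hitting-time representation~\eqref{app_eq: save_discounted_hitting}.

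For the upper bound, fix any deterministic $\pi$ and any first action $u$. The rollout is deterministic, so the safe-arrival event either fails, giving value $0$, or occurs at a finite $\tau_\B=N$. In the latter case $x_i\in\calH$ for $i<N$ and $x_N\in\B$. The realized actions $(u_0,\dots,u_{N-1})$ therefore witness membership in the set of Definition~\ref{app_def: save_minimum_time_to_go}, so $N\ge d(x)$. Since $\beta<1$, this gives $Q^\pi_{\mathrm{SA},\beta}(x,u)\le\beta^{d(x)}$, with the value $0$ when $d(x)=\infty$. Taking the supremum over $\pi$ and the maximum over $u$ yields $V^\star_{\mathrm{SA},\beta}(x)\le\beta^{d(x)}$.

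For the lower bound, take $d(x)<\infty$ and a minimizing sequence $(u_0,\dots,u_{d-1})$.

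\emph{Main obstacle.} We must realize this open-loop sequence with a state-feedback policy evaluated through $Q(x,u_0)$. A single path visits distinct states: a repeat would create a loop, and cutting it would give a shorter safe arrival, contradicting minimality. Hence we can set $\pi(x_i)=u_i$ for $i\ge1$ on these distinct states and define $\pi$ arbitrarily elsewhere. The rollout from $(x,u_0)$ then reproduces the path, so $Q^\pi_{\mathrm{SA},\beta}(x,u_0)=\beta^{d(x)}$.

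This requires the maximum over $u$ in~\eqref{app_eq: save_optimal_discounted_values} to be attained. That holds here because the value $\beta^{d}$ is achieved by $u_0$.

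\textbf{Formula for $Q^\star_{\mathrm{SA},\beta}$.} We use the self-consistency relation~\eqref{app_eq: save_discounted_self_consistency} and split into three cases.
\begin{itemize}
\item If $x\in\B$, then $\vb b(x)=1$ and $\vb c(x)=0$, so $Q^\star_{\mathrm{SA},\beta}(x,u)=1$.
\item If $x\in\F$, both indicators vanish and $Q^\star_{\mathrm{SA},\beta}(x,u)=0$.
\item If $x\in\calH$, then $Q^\pi_{\mathrm{SA},\beta}(x,u)=\beta\,V^\pi_{\mathrm{SA},\beta}(F(x,u))$. The supremum over $\pi$ of the right-hand side equals $\beta V^\star_{\mathrm{SA},\beta}(F(x,u))$. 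The lower bound uses the same path construction started at $x^+=F(x,u)$, with $\pi(x)$ left free because $u$ is fixed. We handle the possibility that $x$ reappears on the path from $x^+$ by the same loop-cutting argument: if $x$ lay on a minimal path from $x^+$, the tail from $x$ would still be a valid witness. Concretely, we choose $\pi$ only on the states $x_1,x_2,\dots$ after $x^+$; none of these equals $x$ unless the arrival is shorter, and in that case the bound is only improved or handled directly. Applying the first part then gives $\beta^{1+d(F(x,u))}$, or $0$ if $d(F(x,u))=\infty$.
\end{itemize}

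\textbf{Corollaries.} Since $\beta^{d}>0$ exactly when $d<\infty$, the support statement~\eqref{app_eq: save_region_support_equivalence} follows from the first part combined with the finite-step characterization $\mathcal R_{\mathrm{SA}}=\{d<\infty\}$ proved earlier. The limit~\eqref{app_eq: save_optimal_beta_limit} holds pointwise, because $\beta^{d(x)}\to1$ as $\beta\uparrow1$ when $d(x)<\infty$, and both sides are $0$ otherwise, using $V^\star_{\mathrm{SA}}\in\{0,1\}$.
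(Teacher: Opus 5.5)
Your proof is correct and follows the same route as the paper's: a deterministic rollout yields $\beta^{\tau_\B}$ or $0$, so optimizing the value amounts to minimizing the safe-arrival time, and the $Q$-formula comes from separating the first action. Beyond this, you make rigorous a step the paper only asserts, namely that an open-loop minimizer defining $d(x)$ can be realized by a stationary feedback policy because a minimal path visits distinct states. Your aside about $x$ reappearing on the minimal path from $F(x,u)$ is unnecessary, and its phrasing about a ``shorter'' arrival is confused: since the first action $u$ is imposed externally, $\pi(x)$ is unconstrained, so you can simply assign $\pi$ along the distinct states of that path, including at $x$ if it occurs.
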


\begin{proof}
    Fix $x\in\X$. Under deterministic dynamics and a deterministic policy, either the trajectory arrives at $\B$ before $\F$ in exactly $N$ steps for some finite $N$, or the safe-arrival event does not occur. In the first case, the discounted value is $\beta^N$; in the second case, it is zero. Since $\beta\in(0,1)$, maximizing the discounted value among successful policies is equivalent to minimizing the number of steps required for safe arrival. The minimum such number is $d(x)$. Hence~\eqref{app_eq: save_time_optimal_value} holds.

    The action-value expression~\eqref{app_eq: save_time_optimal_q_value} follows by separating the first action. If $x\in\B$, safe arrival has already occurred at time $0$, so the value is $1$. If $x\in\F$, failure has already occurred at time $0$, so the value is $0$. If $x\in\calH$, taking action $u$ first moves the system to $F(x,u)$. Safe arrival remains possible exactly when $d(F(x,u))<\infty$, in which case the shortest safe arrival time after taking $u$ is $1+d(F(x,u))$.

    The equivalence~\eqref{app_eq: save_region_support_equivalence} follows from~\eqref{app_eq: save_time_optimal_value} and the definition of $\mathcal R_{\mathrm{SA}}$. Finally, if $d(x)<\infty$, then $\beta^{d(x)}\to1$ as $\beta\uparrow1$; if $d(x)=\infty$, then $V_{\mathrm{SA},\beta}^{\star}(x)=0$ for every $\beta\in(0,1)$. This proves~\eqref{app_eq: save_optimal_beta_limit}.
\end{proof}

\subsection{Behavior of Optimal Safe-Arrival Actions}\label{app: save_behavior}

The preceding results imply a complete qualitative description of optimal safe-arrival actions across the state space. The relevant state-space partition is
\begin{align}\label{app_eq: save_state_partition}
    \X
    =
    \B
    \,\dot\cup\,
    \F
    \,\dot\cup\,
    (\mathcal R_{\mathrm{SA}}\cap\calH)
    \,\dot\cup\,
    (\calH\setminus\mathcal R_{\mathrm{SA}}).
\end{align}

\begin{proposition}[Qualitative Structure of Optimal Safe-Arrival Actions]
    The optimal safe-arrival values and actions satisfy the following four cases.
    \begin{enumerate}[label=\arabic*.,itemsep=0ex,topsep=0.25ex]
        \item If $x\in\B$, then
        \begin{align}
            V_{\mathrm{SA}}^{\star}(x)=1,
            \qquad
            V_{\mathrm{SA},\beta}^{\star}(x)=1.
        \end{align}
        Every action has the same value, because safe arrival has already occurred at time $0$.

        \item If $x\in\F$, then
        \begin{align}
            V_{\mathrm{SA}}^{\star}(x)=0,
            \qquad
            V_{\mathrm{SA},\beta}^{\star}(x)=0.
        \end{align}
        Every action has the same value, because failure has already occurred at time $0$.

        \item If $x\in\mathcal R_{\mathrm{SA}}\cap\calH$, then an action $u\in\U$ is optimal for the undiscounted optimal action-value if and only if
        \begin{align}\label{app_eq: save_undiscounted_optimal_action_condition}
            Q_{\mathrm{SA}}^{\star}(x,u)=1.
        \end{align}
        Equivalently,
        \begin{align}\label{app_eq: save_undiscounted_successor_condition}
            F(x,u)\in\mathcal R_{\mathrm{SA}}.
        \end{align}
        Thus an undiscounted optimal first action preserves safe-arrival feasibility. A complete policy must still select future actions so that $\B$ is reached in finite time; the undiscounted criterion does not distinguish among successful policies with different arrival times.

        For the discounted criterion,
        \begin{align}\label{app_eq: save_discounted_action_value_inside_region}
            Q_{\mathrm{SA},\beta}^{\star}(x,u)
            =
            \begin{cases}
                \beta^{1+d(F(x,u))}, & F(x,u)\in\mathcal R_{\mathrm{SA}},\\
                0, & F(x,u)\notin\mathcal R_{\mathrm{SA}}.
            \end{cases}
        \end{align}
        Hence $u$ is discounted-optimal at $x$ if and only if
        \begin{align}\label{app_eq: save_discounted_optimal_action_condition}
            F(x,u)\in\mathcal R_{\mathrm{SA}}
            \qquad\text{and}\qquad
            d(F(x,u))=d(x)-1.
        \end{align}
        Therefore a discounted optimal policy decreases the minimum safe arrival time by one at every pre-arrival step and reaches $\B$ in exactly $d(x)$ steps.

        \item If $x\in\calH\setminus\mathcal R_{\mathrm{SA}}$, then
        \begin{align}
            V_{\mathrm{SA}}^{\star}(x)=0,
            \qquad
            V_{\mathrm{SA},\beta}^{\star}(x)=0.
        \end{align}
        No policy can safely arrive at $\B$ from such a state. Hence every action has value zero under the safe-arrival objectives unless an additional secondary criterion is imposed.
    \end{enumerate}
\end{proposition}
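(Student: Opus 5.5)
The proof goes through the four cases of the partition \eqref{app_eq: save_state_partition}, using only the closed forms from Proposition~\ref{app_prop: save_discounted_time_optimality}, the undiscounted Bellman equation \eqref{app_eq: save_optimal_undiscounted_bellman}, and the finite-step characterization of $\mathcal R_{\mathrm{SA}}$.

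\textbf{Cases 1 and 2.} These are immediate from the indicator representation. If $x\in\B$, then $\vb b(x)=1$ and $\vb c(x)=0$. The self-consistency relations give $Q_{\mathrm{SA}}^{\pi}(x,u)=Q_{\mathrm{SA},\beta}^{\pi}(x,u)=1$ for every $\pi$ and $u$, so the optimal values equal $1$ and all actions tie. If $x\in\F$, then $\vb b(x)=\vb c(x)=0$, so every value is $0$ and again all actions tie.

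\textbf{Case 3, undiscounted part.} Let $x\in\mathcal R_{\mathrm{SA}}\cap\calH$. Then $V_{\mathrm{SA}}^\star(x)=1$ by definition of $\mathcal R_{\mathrm{SA}}$, and $Q_{\mathrm{SA}}^\star\le 1$ always. Hence $u$ is optimal if and only if $Q_{\mathrm{SA}}^\star(x,u)=1$.

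To show this is equivalent to $F(x,u)\in\mathcal R_{\mathrm{SA}}$, apply \eqref{app_eq: save_optimal_undiscounted_bellman} with $\vb b(x)=0$ and $\vb c(x)=1$. This gives $Q_{\mathrm{SA}}^\star(x,u)=V_{\mathrm{SA}}^\star(F(x,u))$, using that $V^\star$ is the max over $u^+$ of $Q^\star$. That quantity equals $1$ exactly when $F(x,u)\in\mathcal R_{\mathrm{SA}}$.

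One subtlety is that the supremum over $\pi$ might not be attained, so one should check that $V^\star\in\{0,1\}$. This holds because in the deterministic setting each $Q^\pi$ takes only the values $0$ or $1$. The remark that undiscounted optimality does not rank arrival times then follows: every successful policy has value exactly $1$.

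\textbf{Case 3, discounted part.} For $x\in\calH$, the expression \eqref{app_eq: save_discounted_action_value_inside_region} is the middle and last branch of \eqref{app_eq: save_time_optimal_q_value}. Here $F(x,u)\in\mathcal R_{\mathrm{SA}}$ is rewritten as $d(F(x,u))<\infty$ via \eqref{app_eq: save_region_support_equivalence}.

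Optimality means $Q_{\mathrm{SA},\beta}^\star(x,u)=V_{\mathrm{SA},\beta}^\star(x)=\beta^{d(x)}$, where $d(x)\ge1$ because $x\notin\B$. Since $\beta\in(0,1)$ makes $k\mapsto\beta^k$ injective, this is equivalent to $1+d(F(x,u))=d(x)$.

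It remains to check that such an action exists. Every $x\in\calH$ with $d(x)<\infty$ has a successor with $d=d(x)-1$, namely the first step of a minimum-time safe-arrival sequence. It also holds that $d(F(x,u))\ge d(x)-1$ for every $u$, because $x\in\calH$ can prepend one step to any sequence from $F(x,u)$. Together these show the maximum is attained precisely at actions decreasing $d$ by one.

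Iterating, a discounted-optimal policy strictly decreases $d$ by one each step while in $\calH$. The inequality $d<\infty$ keeps the trajectory out of $\F$, since $d=\infty$ on $\F$ ($\F$ cannot start any sequence). The trajectory therefore reaches $\B$ in exactly $d(x)$ steps.

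\textbf{Case 4.} If $x\in\calH\setminus\mathcal R_{\mathrm{SA}}$, then $d(x)=\infty$ and Proposition~\ref{app_prop: save_discounted_time_optimality} gives both optimal values equal to $0$. Since all values are nonnegative and bounded by the optimum, every action has value zero.

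The only mildly delicate step is the discounted action characterization in Case 3. There I must justify the lower bound $d(F(x,u))\ge d(x)-1$ and the existence of a $d$-decreasing action, rather than reading them off loosely. I would prove both directly from Definition~\ref{app_def: save_minimum_time_to_go} by concatenating or truncating control sequences.
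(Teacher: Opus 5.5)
Your proposal is correct and follows essentially the same route as the paper: direct evaluation of the indicators on $\B$ and $\F$ for Cases~1--2, the undiscounted Bellman equation with $\vb b(x)=0$, $\vb c(x)=1$ for the undiscounted half of Case~3, the closed forms $Q_{\mathrm{SA},\beta}^\star(x,u)=\beta^{1+d(F(x,u))}$ and $V_{\mathrm{SA},\beta}^\star(x)=\beta^{d(x)}$ for the discounted half, and $d(x)=\infty$ for Case~4. Your extra checks (that $Q_{\mathrm{SA}}^\star$ is $\{0,1\}$-valued, the bound $d(F(x,u))\ge d(x)-1$ together with the existence of an action attaining it, and $d=\infty$ on $\F$ along the optimal trajectory) spell out what the paper compresses into ``by the definition of $d(x)$, the minimum feasible successor distance is $d(x)-1$.''
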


\begin{proof}
    If $x\in\B$, then $\tau_\B=0<\tau_\F$, so both the undiscounted and discounted optimal values equal $1$, independently of the chosen action. If $x\in\F$, then $\tau_\F=0$, so the safe-arrival event cannot occur, and both values equal $0$.

    Now let $x\in\mathcal R_{\mathrm{SA}}\cap\calH$. Since $x\in\mathcal R_{\mathrm{SA}}$, we have $V_{\mathrm{SA}}^{\star}(x)=1$. Therefore an action $u$ is optimal for the undiscounted objective exactly when $Q_{\mathrm{SA}}^{\star}(x,u)=1$. By~\eqref{app_eq: save_optimal_undiscounted_bellman}, and since $\vb b(x)=0$ and $\vb c(x)=1$ for $x\in\calH$,
    \begin{align}
        Q_{\mathrm{SA}}^{\star}(x,u)
        =
        V_{\mathrm{SA}}^{\star}(F(x,u)).
    \end{align}
    Thus $Q_{\mathrm{SA}}^{\star}(x,u)=1$ if and only if $F(x,u)\in\mathcal R_{\mathrm{SA}}$.

    The discounted action-value expression~\eqref{app_eq: save_discounted_action_value_inside_region} follows from~\eqref{app_eq: save_time_optimal_q_value}. Since $\beta\in(0,1)$, maximizing $\beta^{1+d(F(x,u))}$ over feasible successors is equivalent to minimizing $d(F(x,u))$. By the definition of $d(x)$, the minimum feasible successor distance is $d(x)-1$. Therefore $u$ is discounted-optimal if and only if~\eqref{app_eq: save_discounted_optimal_action_condition} holds.

    Finally, if $x\in\calH\setminus\mathcal R_{\mathrm{SA}}$, then $d(x)=\infty$. Hence no policy can safely arrive at $\B$ from $x$, so both the undiscounted and discounted optimal values are zero. This proves the final case.
\end{proof}

\begin{remark}[Roles of Undiscounted and Discounted \rafull]
    The undiscounted \ralow defines the safe-arrival specification and the feasible safe-arrival region
    \begin{align}
        \mathcal R_{\mathrm{SA}}
        =
        \{x\in\X:V_{\mathrm{SA}}^{\star}(x)=1\}.
    \end{align}
    The discounted \ralow preserves this region,
    \begin{align}
        \mathcal R_{\mathrm{SA}}
        =
        \{x\in\X:V_{\mathrm{SA},\beta}^{\star}(x)>0\},
        \qquad \beta\in(0,1),
    \end{align}
    and further ranks successful policies according to their arrival time. Thus the undiscounted value provides the feasibility semantics, while the discounted value provides a contraction-based, time-sensitive objective for RL training.
\end{remark}


\section{\oursRA{} Details and Proofs}\label{app: ours_details_and_proofs}

\subsection{Base Set Certification Proof}\label{app: base_set_cert}

\basesetthm*

\begin{proof}\label{proof: base_set}
    By Assumption~\ref{assume: local_stable}(b), there exists $K\in\R^{m\times n}$ such that $A-BK$ is Hurwitz. Fix any such $K$ and any $Q\succ0$. By the Lyapunov equation for stable linear systems~\citep[Thm 4.6]{khalil2002nonlinear}, there exists a unique $P\succ0$ with
    \begin{align}\label{eq: lyap_general}
        (A-BK)^\top P+ P(A-BK) = -Q.
    \end{align}
    Define the error coordinate $e:=x-x^\star$ and the candidate Lyapunov function as
    \begin{align}\notag%
        V(e) = e^\top Pe.
    \end{align}
    
    The closed-loop nonlinear dynamics under $\pi_\B$ is $\dot x = f(x) + g(x)(u^\star - K(x-x^\star))$. Setting $F(e) := f(x^\star + e) + g(x^\star+e)(u^\star - K e)$, a Taylor expansion of $F$ at $e=0$ combined with $F(0) = f(x^\star) + g(x^\star) u^\star = 0$ (Assumption~\ref{assume: local_stable}(a)) gives
    \begin{align}\label{eq: taylor}
        \dot e \;=\; F(e) \;=\; (A-BK)\,e \,+\, \rho(e),
        \qquad \|\rho(e)\| = o(\|e\|) \text{ as } e\to 0,
    \end{align}
    where the remainder $\rho$ inherits the smoothness of $f$ and $g$. 
    Using (\ref{eq: lyap_general}) and (\ref{eq: taylor}),
    \begin{align}\notag
        \dot V(e)
        \;=\; 2\,e^\top P\,\dot e
        \;=\; -\, e^\top Q\, e \;+\; \underbrace{2\,e^\top P\,\rho(e)}_{=:\,R(e)},
    \end{align}
    with $|R(e)| \le 2\|P\|\,\|e\|\,\|\rho(e)\| = o(\|e\|^2)$. Hence there exists $r_1>0$ such that for every $e$ with $0 < \|e\| \le r_1$,
    \begin{align}\notag%
        \dot V(e) \;\le\; -\tfrac{1}{2}\lambda_{\min}(Q)\,\|e\|^2 \;<\; 0.
    \end{align}
    Let $c_1 := \lambda_{\min}(P)\, r_1^2$. 
    Then $\{e : V(e)\le c_1\}\subseteq\{e:\|e\|\le r_1\}$ and $\dot V(e)<0$ on $\{0 < V(e)\le c_1\}$, establishing Thm.~\ref{thm: base_set}(c) and (d) for any $c\le c_1$ by the standard Lyapunov invariance theorem~\citep[Thm.~4.1]{khalil2002nonlinear}.
    
    It remains to shrink $c_1$ so that Thm.~\ref{thm: base_set}(a) and (b) also hold. Since $x^\star\in\mathrm{int}(\calS)$ and $\calS = \{x : h_\calS(x)\ge 0\}$ is the zero-superlevel set of the continuous function $h_\calS$, there exists $c_2>0$ such that $\B_{c_2}\subseteq\calS$. Analogously, since $u^\star \in\mathrm{int}(\U)$, $\U$ is compact, and the map $\pi_\B(x)=u^\star - K(x-x^\star)$ is continuous, there exists $c_3>0$ such that $\pi_\B(\B_{c_3})\subseteq\U$. Taking
    \begin{align}\notag%
        \bar c \;:=\; \min\{c_1, c_2, c_3\} \;>\; 0
    \end{align}
    yields all four properties simultaneously for every $c\in(0,\bar c]$.
\end{proof}

\subsection{\oursRA ~Decomposition Proof}\label{app: decomposition_proof}

\certifythm*

\begin{proof}
    We first prove (i). Let $\pi\in\Pi_{\bcbf}(\lbpstar)$ and let
    $x_0\in\X_0$. By Subproblem~\ref{prob: sub2},
    $\X_0\subseteq\CT(\lbpstar)$, so $x_0\in\CT(\lbpstar)$. The set
    $\CT(\lbpstar)$ is the zero-superlevel set of the implicit BCBF
    \begin{align}\notag%
        h_{\CT}(x)
        =
        \min\left\{
            \min_{\td\in[0,T]}
            h_\calS(\Phi_{\lbpstar}(x,\td)),
            \;
            h_\B(\Phi_{\lbpstar}(x,T))
        \right\}.
    \end{align}
    By construction of $\Kb_{\bcbf}(x;\lbpstar)$, every action
    $u=\pi(x)$ with $x\in\CT(\lbpstar)$ satisfies the pointwise BCBF
    inequalities on the backup rollout and the terminal base-set condition.
    These pointwise inequalities are a sufficient condition for the CBF condition
    on $h_{\CT}$~\citep[Prop.~2]{backupCBF}. Therefore $\CT(\lbpstar)$ is
    forward invariant under any policy in $\Pi_{\bcbf}(\lbpstar)$. Since
    $\CT(\lbpstar)\subseteq\calS$, the resulting trajectory satisfies
    $x^\pi(t;x_0)\in\calS$ for all $t\ge0$. Moreover,
    $\Kb_{\bcbf}(x;\lbpstar)\subseteq\U$ by definition, so the executed input
    also satisfies the actuator constraint. Hence
    $\pi\in\Pi_{\safe}$, proving
    $\Pi_{\bcbf}(\lbpstar)\subseteq\Pi_{\safe}$.
    
    Statement (ii) follows directly from the definition of
    $\piphistar$ as a solution of Subproblem~\ref{prob: sub1}. Since
    Subproblem~\ref{prob: sub1} optimizes $J$ over
    $\Pi_{\bcbf}(\lbpstar)$,
    \[
        J(\piphistar)
        =
        \sup_{\pi\in\Pi_{\bcbf}(\lbpstar)}J(\pi).
    \]
    The inequality
    \[
        \sup_{\pi\in\Pi_{\bcbf}(\lbpstar)}J(\pi)
        \quad\le\quad
        \sup_{\pi\in\Pi_{\safe}}J(\pi)
    \]
    then follows from part (i).
    
    Finally, suppose there exists a globally optimal safe policy
    $\pi^\star_{\safe}\in\Pi_{\bcbf}(\lbpstar)$. Since
    $\piphistar$ maximizes $J$ over $\Pi_{\bcbf}(\lbpstar)$,
    \[
        J(\piphistar)
        \ge
        J(\pi^\star_{\safe}).
    \]
    But by part (i), $\piphistar\in\Pi_{\safe}$, and
    $\pi^\star_{\safe}$ is globally optimal over $\Pi_{\safe}$, so
    \[
        J(\piphistar)
        \le
        J(\pi^\star_{\safe}).
    \]
    Thus $J(\piphistar)=J(\pi^\star_{\safe})$, proving that
    $\piphistar$ is globally optimal for Problem~\ref{prob: main}.
\end{proof}

\subsection{\oursRA ~Safety Guarantee Proof}\label{app: ps2_safety_proof}

\safetythm*

\begin{proof}
    For every $x\in \CT(\lbpstar)$, the control-invariant layer returns the projection of $\pinom(x)$ onto $\Kb_\bcbf(x;\lbpstar)$. Hence
    \begin{equation}\notag
        \pi^\phi_{\mathrm{\ours}}(x;\lbpstar)\in \Kb_\bcbf(x;\lbpstar)
        \qquad
        \forall x\in \CT(\lbpstar),
    \end{equation}
    which is exactly the defining property of $\Pi_{\bcbf}(\lbpstar)$. Thus, the following holds:
    \begin{equation}\notag
        \pi^\phi_{\mathrm{\ours}} \in \Pi_{\bcbf}(\lbpstar).
    \end{equation}
    The conclusion then follows immediately from Thm.~\ref{thm: decomposition}. 
\end{proof}

\subsection{\oursRA ~Universal Approximation Proof}

\isrestatetrue
\uatthm*
\isrestatefalse

\begin{proof}
    For every $x\in\Omega\subseteq\CT(\lbpstar)$, the set $\Kb_\bcbf(x;\lbpstar)$ is nonempty by BCBF feasibility and convex because $\U$ is convex and the BCBF constraints are affine in $u$ under the fixed backup flow. 
    Thus~\eqref{eq: backup_layer} is precisely a \hardCVX{} projection with state-dependent convex feasible set $\Kb_\bcbf(x;\lbpstar)$. 
    The result follows directly from the universal-approximation theorem for \hardCVX~\citep[Thm.~12]{hardnet}.
\end{proof}

\subsection{Constructing the CIL from Differentiable Simulators}\label{app: ps2_scalability_argument}
As briefly discussed in Sec.~\ref{subsec: scalability}, \oursRA{} does not fundamentally require a hand-derived symbolic decomposition $f(x)+g(x)u$. 
Suppose instead that a differentiable simulator $F_{\dt}$ is available. For a fixed backup policy, define the candidate-control-input rollout
\begin{align}\notag%
    z_0(u)=F_{\dt}(x,u),\qquad
    z_{i+1}(u)=F_{\dt}\bigl(z_i(u),\lbpstar(z_i(u))\bigr).
\end{align}
For each backup rollout condition $\ell_i(z_i(u))\ge 0$, corresponding to either a sampled safe-set condition or the terminal base-set condition, automatic differentiation gives
\begin{align}\notag%
    a_i(x)=\nabla_u \ell_i(z_i(u))\big|_{u=u_b},
    \qquad u_b=\lbpstar(x),
\end{align}
and hence the local affine row
\begin{align}\notag%
    \ell_i(z_i(u_b)) + a_i(x)^\top (u-u_b) \ge 0.
\end{align}
These simulator-generated rows can be inserted into the same control-invariant layer. 
For high-dimensional simulators, one need not materialize the full state sensitivity matrix, as vector--Jacobian products or reduced sensitivities for only the constraint-relevant outputs are sufficient.

This simulator-based variant should be interpreted as a discrete-time, locally affine realization of \oursRA{}. 
With an exact control-affine model, the rows in Eq.~\eqref{eq: bcbf_set} are exact and the guarantees in Thm.~\ref{thm: ps2_safety} apply directly. 
With a general differentiable simulator, the local rows are first-order approximations, and in that case one can add conservative margins or impose a trust region around $u_b$. 
Thus, the main scaling bottleneck is not closed-form dynamics or explicit invariant-set synthesis, but access to a differentiable rollout model whose accuracy is sufficient over the backup horizon.
Employing \oursRA{} with a differentiable simulator, without the analytical dynamics, is left as future work.


\begin{figure}[ht!]
    \centering
    \includegraphics[width=0.8\linewidth]{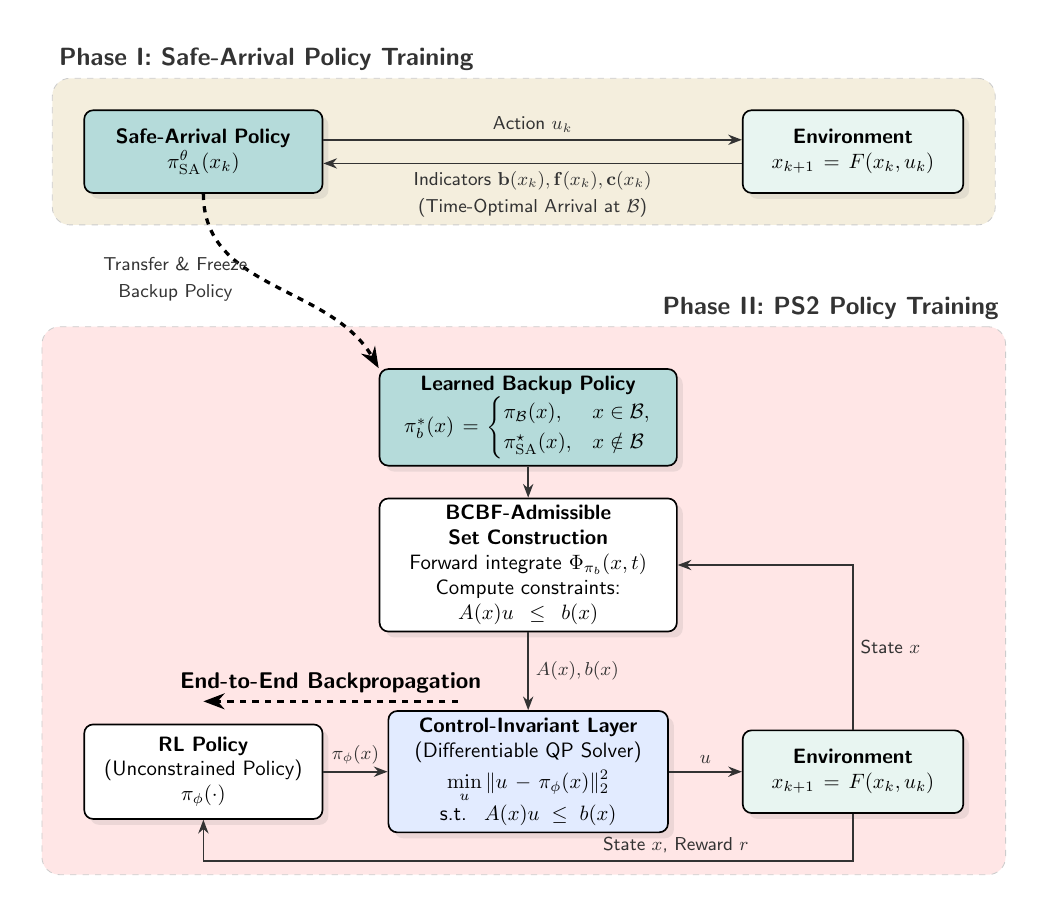}
    \caption{Block-diagram view of the two-phase \oursRA{} framework, complementing Fig.~\ref{fig: framework_hero}. 
    \textbf{Phase I (top)}: the safe-arrival policy $\piarrL$ is trained against the environment using the indicators $\vb b(x_k),\vb f(x_k), \vb c(x_k)$ of $\B,\F$, and the continuation set, optimizing the discounted \ralow{} Bellman recursion.
    \textbf{Phase II (bottom)}: the composed backup policy $\pi_b^\star$ is frozen and used to construct the BCBF-admissible set online for each timestep. The control value from the RL policy $\pinom$ is projected onto this set by the control-invariant layer (a differentiable QP), and task-reward gradients propagate end-to-end through the projection back into $\pinom$.
    }
    \label{fig: framework_block}
\end{figure}

\subsection{Phase I: Safe-Arrival Policy Training}\label{app: save_beta}

Here, we expand on Phase~I beyond the high-level description in Sec.~\ref{sec: phase1}. 
Phase~I addresses Subproblem~\ref{prob: sub2} by learning a parameterized safe-arrival policy $\piarrL$ that enlarges the backup-induced control-invariant set $\CT(\lbp)$,  given the certified base pair $(\B,\pi_\B)$ from Thm.~\ref{thm: base_set}.
Fig.~\ref{fig: framework_block} (top) summarizes the resulting training loop: the policy interacts with the environment, stepwise indicators of $\B$, $\F$, and the continuation set are recorded along each rollout, and the parameters are updated with the discounted \ralow Bellman recursion~\eqref{eq: save_bellman}.


\begin{algorithm}[h]
\caption{Safe-Arrival Policy Training}
\label{alg: save_training}
\begin{algorithmic}[1]
\Require sampler $\rho_{\mathrm{arr}}$ on $\Omega\setminus\B$, dynamics $F$, sets $\B$ and $\calS$, discount $\beta$
\State Initialize safe-arrival policy parameters $\theta$ and value/Q-function parameters
\While{not converged}
    \State Sample initial states $x_0\sim\rho_{\mathrm{arr}}$
    \State Collect rollouts under $\piarrL$ until $x_k\in\B$ or $x_k\in\F=\X\setminus\calS,$
    \State For each $x_k$, assign indicators $\vb b(x_k),\, \vb f(x_k),$ and $\vb c(x_k)=1-\vb b(x_k) - \vb f(x_k)$
    \State Update the chosen value/Q estimator using the safe-arrival value Bellman recursion~\eqref{eq: save_bellman}
    \State Improve $\theta$ with the chosen RL backbone
\EndWhile
\State \Return $\piarrL$ and freeze the composed $\lbp$ for Phase II
\end{algorithmic}
\end{algorithm}

The Phase I objective can be written as
\begin{equation}\label{appeq: save_phase1_surrogate}
    \max_{\theta}\; J_{\mathrm{SA},\beta}(\theta)
    :=
    \mathbb E_{x\sim \rho_{\mathrm{arr}}}\!\left[Q_{\mathrm{SA},\beta}^{\piarrL}(x,\piarrL(x))\right].
\end{equation}

Algorithm~\ref{alg: save_training} summarizes Phase I at the framework level. 
In practice, one can take $\Omega=\calS$, or a smaller design region of interest inside $\calS$, and choose $\rho_{\mathrm{arr}}$ to focus training on the portion of state space from which safe arrival is desired. 
Details on $\rho_{\mathrm{arr}}$ for each experiment are provided in App.~\ref{app: ps2_implementation_details}.
The only safe-arrival policy-specific ingredients are the indicators $\vb b,\vb f,\vb c$, first-hit termination at $\B$ or $\F$, and the objective function with the safe-arrival value. 
Everything else can be supplied by a chosen RL backbone, such as actor-critic, Q-learning, fitted value iteration, and related methods. 

\subsection{Phase II: Control-Invariant Layer and \ours~Policy}\label{app: backup_layer}
This appendix expands the control-invariant layer construction and \oursRA~policy training in Sec.~\ref{sec: main_method}, which corresponds to the bottom panel of Fig.~\ref{fig: framework_block}.
The main text defines the control-invariant layer as the projection of a nominal policy output onto the BCBF-admissible set $\Kb_{\bcbf}(x;\lbpstar)$. 
Here, we write the finite-mesh QP explicitly in the standard affine form and describe the differentiable solver used in implementation.

\subsubsection{Control-Invariant Layer Construction and Solver}\label{app: backup_layer_qp}

\paragraph{Construction of the BCBF-admissible input set.}
For this subsection, we allow the safe set and base set to be defined by multiple differentiable inequalities,
\begin{align}
    \calS &= \{x\in\X: h_{\calS,j}(x)\ge 0,\; j=1,\ldots,n_{\calS}\}, \\
    \B &= \{x\in\X: h_{\B,\ell}(x)\ge 0,\; \ell=1,\ldots,n_{\B}\}.
\end{align}
The scalar case used in the main text is recovered by setting $n_{\calS}=n_{\B}=1$. 
Let the finite backup mesh be $0=t'_0<t'_1<\cdots<t'_N=T$. 
For a fixed state $x$ and the frozen backup policy $\lbpstar$, define
\begin{align}
    x_i^b(x) &:= \Phi_{\lbpstar}(x,t'_i), &
    \Psi_i(x) &:= \Psi_{\lbpstar}(x,t'_i)
    = \frac{\partial \Phi_{\lbpstar}(x,t'_i)}{\partial x}, &
    f_i^b(x) &:= f_{\lbpstar}\!\left(x_i^b(x)\right).
\end{align}
The current-state control-affine terms are denoted by $f_0:=f(x)$ and $g_0:=g(x)$.

For every safety constraint component $h_{\calS,j}$ and backup node $t'_i$, the BCBF rollout condition is
\begin{align}
    \nabla h_{\calS,j}(x_i^b)^\top
    \Big[
        \Psi_i(x)\big(f_0+g_0u\big)-f_i^b(x)
    \Big]
    +
    \alpha_{\calS}\!\left(h_{\calS,j}(x_i^b)\right)
    \ge 0.
\end{align}
Equivalently, this is an affine inequality in the current input $u$:
\begin{align}
    a^{\calS}_{i,j}(x)^\top u \le b^{\calS}_{i,j}(x),
    \label{eq: app_safety_row}
\end{align}
with
\begin{align}
    a^{\calS}_{i,j}(x)^\top
    &:=
    -\nabla h_{\calS,j}(x_i^b)^\top \Psi_i(x) g_0,
    \label{eq: app_A_safety_row}
    \\
    b^{\calS}_{i,j}(x)
    &:=
    \alpha_{\calS}\!\left(h_{\calS,j}(x_i^b)\right)
    +
    \nabla h_{\calS,j}(x_i^b)^\top
    \Big[
        \Psi_i(x)f_0 - f_i^b(x)
    \Big].
    \label{eq: app_b_safety_row}
\end{align}

At the terminal node $T=t'_N$, each base-set constraint component gives
\begin{align}
    \nabla h_{\B,\ell}(x_N^b)^\top
    \Psi_N(x)\big(f_0+g_0u\big)
    +
    \alpha_{\B}\!\left(h_{\B,\ell}(x_N^b)\right)
    \ge 0,
\end{align}
or equivalently
\begin{align}
    a^{\B}_{\ell}(x)^\top u \le b^{\B}_{\ell}(x),
    \label{eq: app_terminal_row}
\end{align}
where
\begin{align}
    a^{\B}_{\ell}(x)^\top
    &:=
    -\nabla h_{\B,\ell}(x_N^b)^\top \Psi_N(x) g_0,
    \label{eq: app_A_terminal_row}
    \\
    b^{\B}_{\ell}(x)
    &:=
    \alpha_{\B}\!\left(h_{\B,\ell}(x_N^b)\right)
    +
    \nabla h_{\B,\ell}(x_N^b)^\top \Psi_N(x) f_0.
    \label{eq: app_b_terminal_row}
\end{align}

Stacking all safety rows and terminal rows gives
\begin{align}
    A_{\bcbf}(x)
    &:=
    \begin{bmatrix}
        A_{\calS}(x)\\
        A_{\B}(x)
    \end{bmatrix}
    \in \R^{n_c\times m},
    &
    b_{\bcbf}(x)
    &:=
    \begin{bmatrix}
        b_{\calS}(x)\\
        b_{\B}(x)
    \end{bmatrix}
    \in \R^{n_c},
    \label{eq: app_bcbf_stack}
\end{align}
where $n_c=(N+1)n_{\calS}+n_{\B}$. 
Thus the finite-mesh BCBF constraints can be written compactly as
\begin{align}
    A_{\bcbf}(x)u \le b_{\bcbf}(x).
    \label{eq: app_bcbf_Au_le_b}
\end{align}
For a boxed input limit set
\begin{align}
    \U = \{u\in\R^m: u_{\min}\le u\le u_{\max}\},
\end{align}
the actuator constraints are
\begin{align}
    A_{\U}u\le b_{\U},
    \qquad
    A_{\U}:=
    \begin{bmatrix}
        I_m\\
        -I_m
    \end{bmatrix},
    \qquad
    b_{\U}:=
    \begin{bmatrix}
        u_{\max}\\
        -u_{\min}
    \end{bmatrix}.
    \label{eq: app_input_box_rows}
\end{align}
Then, the control-invariant layer is a quadratic program:
\begin{align}
    \label{eq: app_zero_slack_backup_layer}
    \begin{split}
    &\calP_{\BL}(\pinom)(x)
    =
    \argmin_{u}
    \|u-\pinom(x)\|_2^{2}
    \\
    &\mathrm{s.t.}\quad
    A_{\bcbf}(x)u\le b_{\bcbf}(x),\;\;
    A_{\U}u\le b_{\U}.
    \end{split}
\end{align}

\paragraph{Slack-regularized QP.}
In implementation, we solve a slack-regularized version of~\eqref{eq: app_zero_slack_backup_layer}. 
Specifically, the QP decision variable is $z=[u^\top,\delta]^\top\in\R^{m+1}$, where $\delta\ge 0$ is a scalar slack shared across all BCBF rows:
\begin{align}
    \label{eq: app_slack_backup_layer}
    \begin{split}
    &(u^\star,\delta^\star)
    =
    \argmin_{u,\delta}
    \|u-\pinom(x)\|_2^{2}
    +
    \lambda_{\delta}\delta^2
    \\
    &\mathrm{s.t.}\quad
    A_{\bcbf}(x)u-\mathbf 1_{n_c}\delta \le b_{\bcbf}(x),
    \quad
    A_{\U}u\le b_{\U},
    \quad
    \delta\ge 0,
    \end{split}
\end{align}
where $\lambda_{\delta}\gg 1$. 
In our experiments, $\lambda_{\delta,\mathrm{unicycle}}=10^5$ and $\lambda_{\delta,\mathrm{quadrotor}}=10^6$.

Note that the control-invariant layer returns only the control component,
\begin{align}
    \calP_{\BL}^{\lambda_\delta}(\pinom)(x):=u^\star.
    \label{eq: app_slack_backup_layer_output}
\end{align}

The slack variable is included for numerical robustness. 
The exact BCBF theory is continuous-time, while the implemented layer uses a finite backup mesh, numerical integration of the backup flow and sensitivity, and a sampled-data controller whose action is held by ZOH. 
Near the boundary of $\CT(\lbpstar)$, these approximations can create small artificial infeasibilities even when the ideal continuous-time BCBF condition is feasible. 
A heavily penalized slack prevents such numerical infeasibilities from causing solver failure or unstable gradients. 
The slack relaxes only the BCBF rows, not the control limits $A_{\U}u\le b_{\U}$. 
Thus every executed control still satisfies $u^\star\in\U$. 
The zero-slack case $\delta^\star=0$ recovers the hard BCBF constraints in~\eqref{eq: app_bcbf_Au_le_b}, and the formal safety guarantees in Sec.~\ref{sec: main_method} correspond to this exact hard-constraint setting.

Table~\ref{tab: slack_ps2} confirms that \emph{\textbf{the slack-regularized control-invariant layer is effectively operating in the hard-constraint regime throughout evaluation}}. 
Across both environments and both \oursRA{} variants, the mean and median slack values are on the order of $10^{-5}$ or smaller, and even the 99th percentile remains at most $5.10\times 10^{-5}$. 
Moreover, steps with $\delta>10^{-4}$ never occur in the unicycle evaluations and occur only rarely in the quadrotor evaluations, even for the more aggressive learned variant. 
Thus, the slack term mainly absorbs small numerical discrepancies from finite-mesh backup integration and sampled-data implementation, rather than serving as a meaningful relaxation of the BCBF constraints. 
Consistent with this interpretation, the \oursRA{} variants retain the $100\%$ safety and strong tracking performance reported in the main experimental Tables~\ref{tab: hero_unicycle} and~\ref{tab: hero_quadrotor}.

\begin{table}[ht!]
    \centering
    \caption{Control-invariant layer's slack value during evaluation. For each \oursRA{} variant and task, we aggregate the scalar QP slack $\delta$ over all control-invariant layer solves from 10 seeds and 1,000 evaluation episodes per seed. 
    We report the mean, median, 99th percentile, and fraction of steps with $\delta>10^{-4}$. Across both tasks, the slack remains near $10^{-5}$ and large slack values are extremely rare, indicating that the relaxation acts primarily as a numerical safeguard rather than an active softening of the BCBF constraints.}
    \label{tab: slack_ps2}
    \resizebox{1.0\linewidth}{!}{%
    \begin{tabular}{lcccc cccc}
        \toprule
        & \multicolumn{4}{c}{\textbf{Unicycle} (Total Steps Each: $1.06 \times 10^6$)} 
        & \multicolumn{4}{c}{\textbf{Quadrotor} (Total Steps Each: $4.0 \times 10^6$)}\\
        \cmidrule(lr){2-5} \cmidrule(lr){6-9}
        \textbf{Safe RL} 
        & $\delta$-mean & $\delta$-med. & $\delta$-p$_{99}$ & Frac. $>10^{-4}$ & $\delta$-mean & $\delta$-med. & $\delta$-p$_{99}$ & Frac. $>10^{-4}$\\
        \midrule
        \textcolor{oursABP}{\oursABP} 
        & 9.92e-06 & 9.60e-06 & 1.61e-05 & 0\% & 5.41e-06 & 4.80e-06 & 1.47e-05 & $3.77\times 10^{-4}\%$\\
        \textbf{\textcolor{oursRA}{\oursSAV}} 
        & 8.97e-06 & 9.04e-06 & 1.56e-05 & 0\% & 2.49e-05 & 4.41e-06 & 5.10e-05 & $0.088\%$\\
        \bottomrule
    \end{tabular}
    }
\end{table}

\paragraph{Differentiable QP solver.}
The optimization problems~\eqref{eq: app_zero_slack_backup_layer} and~\eqref{eq: app_slack_backup_layer} are convex QP by construction. 
Our implementation solves them with \texttt{qpax} from~\citep{qpax}. 
All quantities used to assemble $A_{\bcbf}(x)$ and $b_{\bcbf}(x)$ are computed with differentiable JAX operations, including the backup rollout, sensitivity propagation, row construction, and the QP solve. 
Consequently, the policy update backpropagates directly through the network without an estimator or a separate surrogate loss. 
The differentiable solver provides the corresponding gradient used by the RL optimizer.

\subsubsection{\oursRA{} Policy Training through the Control-Invariant Layer}\label{app: ps2_policy_training_backup_layer}

Phase II freezes the composed backup policy
\begin{align}\notag%
    \lbpstar(x)=
    \begin{cases}
        \pi_{\B}(x), & x\in\B,\\
        \pi_{\mathrm{SA}}^\star(x), & x\notin\B,
    \end{cases}
\end{align}
and trains only the nominal task policy $\pinom$. 
For a deterministic actor, the executed \ours\ action is
\begin{align}\notag%
    u_k
    =
    \pi^{\phi}_{\mathrm{\ours}}(x_k;\lbpstar)
    :=
    \calP_{\BL}(\pinom)(x_k).
    \label{eq: app_ps2_policy_deterministic}
\end{align}
For a stochastic actor, the nominal action is first sampled from the policy distribution and the control-invariant layer is then applied deterministically:
\begin{align}\notag%
    \unom_k \sim \pinom(\cdot\mid x_k),
    \qquad
    u_k=\calP_{\BL}(\unom_k)(x_k). 
\end{align}
The environment and replay buffer therefore only see projected controls $u_k$. 
In our implementation, the actor loss evaluates the critic at the projected action $u^\star$, and target-policy actions in Bellman backups are projected through the same control-invariant layer. 
Thus the actor is optimized for task return inside the BCBF-admissible action set.

Because the control-invariant layer is differentiable, the actor-gradient path contains the QP solution:
\begin{align}\notag%
    \nabla_\phi \mathcal L_{\mathrm{actor}}
    =
    \frac{\partial \mathcal L_{\mathrm{actor}}}{\partial u^\star}
    \frac{\partial u^\star}{\partial \unom}
    \frac{\partial \unom}{\partial \phi}
\end{align}
This is the key distinction between \oursRA{} and a post-hoc non-differentiable shield: the projection layer is present during both data collection and gradient-based policy improvement. 
The resulting policy is therefore trained to produce high-return nominal actions whose projections remain close to the nominal command while satisfying the BCBF-induced constraints.

\section{Experiment Details}\label{app: exp_details}

\subsection{Unicycle Environment Details}\label{app: unicycle_experiment_details}

\paragraph{System dynamics.} We consider the following unicycle dynamics for the lane keeping experiment:
\begin{equation}\label{eq: unicycle_dynamics}
    \begin{bmatrix} \dot{y} \\ \dot{v} \\ \dot{\psi} \end{bmatrix} = 
    \begin{bmatrix} v \sin(\psi) \\ 0 \\ 0 \end{bmatrix} + 
    \begin{bmatrix} 0 & 0 \\ 1 & 0 \\ 0 & 1 \end{bmatrix} 
    \begin{bmatrix} a_\cmdtext \\ r_\cmdtext \end{bmatrix}
\end{equation}
where the state $x$ comprises the lateral position $y$, velocity $v$, and the heading angle $\psi$, with control inputs consisting of acceleration $a_\cmdtext\in[-5,5]$~m/s$^2$ and yaw rate $r_\cmdtext\in[-1,1]$~rad/s. 
We implement the continuous-time dynamics in sampled-data form with $\dt=0.05$ sec.
Moreover, all controllers employ a control frequency of 20 Hz, matching the sampling period.

\paragraph{Safe set.}
The safe set is specified as $\calS = \{x: |y|\le y_{\max}, |\psi|\le \psi_{\max}\}$, where $y_{\max}=1.8$~m and $\psi_{\max}=\pi/3$. 
Thus, we have four safety constraints: 
\begin{align}\notag
    \begin{split}
    h_{\calS,1}(x) = y_{\max} + y, \quad \quad
    h_{\calS,2}(x) &= y_{\max} - y, \\
    h_{\calS,3}(x) = \psi_{\max}+\psi, \quad \quad
    h_{\calS,4}(x) &= \psi_{\max}-\psi.
    \end{split}
\end{align}
Note that the safety specification constrains only the lateral position and heading, since these are the variables directly associated with leaving the lane.

\paragraph{Reference trajectory, reward, episode setup.} 
The unicycle task is to track a sinusoidal lane reference
\begin{align}\notag%
    y_{\reftext}(t) = 2.5\sin\!\left(\frac{2\pi t}{10}\right),\qquad
    v_{\reftext}(t)=5,
\end{align}
with heading reference
\begin{align}\notag%
    \psi_{\reftext}(t)
    =
    \arcsin\!\left(\frac{\dot y_{\reftext}(t)}{v_{\reftext}(t)}\right).
\end{align}
The reference lasts 20 sec and deliberately violates the lane constraint, since its lateral amplitude $2.5$~m exceeds $y_{\max}=1.8$~m. The trajectory-following reward uses the normalized tracking error for step $k$
\begin{align}\notag%
  \xi_k =
  \left[
    \frac{y_k-y_{\reftext,k}}{y_{\max}},\;\;
    \frac{v_k-v_{\reftext,k}}{5},\;\;
    \frac{\mathrm{wrap}(\psi_k-\psi_{\reftext,k})}{\psi_{\max}},\;\;
    \frac{a_{\cmdtext,k}}{a_{\max}},\;\;
    \frac{r_{\cmdtext,k}}{r_{\max}}
  \right],
\end{align}
and the step-wise reward is $r_k=-\xi_k^\top L\xi_k$ with $L=\diag(50,20,10,0.05,0.05)$, $a_{\max}=5$, and $r_{\max}=1$. 
The reward weights are selected by fine-tuning a vanilla tracking policy trained to follow the reference trajectory without any safety mechanism or safety-aware objective. 
That is, these weights were chosen purely for nominal sinewave tracking performance, without safety in mind. 
Since the environment uses $\dt=0.05$ and the reference is 20 seconds, the episode horizon is 400 steps.

\subsection{Quadrotor Environment Details}\label{app: quadrotor_experiment_details}

\paragraph{System dynamics.}
We consider the following quadrotor dynamics for the powerloop tracking experiment:
\begin{align}\label{eq: quadrotor_dynamics}
    \begin{split}
    \dot{\mathbf{p}} = \mathbf{v}, \quad
    \dot{\mathbf{v}} =  - g\mathbf{e}_3 + R(\mathbf{q}) (\acmd\mathbf{e}_3)&, \quad
    \dot{\mathbf{q}} = \frac{1}{2} \Xi(\mathbf{q}) \wcmd, \\
    \Xi(\mathbf{q}) = \begin{bmatrix} -q_x & -q_y & -q_z \\ q_w & -q_z & q_y \\ q_z & q_w & -q_x \\ -q_y & q_x & q_w \end{bmatrix}&\notag%
    \end{split}
\end{align}
where $\Xi(\mathbf{q})$ is the quaternion kinematic matrix. The state $x=[\mathbf{p}^\top, \mathbf{v}^\top, \mathbf{q}^\top]^\top  \in \mathbb R^{10}$ consists of the inertial position $\mathbf{p} = [p_x, p_y, p_z]^\top$, the inertial linear velocity $\mathbf{v} = [v_x, v_y, v_z]^\top$, and the unit quaternion $\mathbf{q}^\top=[q_w, q_x, q_y, q_z]^\top$ is the global, inertial orientation. The control input $u=[\acmd,\wcmd^\top]^\top$ consists of the mass-normalized thrust, $\acmd\in[0,4g]$, for gravitational acceleration $g=9.81m/s^2$, and the body rate, $\wcmd=[\omega_x,\omega_y,\omega_z]^\top,$ where $\omega_i\in[-18,18]\text{rad/s},i\in\{x,y,z\}$, is defined with respect to the quadrotor's body frame. 
Here, $\dt = 0.02$ sec, and the control frequency is 50 Hz for all controllers.

\paragraph{Safe set.}
The safe set for the quadrotor is defined as $\calS = \{ x: p_z\le z_{\text{ceil}} \}$, where $z_{\text{ceil}} = 3$m represents a hard ceiling. 
There is no safety constraint for the ground.
Note that $h_\calS(x)=z_{\mathrm{ceil}}-z$ has relative degree 2 for thrust, but the relative degree for bodyrate inputs is higher. 

\paragraph{Reference trajectory, reward, episode setup.} 
The quadrotor task tracks a powerloop reference inspired by~\citep{deepdroneacrobatics}, shown in Fig.~\ref{fig: quadrotor_reference}. 
The reference follows a vertical circular loop of radius 1.5m centered at $[0,0,2]^\top$m, starting at the bottom of the loop with tangential speed 4.5m/s. 
The resulting reference completes one full loop in approximately 2.1 sec and is sampled at $\dt=0.02$ sec. 
It is intentionally unsafe, as the loop apex exceeds the ceiling $z_{\mathrm{ceil}}=3$m, and the tangential speed exceeds the free-fall threshold $\varepsilon\sqrt{1.5g}$ ($\varepsilon=1.1$) required for dynamic feasibility at the apex~\citep{deepdroneacrobatics}. 
The attitude reference simultaneously commands an aggressive $360^\circ$ flip, forcing the controller to trade off near-ceiling translational tracking, agile attitude-rate tracking, and hard safety enforcement at the most dynamically constrained portion of the maneuver.

At step $k$, the reward is the negative weighted tracking cost
\begin{align}
  r_k =& -\Big(
      w_{p,\mathrm{xy}}\|\vb p_{x,y}-\vb p_{\reftext,x,y}\|_2^2
      + w_{p,z}(p_z-p_{\reftext,z})^2
      + w_v\|\vb v-\vb v_{\reftext}\|_2^2 \notag\\
      &+ w_{\mathrm{att}}\|\vb e_{\mathrm{att}}\|_2^2
      + \|\vb\omega_\cmdtext-\vb\omega_{\reftext}\|_{W_{\vb\omega}}^2
      + w_a a_\cmdtext^2
      + w_\Omega\|\vb\omega_\cmdtext\|_2^2
    \Big),\notag
\end{align}
where
$\vb e_{\mathrm{att}}=\operatorname{sgn}(q_{e,w})\,\vb q_{e,\mathrm{xyz}}$ and
$\vb q_e=\vb q_{\reftext}\otimes\vb q^\star$.
The weights are
\begin{align}
  w_{p,\mathrm{xy}} &= 2.5, & w_{p,z} &= 2.0, & w_v &= 4.0, & w_{\mathrm{att}} &= 16.0, \\
  W_{\vb\omega} &= \diag(0.10,0.20,0.05),& w_a &= 0.01, & w_\Omega &= 0.01. \notag
\end{align}
As in the unicycle experiment, these weights were selected by fine-tuning a vanilla tracking policy with no safety mechanism, and purely for nominal powerloop tracking of the aggressive position and attitude references shown in Fig.~\ref{fig: quadrotor_reference}.
The environment uses $\dt=0.02$ sec and an episode horizon of 106 steps, identical to the length of the powerloop reference.

\begin{figure}[h!]
    \centering
    \includegraphics[width=0.5\linewidth]{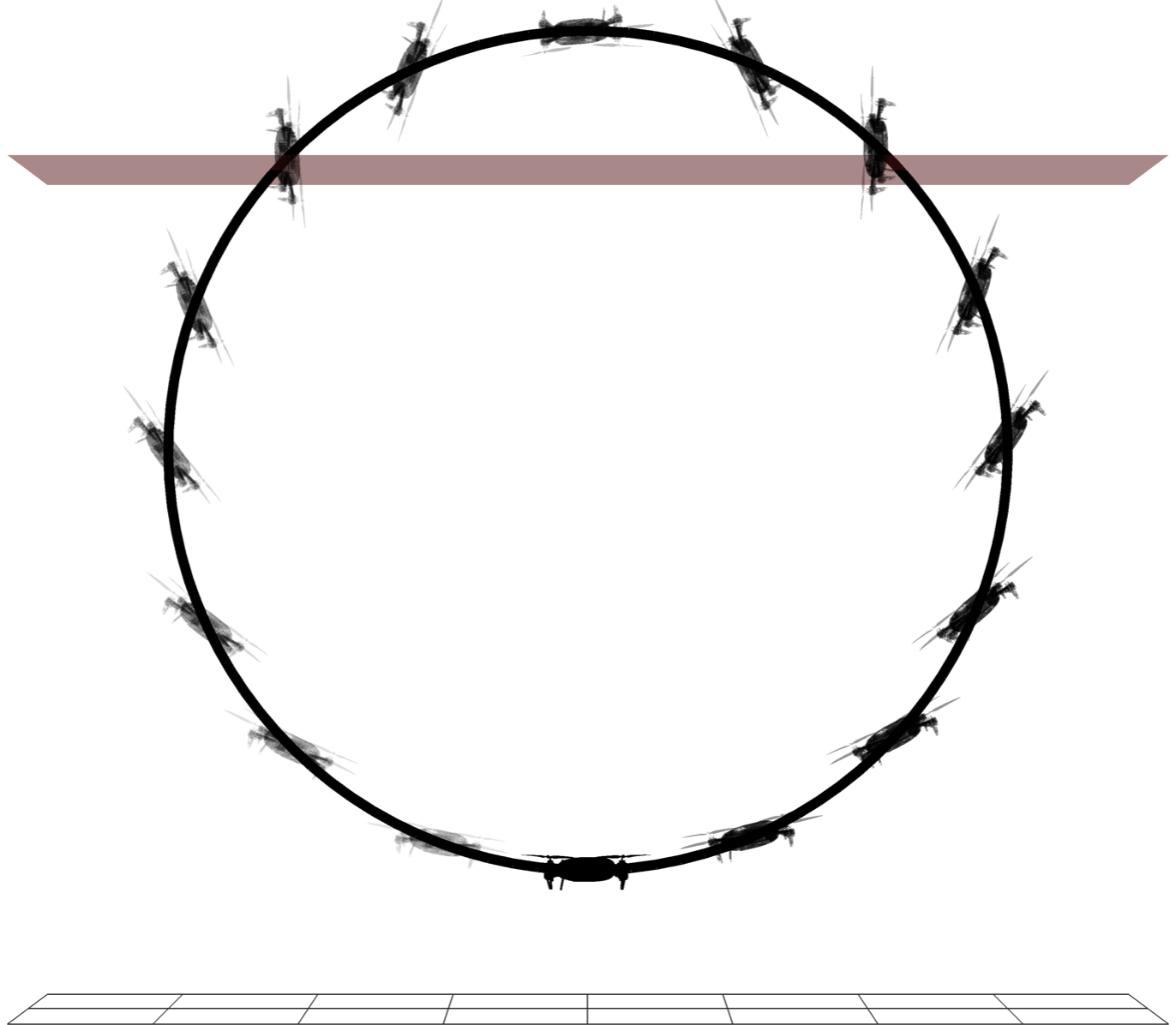}
    \caption{Quadrotor powerloop reference trajectory. Note that the powerloop includes an aggressive attitude reference that commands a full $360^\circ$ flip at the apex.}
    \label{fig: quadrotor_reference}
\end{figure}

\subsection{\oursRA~Implementation Details}\label{app: ps2_implementation_details}

\subsubsection{Unicycle: Phase I}

\paragraph{Base set and controller.}\label{app: unicycle_base_set_controller}
Let $x^\star=[0,\ v_{\des},\ 0]^\top$ be the cruising equilibrium, with $v_{\des}=5$ and $u^\star=\vb 0_2$, and define the local error state $e(x) = [\,y,\; v-v_{\des},\; \psi\,]^\top$.
We linearize~\eqref{eq: unicycle_dynamics} about $(x^\star,u^\star)$ and discretize it with the same $\dt=0.05$ sec as the environment timestep.
We then solve the discrete-time LQR problem for $(A_d,B_d)$, i.e., the discretized linearization, with $Q_d=\diag(1,1,1)$ and $R_d=\diag(0.01,0.5)$. Let $P\succ0$ be the positive-definite solution of the discrete algebraic Riccati equation and
\begin{align}\label{app_eq: riccati_eqn}
    K=(R_d+B_d^\top P B_d)^{-1}B_d^\top P A_d.
\end{align}
Then, the base controller is the LQR feedback
\begin{align}\label{app_eq: unicycle_base_controller}
    \pi_\B(x)=-K e(x),
\end{align}
where we clip the resulting control input to be within $\U$. 
The base set is the ellipsoid
\begin{align}\label{app_eq: unicycle_base_set}
    \B=\{x\in\X: e(x)^\top P e(x)\le c_\B\},
    \qquad
    h_\B(x)=c_\B-e(x)^\top P e(x),
\end{align}
with $c_\B=0.3$. 
The actuator-admissible upper bound for the LQR ellipsoid is
\begin{align}\notag%
    \bar c
    = \min\left\{
        \frac{a_{\max}^2}{K_aP^{-1}K_a^\top},
        \frac{r_{\max}^2}{K_rP^{-1}K_r^\top}
    \right\}
    \approx 1.05,
\end{align}
where $K_a$ and $K_r$ are the acceleration and yaw-rate rows of $K$, respectively. Hence $c_\B=0.3$ keeps the unclipped LQR action strictly within the input limits over $\B$.
The corresponding coordinate radii are approximately $|y|\le0.209$, $|v-v_{\des}|\le0.342$, and $|\psi|\le0.110$, well inside the safe bounds $|y|\le1.8$ and $|\psi|\le\pi/3$.

\paragraph{Design region and reference measure.}
We instantiate the design region as the following box
\begin{align}\notag
    \Omega = \left\{x\in\X: |y|\le 1.8,\, v\in[0,\,12],\, |\psi|\le \frac{\pi}{3}  \right\}
\end{align}
so that $\Omega\subseteq\calS$ covers the entire safety-relevant portion of the state space. 
The reference measure underlying $\mu_{\mathrm{SA}}$ is taken to be the uniform measure on $\Omega\setminus\B$, which we approximate at evaluation time by a uniform $201\times121\times201$ grid over $(y,v,\psi)$. 
We count the grid points that safely arrive at $\B$ within horizon $T=1.0$ sec under $\piarrL$, and divide by the total number of grid points in $\Omega\setminus\B$. 

\paragraph{Safe-arrival policy backbone.}
Phase I instantiates Alg.~\ref{alg: save_training} with a TD3-style off-policy backbone~\citep{td3}.
We maintain a deterministic actor $\piarrL$ and twin safe-arrival critics $Q^{1}_{\mathrm{SA},\beta}$ and $Q^{2}_{\mathrm{SA},\beta}$, each with their corresponding target networks updated via Polyak averaging.
Both critics are trained with the discounted safe-arrival Bellman recursion~\eqref{eq: save_bellman} using a Huber loss. 
Episodes terminate on first-hit entry into the base set $\B$ (success) or the failure set $\F$ (failure). 

\paragraph{Curriculum and initial-state sampling.}
Initial states are drawn from a curriculum-controlled distribution $\rho_{\mathrm{arr}}$ that progressively expands during training.
Concretely, we maintain a scalar curriculum scale $s\in[0,1]$, initialized at $s_0=0.2$, and sample initial states uniformly from
\begin{align}\notag
    y \sim\mathcal U(-y_s, y_s),\quad
    v \sim v_{\des}+\mathcal U(-v_s, v_s),\quad
    \psi\sim\mathcal U(-\psi_s, \psi_s),
\end{align}
where each radius $y_s, v_s,$ and $\psi_s$ interpolates linearly with $s$.
We reject any sample that lies inside the base set $\B$, so that all training initial states belong to $\Omega\setminus\B$.
The scale $s$ is incremented by $0.005$ whenever the rolling success rate over a window of $50$ episodes exceeds $0.9$, with at least $50$ episodes between increments.
This mild curriculum starts the policy from states close to $\B$, where short safe arrivals are easy to discover, and progressively pushes the initial-state distribution outward to cover most of $\Omega\setminus\B$ once the policy has learned to recover from such states.
Note that curriculum learning is an optimization heuristic we employ, not a required component of safe-arrival policy training.

The full hyperparameters are summarized in Table~\ref{tab: app_phase1_hparams}.

\subsubsection{Unicycle: Phase II} 

\paragraph{\ours{} policy backbone.}
Phase II is instantiated with a SAC backbone~\citep{ha2021sacLag,yang2021wcsac} for the policy $\pinom$, and the control-invariant layer $\calP_\BL$ is appended to its output.
The composed backup policy $\lbpstar$ from Phase I is fixed, and $\calP_\BL$ is constructed from the BCBF rollout of $\lbpstar$ as described in App.~\ref{app: backup_layer}. 
During both data collection and policy updates, the nominal SAC action is projected through the control-invariant layer before being applied to the environment.
Thus, the replay buffer stores projected controls, the actor loss evaluates the critic at projected actions, and target actions in Bellman updates are also projected.

The SAC actor is a tanh-squashed Gaussian policy, and evaluation uses the deterministic mean action followed by the same projection. 
Initial states are sampled uniformly from $\CT(\lbpstar)$. 
Episodes have a fixed horizon of $400$ steps, matching the $20$-second sinusoidal reference with $\dt=0.05$.

\paragraph{Control-invariant layer parameters.}
The unicycle safety set has four scalar inequalities, and the terminal base condition is the single LQR ellipsoid $h_\B(x)\ge 0$. 
With $T=1.0$ second and $\dt=0.05$, we have $N=20$ backup steps.
Thus, the finite-mesh BCBF construction contributes $4(N+1)+1=85$ BCBF rows.
Together with the input box and the numerical slack nonnegativity row, the implemented QP has $90$ inequalities. 

The full hyperparameters are summarized in Table~\ref{tab: app_phase2_hparams}.

\begin{table}[htpb]
    \centering
    \caption{\oursRA{} Phase I (safe-arrival policy) hyperparameters.}
    \label{tab: app_phase1_hparams}
    {\scriptsize
    \begin{tabular}{lcc}
        \toprule
        \textbf{Quantity} & \textbf{Unicycle} & \textbf{Quadrotor}\\
        \midrule
        Total environment steps & $3\times 10^6$ & $5\times 10^6$\\
        Update schedule & 1 grad. step / 8 env. steps & 1 grad. step / 8 env. steps\\
        Replay buffer / minibatch size & $4\times 10^5$ / $128$ & $4\times 10^5$ / $128$\\
        Actor / critic architecture & MLP, hidden $[128,128]$ & MLP, hidden $[128,128]$\\
        Actor / critic LR (Adam) & $3\times 10^{-4}$ / $3\times 10^{-4}$ & $10^{-4}$ / $3\times 10^{-4}$\\
        Polyak averaging coefficient $\tau$ & $0.0025$ & $0.0025$\\
        Policy update delay & every $2$ critic updates & every $2$ critic updates\\
        Twin critic target & $\min(Q^{1}_{\mathrm{SA},\beta},Q^{2}_{\mathrm{SA},\beta})$ & $\min(Q^{1}_{\mathrm{SA},\beta},Q^{2}_{\mathrm{SA},\beta})$\\
        Critic loss & Huber, $\delta=1.0$ & Huber, $\delta=1.0$\\
        Exploration noise / clip & $0.08$ / $0.10$ & $0.10$ / $0.10$\\
        Safe-arrival discount $\beta$ & $0.92$ & $0.99$\\
        $\dt$ & 0.05 & 0.02\\
        Backup horizon $T$ & 1.0 sec & 2.0 sec\\
        Curriculum $(s_0,\Delta s)$ & $(0.2,\ 0.005)$ & $(0.0,\ 0.10)$\\
        Curriculum success window / threshold & $50$ ep. / $0.9$ & $50$ ep. / $0.8$\\
        Curriculum min episodes between updates & $50$ & $100$\\
        Curriculum mechanism & isotropic radius scaling & region-mixture and per-region radius\\
        \bottomrule
    \end{tabular}
    }
\end{table}

\begin{table}[htpb]
    \centering
    \caption{\oursRA{} Phase II (\ours{} policy) hyperparameters.}
    \label{tab: app_phase2_hparams}
    {\scriptsize
    \begin{tabular}{lcc}
        \toprule
        \textbf{Quantity} & \textbf{Unicycle} & \textbf{Quadrotor}\\
        \midrule
        Total environment steps & $10^6$ & $1.5\times 10^6$\\
        Update schedule & 1 grad. step / 8 env. steps & 1 grad. step / 8 env. steps\\
        Replay buffer / minibatch size & $3\times 10^5$ / $64$ & $3\times 10^5$ / $64$\\
        Actor / critic architecture & MLP, hidden $[128,128]$ & MLP, hidden $[256,256]$\\
        Actor LR / critic LR / temp. LR (Adam) & $10^{-4}$ / $3\times 10^{-4}$ / $10^{-4}$ & $5\times 10^{-5}$ / $10^{-4}$ / $5\times 10^{-5}$\\
        Discount $\gamma$ / Polyak coefficient $\tau$ & $0.99$ / $0.005$ & $0.99$ / $0.005$\\
        Entropy temperature & $\alpha_0=0.2,\ \alpha_{\min}=0.01,\ \bar H=-2$ & $\alpha_0=0.2,\ \alpha_{\min}=0.01,\ \bar H=-4$\\
        Gradient norm clip / $Q$ clipping & $5.0$ / $5\times 10^6$ & $5.0$ / $5\times 10^6$\\
        Project actor actions in actor loss & yes & yes\\
        Project target-actor actions in critic backup & yes & yes\\
        Class-$\mathcal K_\infty$ gains $(\alpha_{\calS},\alpha_{\B})$ & $(4.0,\ 2.0)$ & $(4.0,\ 2.0)$\\
        QP slack penalty $\lambda_\delta$ & $10^5$ & $10^6$\\
        Differentiable QP solver & \texttt{qpax}~\citep{qpax} & \texttt{qpax}~\citep{qpax}\\
        Episode horizon & $400$ steps ($20$ sec) & $106$ steps ($\approx 2.1$ sec)\\
        Warm start & none & vanilla powerloop-tracking SAC\\
        \bottomrule
    \end{tabular}
    }
\end{table}

\subsubsection{Quadrotor: Phase I}\label{app: quadrotor_phase_I}

\paragraph{Base set and controller.}\label{app: quadrotor_base_set_controller}
For the base set and controller construction for the quadrotor experiment, we use the reduced hover-error state
\begin{align}\notag%
    x_e = [\,p_z-z_\des,\; v_x,\; v_y,\; v_z,\; 2q_{\err,x},\; 2q_{\err,y},\; 2q_{\err,z}\,]^\top \in \R^7,
\end{align}
where $2q_{\err,\{x,y,z\}}$ are first-order rotation-angle error coordinates obtained from the sign-corrected quaternion error. 
The hover equilibrium is $x_e^\star=\vb 0_7$ with $z_\des=2$ and $u^\star=[g,0,0,0]^\top$. 
In these coordinates, we compute the reduced-state discrete-time linearization about hover to retrieve $(A_d,B_d)$. 
We use the same $\dt=0.02$ sec as the environment timestep to construct $(A_d,B_d)$, and solve the discrete-time LQR problem with $Q_d = \diag(1.0,0.16,0.16,0.4,0.8,0.8,0.16)$ and $R_d=\diag(0.02,0.012,0.012,0.004)$.
Let $P\succ0$ be the discrete Riccati solution from~\eqref{app_eq: riccati_eqn} and the base controller is the hover LQR feedback
\begin{align}\notag%
    \pi_\B(x) = u^\star-Kx_e
\end{align}
where we clip the input to be within $\U$. 
The base set is the ellipsoid
\begin{align}\notag
    \B=\{x\in\X:x_e^\top P x_e\le c_\B\},
    \qquad
    h_\B(x)=c_\B-x_e^\top P x_e,
\end{align}
with $c_\B=8.0$. This is the value of both the codebase's LQR terminal set and LQR capture set in the reported quadrotor safe-arrival and \oursRA{} runs. The actuator-admissible upper bound associated with the unclipped LQR action is
\begin{align}\notag
    \bar c
    = \min_{i\in\{1,\dots,4\}}
    \frac{\bar u_i^2}{K_iP^{-1}K_i^\top}
    \approx 12.38,
\end{align}
where $\bar u_i$ denotes the one-sided actuator margin around $u^\star$ for control channel $i$. 
Hence $c_\B=8.0$ keeps the unclipped base-controller action inside the input limits over $\B$.
At this level, the largest possible deviations in the reduced coordinates are approximately
\begin{align}
    |p_z-z_\des|&\le 0.490, &|v_x|&\le 1.986, &|v_y|&\le 1.986, &|v_z|\le 1.282,\notag\\ 
    |2q_{\err,x}|&\le 1.177,& |2q_{\err,y}|&\le 1.177, &|2q_{\err,z}|&\le 2.437. \notag
\end{align}
In particular, the altitude radius gives $p_z\le z_\des+0.490<z_{\mathrm{ceil}}=3$, so the LQR base ellipsoid lies strictly below the ceiling safety boundary.

\paragraph{Design region and reference measure.}
The quadrotor state space is $10$-dimensional, so a uniform grid over $\calS$ is computationally intractable.
We therefore anchor the design region to the powerloop reference trajectory itself.
Concretely, we collect $20$ rollout traces from a vanilla SAC powerloop tracker trained without any safety mechanism, and define the design region $\Omega$ as the union of perturbation balls around the states visited by these traces, with per-axis maximum perturbations of $0.4$~m in position, $1.5$~m/s in linear velocity, $30^\circ$ in body tilt, and $12^\circ$ in yaw.
This concentrates $\Omega$ on a tube around task-relevant powerloop states while still containing aggressive deviations from the reference, including states close to the ceiling. 
Note that we enforce $\Omega\subseteq\calS$ by rejection sampling, accepting only perturbed states for which $h_\calS(x)\ge 0$.
 
To further stratify $\Omega$ along the structure of the powerloop task, we partition the trace into four sub-regions: a \emph{general-trace} region of safe trace states away from both the ceiling and the base set; a \emph{near-ceiling} region of safe trace states within $0.25$~m below $z_{\mathrm{ceil}}$; a \emph{bridge} region of synthetic states right below the ceiling, obtained by linearly interpolating between the unsafe endpoints of the reference; and a \emph{capture-shell} region of safe trace states close to $\B$.
The reference measure underlying $\mu_{\mathrm{SA}}$ is taken to be the region-weighted uniform measure on the four sub-regions, with greater weight assigned to the near-ceiling and bridge regions since these are the states that limit performance during powerloop tracking.
We approximate $\mu_{\mathrm{SA}}$ at evaluation time by drawing $1024$, $1024$, $1024$, and $512$ perturbed initial states from the four sub-regions, simulating the candidate safe-arrival policy for $T=2.0$ sec, and counting the fraction that safely reach $\B$ while remaining in $\calS$.

\paragraph{Safe-arrival policy backbone.}
Phase I for the quadrotor uses the same TD3-style off-policy backbone~\citep{td3} as the unicycle. 

\paragraph{Curriculum and initial-state sampling.}
As in the unicycle, training initial states are drawn from a curriculum-controlled $\rho_{\mathrm{arr}}$ on the train split, but the scale $s\in[0,1]$ now controls the relative emphasis across the four sub-regions rather than a single perturbation radius: at $s=0$, initial states are concentrated on the general-trace and capture-shell regions where short safe arrivals are easy, and as $s\to 1$ it shifts toward the harder near-ceiling and bridge regions. 
The bridge and near-ceiling states are perturbed most aggressively. 
Samples that fall outside $\calS$ or inside $\B$ are rejected. 

The full hyperparameters are summarized in Table~\ref{tab: app_phase1_hparams}, alongside the unicycle settings.

\subsubsection{Quadrotor: Phase II} 
\paragraph{\ours{} policy backbone.}
Phase II for the quadrotor uses the same SAC + control-invariant layer backbone as the unicycle, with two task-specific differences. 
First, initial states are drawn with $\pm 0.1$~m position perturbations around the powerloop start state and episodes have a fixed horizon of $106$ steps matching the powerloop reference at $\dt=0.02$~sec.
Second, we warm-start training from a vanilla powerloop-tracking SAC checkpoint. 
This is based on the warm-start scheme suggested in \hardCVX~\citep[App.~C.1]{hardnet}, where the nominal network is first trained without the projection layer to ease optimization through it. 
Crucially, the control-invariant layer projects every action throughout training and deployment, so the formal safety guarantee holds regardless of how $\pinom$ is initialized.

\paragraph{Control-invariant layer parameters.}
The quadrotor safety set is the single ceiling inequality $h_\calS(x)=z_{\mathrm{ceil}}-p_z\ge 0$, and the terminal base condition is again a single LQR ellipsoid. 
With $T=2.0$ seconds and $\dt=0.02$, we have $N=100$ backup steps.
Thus, the finite-mesh BCBF construction contributes $1\cdot(N+1)+1=102$ BCBF rows, and together with the input box and the slack nonnegativity row, the implemented QP has $111$ inequalities.

The full hyperparameters are summarized in Table~\ref{tab: app_phase2_hparams}, alongside the unicycle settings.

\subsection{Baseline Implementation Details}\label{app: baseline_details}

\paragraph{Common training and evaluation protocol.} 
All baselines are trained with 10 random seeds per task on the same trajectory-tracking environments, task rewards, initial-state sampler, and reference trajectories as \oursRA{} (App.~\ref{app: unicycle_experiment_details}, \ref{app: quadrotor_experiment_details}). 
The unicycle environment uses $\dt=0.05$ sec, and the episode length is 400 steps. For the quadrotor, $\dt=0.02$ sec, and the episode length is 106 steps.
The hyperparameters used for the baselines intentionally match the \oursRA{} Phase II training settings whenever the same quantity applies for each environment, including the hidden layer width and depth, batch size, total training steps, learning rates, entropy, neural network initialization, etc. 

\subsubsection{RL with Violation Penalty}
We implement the penalty baseline with the same SAC backbones as Phase II in \oursRA{}, but with the control-invariant layer components disabled. 
All other hyperparameters and configurations remain the same as \oursRA.
Safety enters only through an additive reward penalty
\begin{align}\notag%
    r^{\mathrm{pen}}_k
    =
    r^{\mathrm{task}}_k
    -
    \lambda_{\mathrm{pen}}\,
    \mathbf{1}\{x_{k+1}\notin \calS\},
\end{align}
where $r^{\mathrm{task}}_k=-\xi_k^\top L\xi_k$ is the task reward described in App.~\ref{app: unicycle_experiment_details} and App.~\ref{app: quadrotor_experiment_details}. 
We report two penalty strengths, SAC-Pen$_{\mathrm{low}}$ with $\lambda_{\mathrm{pen}}=1.0$ and SAC-Pen$_{\mathrm{high}}$ with $\lambda_{\mathrm{pen}}=1000.0$. 

For the quadrotor penalty baseline, training directly from a random policy often spent most of the early training horizon far from useful powerloop tracking behavior. 
We therefore warm-started the final quadrotor penalty runs from a vanilla SAC tracker trained only to follow the powerloop trajectory, with no safety objective. 
This gave the penalty method a strong tracking initialization and made the comparison more favorable to the baseline. 

\subsubsection{Safe RL via Constrained Policy Optimization}
We implement Constrained Policy Optimization (CPO)~\citep{cpo} and SAC-Lagrangian following the standard constrained-policy and Lagrangian safe-RL formulations~\citep{ha2021sacLag,yang2021wcsac}. 
These methods use the same trajectory-tracking reward as above and introduce an auxiliary cost $ c_k = \mathbf{1}\{x_{k+1}\notin \calS\}$. 
The cost limit is set to $0.0$ in all runs. 
This is the strictest binary-cost setting, as any unsafe transition contributes positive cost, while a perfectly safe rollout has zero cost. 
As usual for CMDP methods, this constraint is optimized as an expected cost constraint during training, rather than as a pointwise per-trajectory guarantee. 

\paragraph{Constrained Policy Optimization (CPO).}
Our CPO implementation uses a Gaussian policy with two hidden layers and a separate reward-value and cost-value network. 
Each policy update collects a batched rollout, computes generalized advantage estimates for reward and cost, and solves the local trust-region constrained update using conjugate gradient and backtracking line search, following~\citep{cpo}. 

\paragraph{SAC-Lagrangian.}
The SAC-Lagrangian baseline uses the same off-policy SAC backbone as the penalty baseline, but learns separate double critics for reward and cost. 
The actor minimizes the Lagrangian SAC objective, $\mathbb E\!\left[\alpha_{\mathrm{lr}} \log \pi_\phi(u\mid x) - Q_r(x,u) + \lambda Q_c(x,u)\right]$, where $\lambda\ge0$ is updated by projected gradient ascent on the empirical cost violation. 

During tuning, we tried warm-starting the quadrotor CMDP baselines from the same vanilla powerloop-tracking SAC checkpoint used by the quadrotor penalty baseline. 
In those trials, the post-training CPO and SAC-Lagrangian updates did not move the policy far enough away from the unsafe vanilla tracker, i.e., tracking remained close to the warm-start behavior, and thereby, ceiling violations remained frequent. 
Hence, the final reported CMDP baselines use non-warm-start training. 

\subsubsection{Safe RL via Verified Certificate}

\paragraph{CBF-RL.}
We implement the CBF-RL method from~\citep{cbfRL}, where CBF is used for action filtering and reward shaping during training, but the resulting policy is deployed without a runtime filter. 
The training-time, closed-form safety filter in~\citep{cbfRL} is as follows:
\begin{align}\label{eq: cbfRL_filter}
    u_{\cbfrl}(x) = \begin{cases}
        \unom, & a(x)^\top\unom \ge b,\\
        \unom - \frac{(b(x)-a(x)^\top \unom)a(x)}{\| a(x) \|^2}, & \text{otherwise,}
    \end{cases}
\end{align}
where $a(x) = \nabla h_\C(x)$ and $b(x)=-\alpha h_\C(x)$ for a CBF $h_\C$, and $\unom$ is the output from a nominal policy.
Note that the original CBF-RL does not consider explicit control limits. 
In our implementation, we choose to clip the filtered control input $u_\cbfrl$: $\usafe = \clip(u_\cbfrl, \U)$.  
Moreover, during training,~\citep{cbfRL} penalizes unsafe behavior through reward shaping with a penalty term:
\begin{align}\label{eq: cbfRL_penalty}
    r_\cbfrl(x,u) = \min\left( a(x)^\top \unom -b(x), 0 \right) + \exp\left( -\frac{\| \unom - \usafe \|^2}{\sigma^2} -1 \right),
\end{align}
where $\sigma$ is a scaling scalar. 
Note that we use $\usafe$, the clipped control input, in~\eqref{eq: cbfRL_penalty}, whereas the original CBF-RL in~\citep{cbfRL} uses the unclipped input $u_\cbfrl$. 
First, valid CBFs need to be synthesized/designed to be used within the CBF-RL framework.

For the unicycle task, we synthesize a CBF using sum-of-squares programming (SOSP), based on the implementation in~\citep{backupCBF}, where the details for the SOS formulation can be found. 
The SOSP was modeled in MATLAB using the YALMIP toolbox~\citep{yalmip} and solved utilizing the MOSEK optimization suite~\citep{mosek}. 

For the quadrotor task, computational synthesis of a formal control-invariant set is not practical, as the system is 10-dimensional, nonlinear, quaternion-valued, and subject to tight actuator bounds. 
HJ reachability scales very poorly with dimension, making it an infeasible option for our 10-dimensional system. 
SOSP is restricted to polynomial dynamics and a fixed polynomial degree, as shown in the unicycle CBF-RL case. 
However, the quadrotor system~\eqref{eq: quadrotor_dynamics} is quaternion-based, which is non-polynomial, and even after converting to Euler-angle based and applying polynomial relaxation, the resulting semidefinite program scales poorly, making it difficult at this state dimension.
We therefore choose the high-order CBF (HOCBF) formulation~\citep{xiao2022hocbf}, which sidesteps formal synthesis by recursively building barrier conditions through Lie derivatives.
For a relative-degree-$\gamma$ safety function $h_\calS(x)$, HOCBF defines
\begin{align}\notag
    h_0(x) := h_\calS(x), \qquad
    h_i(x) := \dot{h}_{i-1}(x)+\alpha_i(h_{i-1}(x)), 
    \quad i=1,\ldots,\gamma,
\end{align}
and enforces $h_\gamma(x,u)\ge 0$ with $u\in\U$. 
For our ceiling constraint, $h_\calS(x)=z_{\mathrm{ceil}}-p_z$. 
Using~\eqref{eq: quadrotor_dynamics},
\begin{align}\notag
    \dot h_\calS(x)=-v_z,\qquad
    \ddot h_\calS(x,u)=g-\acmd R_{33}(\vb q).
\end{align}
With extended class-$\mathcal K_\infty$ functions $\alpha_i(s)=k_i s$, this gives the implementable second-order HOCBF condition:
\begin{align}\notag
    \dot h_\C(x) = g-\acmd R_{33}(\vb q)
    -(k_1+k_2) v_z
    +k_1 k_2 (z_{\mathrm{ceil}}-p_z)
    \ge 0.
\end{align}
Although this is a valid HOCBF condition, it exposes a key limitation of HOCBFs for this system. 
The constraint is affine only in the thrust command and contains no direct dependence on the body-rate commands. 
The body rates influence ceiling safety only through future attitude evolution, so the filter cannot fully exploit the available control authority at the current step. 
In other words, the thrust $\acmd$ has relative degree 2, while the body rate $\wcmd$ has a higher relative degree. 
This is precisely the issue \oursRA{} avoids, as the induced BCBF constraints are relative-degree-one for all actuators by construction.

The CBF-RL policy optimizer uses the same SAC implementation as the penalty baseline. 
During training, for each nominal action $\unom$, we apply the closed-form CBF-RL filter in~\eqref{eq: cbfRL_filter}, clip the result to the control limits, and step the environment with the filtered action $\usafe$.
The CBF-based penalty term~\eqref{eq: cbfRL_penalty} is added to the task reward with weight $w_{\mathrm{cbf\,pen}}$.
Post-training evaluation uses the deterministic actor mean without the CBF filter, matching the CBF-RL protocol of learning a policy that is deployed without runtime shielding.

During tuning, we swept various hyperparameter combinations, and report the combination that resulted in the best safety and tracking performances.
Namely, we tried $\alpha\in\{0.01,0.03,0.05,0.1,1.0,2.0\}$, $w_{\mathrm{cbf\,pen}}\in\{10,100\}$, $\sigma=0.5$, and warm-start/non-warm-start variants for the quadrotor.


\paragraph{Model Predictive Shielding (MPS).}
We implement Model Predictive Shielding~\citep{bastani2021mps}, a runtime switching shield rather than a projection layer or a reward-penalty method.
It uses three policies: a learned nominal task policy $\hat\pi$, a learned recovery policy $\pi_{\mathrm{rec}}$, and a fixed equilibrium controller $\pi_{\mathrm{eq}}$.
At deployment time, the shield first checks whether applying $\hat\pi$ for one step leaves the system in a state that can be recovered to a stable invariant set within $N$ steps under $\pi_{\mathrm{rec}}$.
If so, it applies the nominal action; otherwise it switches to the recovery policy or, once inside the invariant set, the equilibrium LQR controller.
The shielded action is
\begin{align}\notag%
    \pi_{\mathrm{MPS}}(x_k,k)=
    \begin{cases}
        \hat\pi(o_k), &
        \operatorname{Rec}_N(f(x_k,\hat\pi(o_k)), k+1)=1,\\
        \pi_{\mathrm{eq}}(x_k), &
        x_k\in\X_{\mathrm{inv}},\\
        \pi_{\mathrm{rec}}(o_k), &
        \operatorname{Rec}_N(x_k,k)=1,\\
        \pi_{\mathrm{eq}}(x_k), &
        \text{otherwise.}
    \end{cases}
\end{align}
The final branch is a best-effort fallback, and logged separately. 
Both learned policies are trained with the differentiable model-based backpropagation-through-time (BPTT) procedure from~\citep{bastani2021mps}.
For a deterministic actor $\pi_\theta$, we roll out the known dynamics for a fixed horizon of $N$-timesteps and update $\theta$ with Adam on the negative discounted return.
The nominal policy $\hat\pi$ is trained only on the task reward, with no shield and no safety penalty.
The recovery policy $\pi_{\mathrm{rec}}$ is trained independently with the shaped recovery reward from~\citep{bastani2021mps}.

MPS shares parts of the high-level structure of \oursRA{}: both methods use a learned recovery/arrival policy with an LQR-stabilized equilibrium, and both forward-integrate the dynamics under the policy to verify safe arrival to a small invariant set. 
However, MPS fundamentally differs from \oursRA{} in that it uses a switching mechanism with a non-differentiable post-hoc shield. 
Specifically, MPS runs either $\hat\pi(x)$ or the recovery policy in full, depending on whether the flow under $\hat\pi(x)$ is recoverable. 
Furthermore, as the recovery policy is a non-differentiable post-hoc wrapper, $\hat\pi$ is trained without exposure to the safety constraint, i.e., the task policy learns to maximize performance, and the shield simply catches violations at deployment.
On the other hand, \oursRA{} projects $\pinom(x)$ onto the BCBF-admissible set via the control-invariant layer, returning the closest safe action to the nominal. 
This exploits the available safety margin smoothly rather than triggering extensive overrides at the boundary of the certified set, as shown in Fig.~\ref{fig: unicycle_lateral_traces}. 
Moreover, the control-invariant layer is differentiable, and gradients propagate through the projection back into $\pinom$ during training. 
Thus, the \ours{} policy learns to optimize task performance within the certified set rather than against it. 

Note that we grant MPS twice the recovery horizon (40 steps) used by \oursRA{}'s backup policy (20 steps) in the unicycle experiment. 
Similarly, MPS is allowed 106 recovery steps in the quadrotor experiment, higher than \oursRA's 100 steps. 
This is so that MPS can expand its certified recoverable set and give the baseline its best-case performance. 
Despite this advantage, \oursRA{} achieves higher tracking performance in both experiments.


\paragraph{\oursABP{}: \oursRA~with an analytic backup policy.} 
\oursABP{} is implemented as the same Phase II \oursRA{} algorithm as \oursSAV{}, but with Phase I safe-arrival policy learning disabled. 
That is, instead of using the learned safe-arrival policy $\piarrL$, \oursABP{} uses an analytic safe-arrival controller. 
The certified base set and LQR base controller remain the same as \oursSAV{}. 
Thus, the comparison between \oursABP{} and \oursSAV{} isolates the effect of the learned safe-arrival policy on the downstream RL task performance.

For the unicycle task, the analytic safe-arrival controller is the same LQR controller as the base controller~\eqref{app_eq: unicycle_base_controller}.
For the quadrotor task, we employ an aggressive cascaded PID-style recovery controller. 
The outer loop generates a virtual acceleration that damps translational motion, regulates altitude toward the hover region, and becomes more conservative near the ceiling. 
This virtual acceleration is converted into a thrust command and desired attitude, while an inner-loop quaternion attitude controller tracks that attitude until the trajectory enters the base set. 
After fixing $\piarr$, we train the \oursABP{} policy with the same Phase II settings as \oursSAV{}.

\section{Additional Results, Ablation Studies, and Computation}\label{app: additional_results}

\paragraph{Checkpoint selection and evaluation.}
All methods (\oursSAV{} and the seven baselines) are trained with identical 10 seeds.
During training, we periodically evaluate each method on 10 episodes and select the best-performing checkpoint per seed. 
Each selected checkpoint is then evaluated on 1,000 episodes, and all reported metrics in Tables~\ref{tab: hero_unicycle},~\ref{tab: hero_quadrotor},~\ref{tab: app_unicycle},~\ref{tab: app_quadrotor} are aggregated across the resulting $10\times1,000 = 10,000$ episodes per method.

\paragraph{Metrics.}
For each selected checkpoint, the evaluator reduces every rollout to episode-level RMSEs (root mean square errors) for the reported tracking coordinates, a binary safe-episode indicator, the episode's maximum constraint violation, and its cumulative exceedance. 
An episode is safe only if no sampled state violates any safety constraint. 
Tables~\ref{tab: hero_unicycle} and~\ref{tab: hero_quadrotor} report the interquartile mean (IQM) and 95\% confidence interval across the 10 seed-level summaries for the RMSE columns and per-seed safety. 
``Total Safety'' is the safe-episode fraction over all 10,000 evaluation episodes, and ``Worst Viol.'' is the largest episode-wise violation over all seeds. 
Tables~\ref{tab: app_unicycle} and~\ref{tab: app_quadrotor} report the complementary aggregations of the same evaluation episodes used in Tables~\ref{tab: hero_unicycle} and~\ref{tab: hero_quadrotor}: mean $\pm$ standard deviation across the 10 seed-level summaries, and ``Worst-seed Safe\%'' as the minimum seed safety rate. 
Thus, the tables in this Appendix are not new experimental data but rather emphasize variability, outlier seeds, and the severity of violations.

\subsection{Unicycle: Extended Analysis}\label{app: unicycle_extended_analysis}

\paragraph{Phase I analysis.}
The unicycle task permits dense evaluation of the safe-arrival set size over the full design region. 
The learned safe-arrival policy increases the safe-arrival fraction from $0.227$ for the analytic policy to $0.326$ over $\Omega$.
Moreover, the learned set covers 99.08\% of the analytic set, so Phase I strictly enlarges the safe-arrival set rather than relocating it.
The expansion is especially relevant around the task speed. At $v_\des=5$ m/s, the learned safe-arrival slice area is $1.43\times$ larger.
This result is visually shown in Fig.~\ref{fig: hero_unicycle}.

\begin{table}[htpb]
    \centering
    \caption{Unicycle experiment results, across models trained with 10 different seeds, each evaluated for 1,000 episodes. 
    The mean and standard deviations across all 10,000 episodes are shown. 
    ``Worst-seed Safe \%'' is the safety rate of the unsafest model.}
    \label{tab: app_unicycle}
    \resizebox{1.0\linewidth}{!}{%
    \begin{tabular}{cl ccc ccc}
        \toprule
        & & \multicolumn{3}{c}{\textbf{Tracking Performance (RMSE)}} & \multicolumn{3}{c}{\textbf{Safety Performance}}\\
        \cmidrule(lr){3-5} \cmidrule(lr){6-8}
        & \textbf{Safe RL} & $y$ (m) & $v$ (m/s) & $\psi$ (rad) &Worst-seed Safe \% & Mean Max. Viol. (m) & Mean Cumul. Viol. (m)  \\
        \midrule
        \multirow{2}{*}{\rotatebox[origin=c]{90}{\texttt{Pen}}}
        & SAC-Pen$_{\text{low}}$ & 0.68 $\pm$ 0.48 & 2.18 $\pm$ 1.13 & 0.26 $\pm$ 0.27 & 0.0\% & 0.77 $\pm$ 0.73 & 54.12 $\pm$ 46.30 \\
        & SAC-Pen$_{\text{high}}$ & 1.13 $\pm$ 0.23 & 2.08 $\pm$ 1.08 & 0.16 $\pm$ 0.05 & 100\% & 0.00 $\pm$ 0.00 & 0.00 $\pm$ 0.00 \\
        \cmidrule(lr){2-8}
        \multirow{2}{*}{\rotatebox[origin=c]{90}{\texttt{Con}}}
        & SAC-Lag. & 0.54 $\pm$ 0.18 & 1.28 $\pm$ 0.41 & 0.12 $\pm$ 0.04 & 0.0\% & 0.40 $\pm$ 0.38 & 26.41 $\pm$ 31.06 \\
        & CPO & 4.83 $\pm$ 6.25 & 4.25 $\pm$ 1.63 & 0.41 $\pm$ 0.37 & 0.0\% & 7.01 $\pm$ 13.66 & 1005.29 $\pm$ 2000.82 \\
        \cmidrule(lr){2-8}
        \multirow{4}{*}{\rotatebox[origin=c]{90}{\texttt{PSRL}}}
        & CBF-RL & 1.77 $\pm$ 0.09 & 4.25 $\pm$ 1.18 & 0.27 $\pm$ 0.05 & 99.1\% & 0.00 $\pm$ 0.00 & 0.00 $\pm$ 0.00\\
        & MPS & 0.67 $\pm$ 0.16 & 0.48 $\pm$ 0.08 & 0.17 $\pm$ 0.03 & 96.8\% & 0.004 $\pm$ 0.001 & 0.03 $\pm$ 0.008 \\
        & \textcolor{oursABP}{\oursABP} & 0.98 $\pm$ 0.00 & 0.45 $\pm$ 0.15 & 0.15 $\pm$ 0.00 & 100\% & 0.00 $\pm$ 0.00 & 0.00 $\pm$ 0.00  \\
        & \textbf{\textcolor{oursRA}{\oursSAV}} & 0.53 $\pm$ 0.02 & 0.82 $\pm$ 0.08 & 0.11 $\pm$ 0.01 & 100\% & 0.00 $\pm$ 0.00 & 0.00 $\pm$ 0.00 \\
        \bottomrule
    \end{tabular}
    }
\end{table}

\begin{figure}[htbp]
    \centering
    \includegraphics[width=1.0\linewidth]{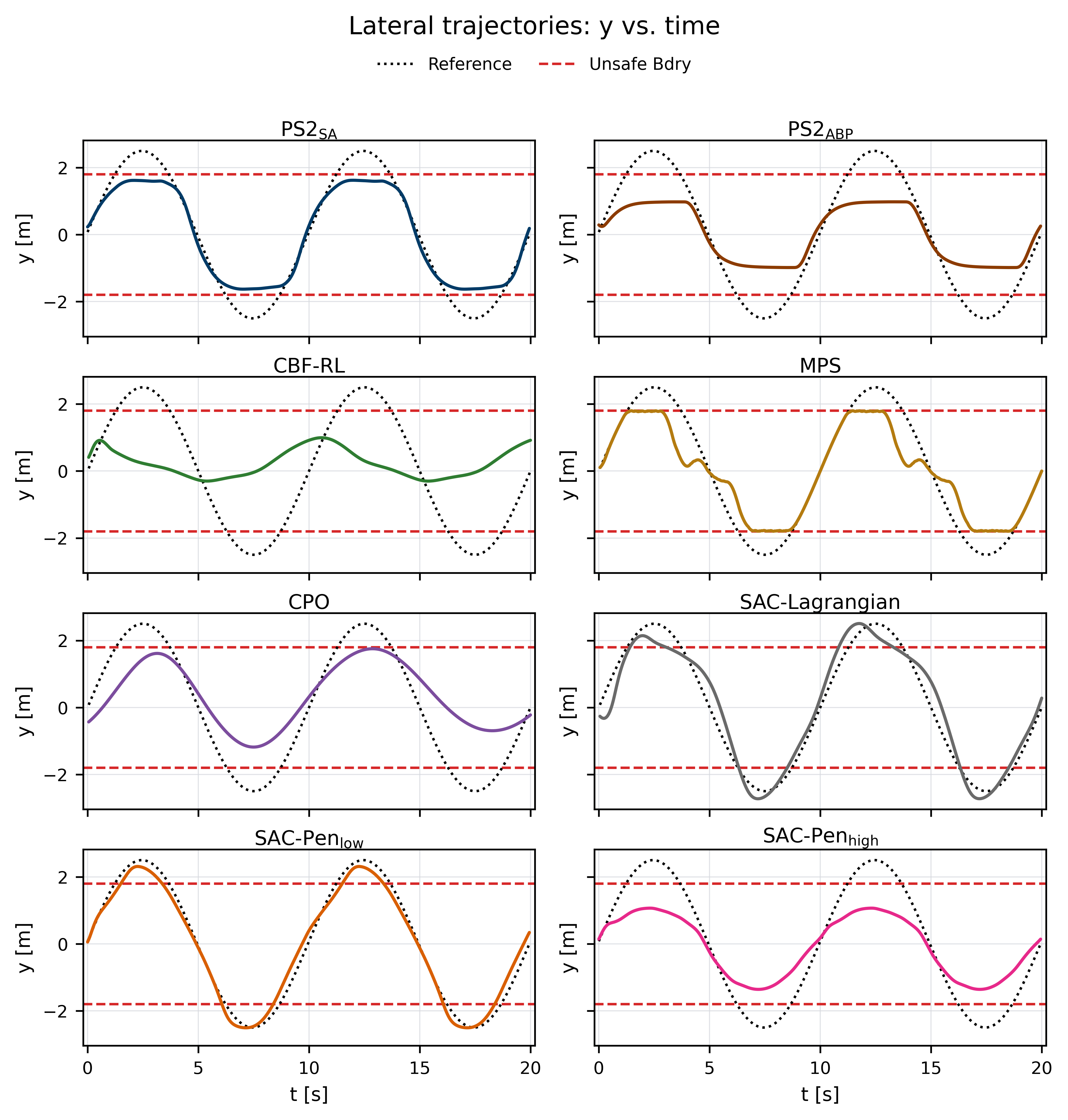}
    \caption{Representative lateral trajectories on the unicycle lane-keeping task.
    Each panel plots the lateral position $y(t)$ of one method over a 20-second rollout.
    The dotted black curve denotes the unsafe sinusoidal reference, and the dashed red lines denote the lane safety boundary $|y|=y_{\max}=1.8\,\mathrm{m}$.
    \textbf{\textcolor{oursRA}{\oursSAV}} closely tracks the reference while remaining inside the safe lane, whereas \textbf{\textcolor{oursABP}{\oursABP}} is safe but more conservative due to the smaller analytic-backup-induced admissible set.
    CBF-RL and SAC-Pen$_{\mathrm{high}}$ also remain safe but sacrifice substantial tracking performance, MPS exhibits intervention-induced nonsmooth behavior near the boundary, and the penalty/CMDP methods with weaker effective safety enforcement violate the lateral constraint.
    }
    \label{fig: unicycle_lateral_traces}
\end{figure}

\paragraph{Phase II analysis.}
The Phase II results in Tables~\ref{tab: hero_unicycle} and~\ref{tab: app_unicycle} are consistent with this enlargement of the safe-arrival set. 
Among methods that are uniformly safe across all seeds, \oursSAV{} obtains the lowest lateral and heading errors, with $y$-RMSE $0.53\pm0.02\,\mathrm{m}$ and $\psi$-RMSE $0.11\pm0.01\,\mathrm{rad}$. 
Compared with \oursABP{}, this reduces lateral error by roughly $46\%$ and heading error by roughly $27\%$, while preserving $100\%$ safety and zero measured violation. 
The velocity error of \oursSAV{} is higher than that of \oursABP{} and MPS, but this reflects a different safety-performance tradeoff: \oursABP{} and MPS stay closer to a conservative recovery behavior, whereas \oursSAV{} uses the larger admissible set induced by the learned backup policy to track the unsafe lateral reference more closely. 
After accounting for both the diagonal weights (50 for $y$ vs. 20 for $v$) and the per-component normalizers (1.8 vs. 5), the reward penalizes squared lateral error roughly 19$\times$ more heavily than squared velocity error, making this tradeoff a clear net gain under the task objective. 
SAC-Pen$_\mathrm{high}$ also achieves 100\% safety, but its much larger lateral and velocity errors show the conservatism of enforcing safety only through a large reward penalty. 

Table~\ref{tab: app_unicycle} also exposes the instability of methods whose safety is not enforced pointwise. 
SAC-Pen$_\mathrm{low}$ and SAC-Lagrangian have competitive average tracking on some episodes, but their worst-seed safety is $0\%$, with nonzero mean maximum and cumulative violations. 
CPO is particularly heavy-tailed, with large tracking variance and very large cumulative violations. 
CBF-RL and MPS are closer to the safe-RL goal, but neither is uniformly safe in the unicycle evaluation.
For CBF-RL, this is expected because the runtime filter is removed at deployment. 
For MPS, the rare violations appear to arise from chattering in the switching shield. Near the recoverability boundary, the controller can rapidly alternate between the policies, making the sampled-data rollout sensitive to integration, thresholding, and finite-precision errors.
Thus, the rare MPS violations in the unicycle task are consistent with a sampled-data artifact under the coarser $\dt=0.05$. In contrast, no MPS violations are observed in the quadrotor task, where the controller is evaluated at the finer period $\dt=0.02$.
The MPS rollout shown in Fig.~\ref{fig: unicycle_lateral_traces} illustrates this behavior: the shielded trajectory repeatedly rides the safety boundary and then undergoes abrupt intervention-induced switches.

\subsection{Quadrotor: Extended Analysis}\label{app: quadrotor_extended_analysis}

\paragraph{Phase I analysis.}
Since a dense grid evaluation over the full safe set is not possible for this 10-dimensional system, we evaluate Phase I on the task-relevant distribution introduced in App.~\ref{app: quadrotor_phase_I}. 
Over the design region $\Omega$, the learned safe-arrival policy increases the recoverability rate from $60.9\%$ for the analytic policy to $85.9\%$. 
The gains are largest where the analytic safe-arrival policy limits downstream tracking performance. 
On near-ceiling states, recoverability improves from $35.5\%$ to $69.3\%$; on bridge states below the unsafe loop apex, it improves from $47.3\%$ to $78.6\%$; on general trace states, it improves from $77.7\%$ to $96.1\%$; and on capture-shell states, it reaches $100\%$. 
Fig.~\ref{fig: app_quad_backup_full} visualizes this difference. 
The two rows show safe-arrival rollouts from task-relevant quadrotor states, with side and front views. 
The translucent red plane denotes the hard ceiling, and the gold segments indicate the portion after the trajectory enters the certified base set, where the base controller $\pi_\B$ takes over.
While the learned safe-arrival policy bends the trajectories away from the ceiling and into the base set, the analytic policy is less effective from aggressive near-ceiling and bridge states. 

\begin{figure}
    \centering
    \includegraphics[width=0.8\linewidth]{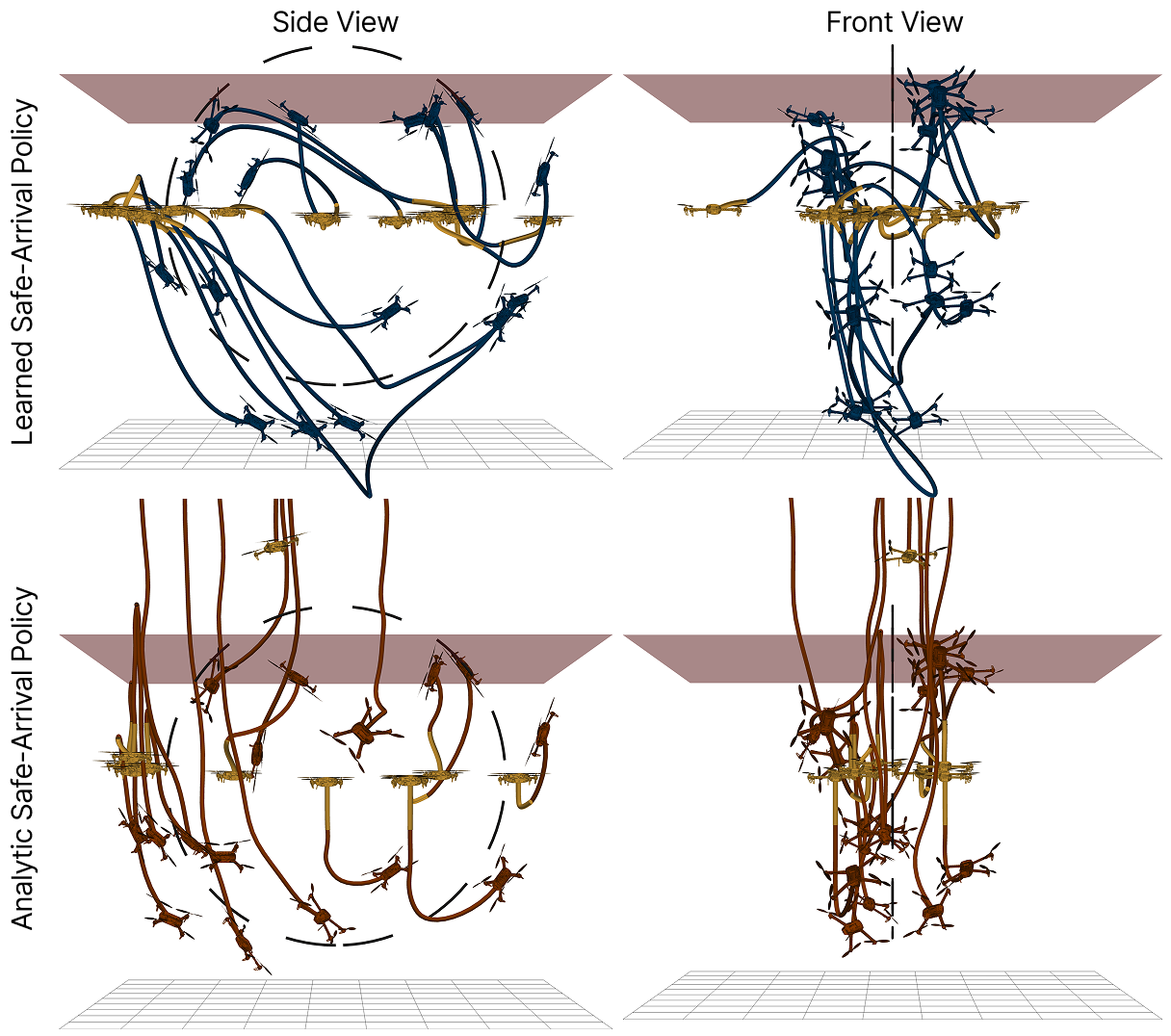}
    \caption{Safe-arrival policies in action: (top) \textbf{\textcolor{oursRA}{$\piarrL$ learned policy}} and (bottom) \textbf{\textcolor{oursABP}{$\piarr$ analytic policy}}. The \textbf{\textcolor{lqrYellow}{base set trajectory segments}} are controlled by the \textbf{\textcolor{lqrYellow}{base controller $\pi_\B$}}.}
    \label{fig: app_quad_backup_full}
\end{figure}

\paragraph{Phase II analysis.}
The Phase II results in Tables~\ref{tab: hero_quadrotor} and~\ref{tab: app_quadrotor} show that the larger learned backup-induced set directly improves the task performance. 
Among the methods with $100\%$ safety in every seed, \oursSAV{} has the best tracking performance: $0.63\pm0.13\,\mathrm{m}$ position RMSE, $0.97\pm0.24\,\mathrm{m/s}$ velocity RMSE, and $0.43\pm0.12\,\mathrm{rad}$ attitude RMSE. 
While \oursABP{} and MPS baselines achieve 100\% safety, they do so by inducing a smaller or less task-aligned admissible set, which forces the task policy to cut the loop and deviate from the reference. 

SAC-Pen$_\mathrm{low}$ tracks parts of the loop well but has $0\%$ worst-seed safety. 
SAC-Pen$_\mathrm{high}$ is almost safe, but still has a worst-seed safety rate of $99.8\%$ and is substantially more conservative than \oursSAV{}. 
The CMDP methods optimize safety only through expected costs and therefore do not enforce per-trajectory ceiling satisfaction; both CPO and SAC-Lagrangian have $0\%$ worst-seed safety. 
CBF-RL is much closer to safe behavior, but its worst-seed safety rate is $98.4\%$, and its tracking errors remain large. 
The decrease in performance is mainly because the quadrotor HOCBF constraint acts directly only on thrust, with no dependence on the body rate commands that are crucial for agile attitude recovery. 
MPS, by contrast, remains safe in this task, but is much more conservative due to switching into recovery/equilibrium behavior far away from the ceiling.
Fig.~\ref{fig:app_quadrotor_all_trajectories} illustrates these distinctions.

\begin{table}[htpb]
    \centering
    \caption{Quadrotor experiment results, across models trained with 10 different seeds, each evaluated for 1,000 episodes. The mean and standard deviations across all 10,000 episodes are shown. 
    ``Worst-seed Safe \%'' is the safety rate of the unsafest model.}
    \label{tab: app_quadrotor}
    \resizebox{1.0\linewidth}{!}{%
    \begin{tabular}{cl ccc ccc}
        \toprule
        & & \multicolumn{3}{c}{\textbf{Tracking Performance (RMSE)}} & \multicolumn{3}{c}{\textbf{Safety Performance}}\\
        \cmidrule(lr){3-5} \cmidrule(lr){6-8}
        & \textbf{Safe RL} & $\mathbf{p}$ (m) & $\mathbf{v}$ (m/s) & $\theta_\mathbf{q}$ (rad)& Worst-seed Safe \% & Mean Max. Viol. (m) & Mean Cumul. Viol. (m)  \\
        \midrule
        \multirow{2}{*}{\rotatebox[origin=c]{90}{\texttt{Pen}}}
        & SAC-Pen$_{\text{low}}$ & 1.04 $\pm$ 0.45 & 1.95 $\pm$ 0.65 & 1.17 $\pm$ 0.52 & 0.0\% & 0.15 $\pm$ 0.18 & 2.71 $\pm$ 3.68 \\
        & SAC-Pen$_{\text{high}}$ & 1.25 $\pm$ 0.33 & 2.24 $\pm$ 0.51 & 1.54 $\pm$ 0.39 & 99.8\% & 0.00 $\pm$ 1.2e-7 & 8.0e-7 $\pm$ 2.5e-6 \\
        \cmidrule(lr){2-8}
        \multirow{2}{*}{\rotatebox[origin=c]{90}{\texttt{Con}}}
        & SAC-Lag. & 2.14 $\pm$ 0.70 & 4.29 $\pm$ 0.50 & 2.01 $\pm$ 0.35 & 0.0\% & 0.28 $\pm$ 0.48 & 9.54 $\pm$ 17.98 \\
        & CPO & 10.28 $\pm$ 4.36 & 11.64 $\pm$ 3.35 & 2.19 $\pm$ 0.19 & 0.0\% & 0.42 $\pm$ 0.75 & 16.02 $\pm$ 30.67 \\
        \cmidrule(lr){2-8}
        \multirow{4}{*}{\rotatebox[origin=c]{90}{\texttt{PSRL}}}
        & CBF-RL & 2.36 $\pm$ 0.72 & 4.33 $\pm$ 0.38 & 2.05 $\pm$ 0.28 & 98.4\% & 8.7e-5 $\pm$ 2.6e-4 & 7.0e-4 $\pm$ 2.1e-3 \\
        & MPS & 2.00 $\pm$ 0.07 & 4.07 $\pm$ 0.25 & 1.41 $\pm$ 0.12 & 100\% & 0.00 $\pm$ 0.00 & 0.00 $\pm$ 0.00 \\
        & \textcolor{oursABP}{\oursABP} & 1.42 $\pm$ 0.15 & 1.76 $\pm$ 0.21 & 0.82 $\pm$ 0.23 & 100\% & 0.00 $\pm$ 0.00 & 0.00 $\pm$ 0.00 \\
        & \textbf{\textcolor{oursRA}{\oursSAV}} & 0.63 $\pm$ 0.13 & 0.97 $\pm$ 0.24 & 0.43 $\pm$ 0.12 & 100\% & 0.00 $\pm$ 0.00 & 0.00 $\pm$ 0.00 \\
        \bottomrule
    \end{tabular}
    }
\end{table}

\begin{figure}
    \centering
    \includegraphics[width=0.8\linewidth]{figures/quadrotor_all_trajectories.png}
    \caption{Quadrotor powerloop trajectories for all evaluated methods.
    Each method is shown with side and front views of a representative rollout.
    The dashed black line denotes the unsafe powerloop reference, and the semi-transparent red plane denotes the hard ceiling at $z_{\mathrm{ceil}}=3$m.
    Note that while we show the ground as a grid, there is not safety constraint for it.
    \textbf{\textcolor{oursRA}{\oursSAV}} tracks the aggressive reference most closely while remaining below the ceiling. \textbf{\textcolor{oursABP}{\oursABP}} and the other certified/shielded baselines are more conservative, while penalty- and CMDP-based baselines either deviate substantially from the maneuver or exhibit ceiling violations.}
    \label{fig:app_quadrotor_all_trajectories}
\end{figure}

\subsection{Ablation Studies: Role of the Control-Invariant Layer}\label{app: backup-layer-ablation}

We perform paired ablations on the quadrotor powerloop task to isolate the two roles of the control-invariant layer (CIL): providing safety guarantees at deployment and end-to-end constrained policy optimization during training. 
For this diagnostic study, we select the best-tracking \oursSAV{} checkpoint among the ten quadrotor seeds and evaluate three policies tied to the same training run.
\oursSAV{} is the full policy. 
\oursSAV{} w/o CIL uses the same trained policy network but disables the projection layer at deployment. 
Vanilla+CIL uses the unconstrained tracking policy that warm-started this \oursSAV{} run, but applies the same CIL only at evaluation. 
Thus, the first ablation removes the certified projection after training, while the second tests whether a post-hoc filter alone is sufficient without training through it. 
The results are presented in Table~\ref{tab: app_ablation_quadrotor} and the $p_z$-trajectories are plotted in Fig.~\ref{fig: app_albation_quadrotor}.

\begin{table}[htpb]
    \centering
    \caption{
    Paired ablation on the role of the control-invariant layer (CIL) in the quadrotor powerloop task. 
    All rows are evaluated for 1,000 episodes using components from the same selected \oursSAV{} run: 
    Vanilla+CIL uses the unconstrained tracking warm-start model with the CIL added only at evaluation, \oursSAV{} w/o CIL disables the CIL after \oursRA{} training, and \oursSAV{} is the full model.
    Whenever active, the CIL uses the same learned safe-arrival backup policy. All metrics are averaged over 1,000 episodes. 
    Violation metrics are ceiling violations in $p_z$.
    }
    \label{tab: app_ablation_quadrotor}
    \resizebox{1.0\linewidth}{!}{%
    \begin{tabular}{lccc ccc}
        \toprule
        & \multicolumn{3}{c}{\textbf{Tracking Performance}} & \multicolumn{3}{c}{\textbf{Safety Performance}}\\
        \cmidrule(lr){2-4} \cmidrule(lr){5-7}
        \textbf{Safe RL} & $\mathbf{p}$ (m) & $\mathbf{v}$ (m/s) & $\theta_\mathbf{q}$ (rad)& Total Safety \% & Mean Max. Viol. (m) & Mean Cumul. Viol. (m) \\
        \midrule
        Vanilla + CIL & 0.9283 & 1.1336 & 0.4314 & 100\% & 0.00 & 0.00 \\
        \oursSAV w/o CIL & 0.7395 & 3.5522 & 1.0292 & 0.0\% & 0.1975 & 2.6964 \\
        \cmidrule(lr){1-7}
        \textbf{\textcolor{oursRA}{\oursSAV}} & 0.5275 & 0.6267 & 0.2043 & 100\% & 0.00 & 0.00 \\
        \bottomrule
    \end{tabular}
    }
\end{table}

\paragraph{Deployment Ablation: removing the control-invariant layer.}
The safety guarantee of \oursRA{} is a guarantee on the composed projected policy $\calP_{\BL}(\pi_\phi)$, not on the raw nominal network $\pi_\phi$ alone. 
Table~\ref{tab: app_ablation_quadrotor} confirms that this distinction is essential in deployment. 
When the CIL is disabled after training, the resulting \oursSAV{} w/o CIL violates the ceiling in every evaluation episode, dropping from 100\% safety to 0.0\% safety. 
Figure~\ref{fig: app_albation_quadrotor} visualizes this failure mode. 
Note that the 0.0\% safety rate is based on the hard-constraint metric, where any ceiling crossing marks the episode unsafe. 
The unfiltered policy's failures are small altitude overshoots, with a mean peak violation of $0.1975\,\mathrm{m}$ above the $3\,\mathrm{m}$ ceiling. 
The larger cumulative violation reflects these small exceedances accumulated across violating timesteps, rather than large off-grid excursions. 
This shows that the policy network has not simply internalized the hard safety constraint, but has learned to optimize task performance through the projection layer. 
The CIL is therefore the safety-guaranteeing component that must remain active at deployment.

\begin{figure}
    \centering
    \includegraphics[width=0.9\linewidth]{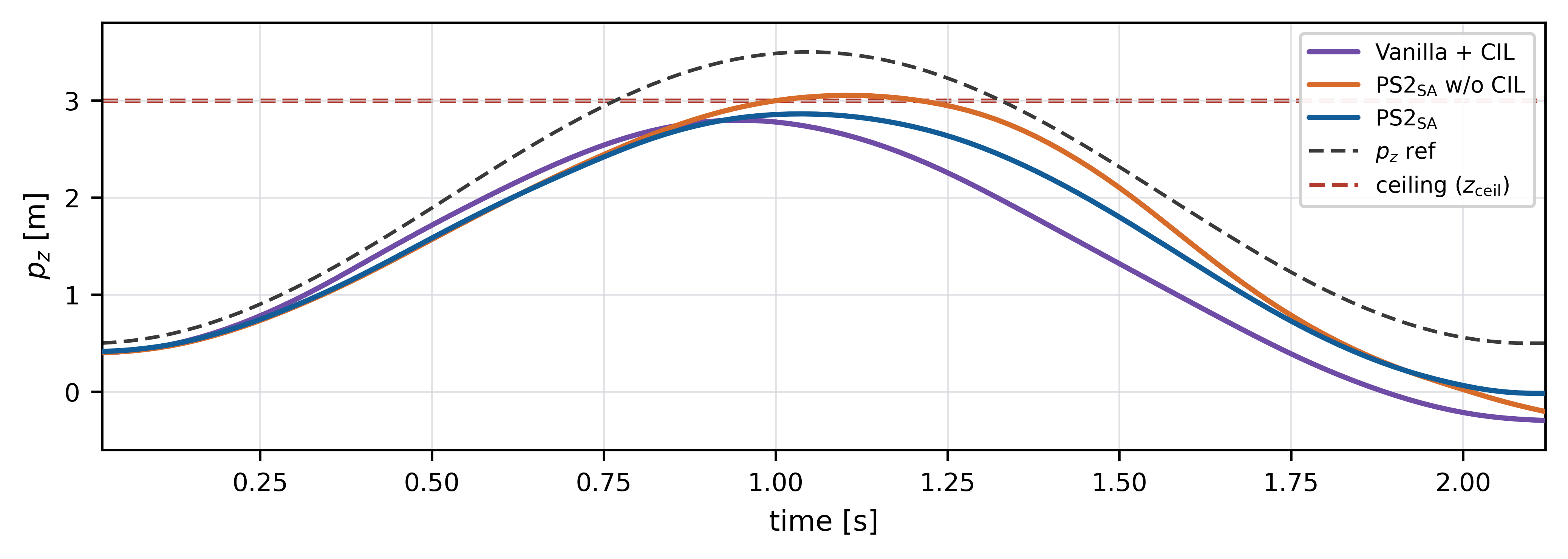}
    \caption{Altitude traces for the control-invariant layer ablation. 
    For each policy, we plot the best-tracking episode, measured by tracking error, among its 1,000 evaluation episodes. 
    Aggregated metrics over all episodes are reported in Table~\ref{tab: app_ablation_quadrotor}. 
    The dashed black curve is the unsafe powerloop reference and the dashed red line is the ceiling $z_{\mathrm{ceil}}=3\,\mathrm{m}$. 
    Policies evaluated with the control-invariant layer remain below the ceiling, while \oursSAV{} without the control-invariant layer exhibits a small overshoot near the loop apex.
    }
    \label{fig: app_albation_quadrotor}
\end{figure}

\paragraph{Training Ablation: post-hoc projection without CIL in training.}
The Vanilla+CIL row evaluates the opposite ablation: an unconstrained vanilla tracker is shielded by the CIL only at deployment. 
This policy is safe in all 1,000 episodes, as expected from the projection onto the BCBF-admissible set. 
However, its tracking performance is substantially worse than the full \oursSAV{} policy. 
Training end-to-end through the CIL improves the position, velocity, and attitude errors by approximately 43\%, 45\%, and 53\%, respectively, compared to the post-hoc shielding. 
Thus, while the CIL alone can enforce safety at evaluation time, exposing the RL policy to the same projection during training is what allows the policy to learn high-performance actions within the certified admissible set. 

Together, these ablations show that \oursRA{} is not merely a test-time filter nor a nominal policy that can be deployed without its filter: the control-invariant layer is required for the deployment-time guarantee, and differentiable training through the control-invariant layer is required to recover performance under that guarantee.
This behavior is consistent with observations in~\citep{hardnet}.

\subsection{Computation Details and Time}
\label{app: computation_time}

\paragraph{Computation resources.}
All experiments were run as single-GPU jobs on a node with one NVIDIA V100 GPU and an Intel Xeon Gold 6248 CPU allocation with 20 CPU cores per job. 
We used JAX with CUDA for batched environment rollouts, policy updates, backup-flow integration, and control-invariant layer evaluation. 
No multi-GPU training was used. 
The only computation outside this JAX pipeline was the CBF-RL baseline for the unicycle, where MATLAB was used to synthesize a polynomial CBF via sum-of-squares programming with YALMIP and MOSEK~\citep{yalmip,mosek}.

\paragraph{Timing protocol.}
We report two types of timing. 
First, Phase I training time is the wall-clock time required to train the \sapolicy{} before freezing the composed backup policy $\pi_b^\star$. 
This cost is paid only for \oursSAV{} and only once per \sapolicy{} design. 
\oursABP{} uses an analytic backup policy and therefore has no Phase I learning cost. 
Second, Phase II training time is the wall-clock time of the \ours{} policy training job with the control-invariant layer. 
The Phase II timing includes the repeated construction of the BCBF-admissible set, the control-invariant layer QP solves, and policy/critic updates. 
Inference time is measured as the per-control-step state-to-action latency of the deployed \ours\ policy after JAX compilation and warm-up. 
It includes backup rollout evaluation, BCBF constraint assembly, and the QP solve, but excludes environment stepping and logging. 
All Phase II training and inference statistics in Table~\ref{tab: app_comp_time} are averaged over the same 10 random seeds used in the main experiments.

\paragraph{Phase I safe-arrival training cost.}
For the final learned backup policies used in \oursSAV{}, Phase I \sapolicy training required 842.24 seconds for the unicycle task and 2408.89 seconds for the quadrotor task. 

\paragraph{Phase II training and deployment cost.}
Table~\ref{tab: app_comp_time} reports the computational cost of the two \oursRA{} variants. 
The Phase II training times are on the order of 4.6 hours for the unicycle task and 13--14 hours for the quadrotor task. 
The increase for the quadrotor is expected because the system has a higher-dimensional state, a four-dimensional control input, and a longer backup rollout, which increases the cost of backup flow and sensitivity integration. 
Nevertheless, the online control-invariant layer remains small in decision dimension.
Thus, the deployment-time cost is dominated by evaluating the backup rollout and solving a small QP, rather than by explicit invariant-set computation. 
The measured per-step inference latency is well below the environment control period in both tasks. 
For \oursSAV{}, the unicycle latency is $0.35$ ms, compared with a $50$ ms control period, and the quadrotor latency is $0.80$ ms, compared with a $20$ ms control period. 

\begin{table}[h!]
    \centering
    \caption{Computation time for \oursRA{}, averaged across 10 Phase II random seeds. 
    ``Training time'' denotes the wall-clock time for Phase II task-RL training with the control-invariant layer. 
    ``Inference time'' denotes the per-control-step latency of the deployed \oursRA{} policy after compilation.
    }
    \label{tab: app_comp_time}
    \resizebox{1.0\linewidth}{!}{%
    \begin{tabular}{lcc cc}
        \toprule
        & \multicolumn{2}{c}{\textbf{Unicycle: Lane-Keeping}} 
        & \multicolumn{2}{c}{\textbf{Quadrotor: Powerloop}}\\
        \cmidrule(lr){2-3} \cmidrule(lr){4-5}
        \textbf{Safe RL} 
        & Training Time (s) 
        & Inference Time ($\mu$s) 
        & Training Time (s) 
        & Inference Time ($\mu$s)\\
        \midrule
        \textcolor{oursABP}{\oursABP} 
        & 16731.87 $\pm$ 1558.27 
        & 360.48 $\pm$ 24.87
        & 46859.70 $\pm$ 1719.70 
        & 758.82 $\pm$ 85.89 \\
        \textbf{\textcolor{oursRA}{\oursSAV}} 
        & 16482.99 $\pm$ 1068.22 
        & 347.74 $\pm$ 12.62 
        & 48938.12 $\pm$ 2932.48 
        & 804.35 $\pm$ 170.91 \\
        \bottomrule
    \end{tabular}
    }
\end{table}

\end{document}